\pgfplotsset{width=2.8cm,height=2.84cm,compat=newest}
\newcommand{\COLS}[1]{\multicolumn{3}{c|@{\hspace{2pt}}}{\rule[0.065cm]{0.6cm}{0.01mm} #1 \rule[0.065cm]{0.6cm}{0.01mm}}}
\newcommand{\CLASS}{\multicolumn{2}{c}{Classifier}}
\newcommand{\PHI}{\multicolumn{2}{c}{\rule[0.065cm]{0.6cm}{0.01mm} $\phi$ \rule[0.065cm]{0.6cm}{0.01mm}}}
\newcommand{\RMD}{\multirow{11}{*}{\raisebox{-0.65mm}{\rotatebox{90}{\rule[-0.25cm]{2.35cm}{0.01mm} \rotatebox{-90}{\texttt{RMD}} \rule[-0.25cm]{2.35cm}{0.01mm}}}}}
\newcommand{\SVM}{\multirow{12}{*}{\raisebox{-0.65mm}{\rotatebox{90}{\rule[-0.25cm]{2.6cm}{0.01mm} \rotatebox{-90}{\texttt{SVM}} \rule[-0.25cm]{2.6cm}{0.01mm}}}}}
\newcommand{\N}{\mathbb{N}}
\newcommand{\R}{\mathbb{R}}
\newcommand{\rmd}{\texttt{RMD}}
\newcommand{\oemd}{\texttt{EMD OLD}}
\newcommand{\nemd}{\texttt{EMD NEW}}
\newcommand{\E}{\mathcal{E}}
\newcommand{\C}{\mathcal{C}}
\newcommand{\D}{\mathcal{D}}
\newcommand{\X}{\mathcal{X}}
\renewcommand{\L}{\mathcal{L}}
\newcommand{\p}{\mathbf{p}}
\newcommand{\q}{\mathbf{q}}
\renewcommand{\u}{\mathbf{u}}
\renewcommand{\v}{\mathbf{v}}
\newcommand{\x}{\mathbf{x}}
\newcommand{\y}{\mathbf{y}}
\renewcommand{\P}{\mathbf{P}}
\newcommand{\ones}{\mathbf{1}}
\newcommand{\zeros}{\mathbf{0}}
\DeclareMathOperator{\dom}{dom}
\DeclareMathOperator{\interior}{int}
\DeclareMathOperator{\ri}{ri}
\DeclareMathOperator{\bd}{bd}
\DeclareMathOperator{\diag}{diag}
\DeclareMathOperator{\vect}{vec}
\DeclareMathOperator{\sgn}{sgn}
\DeclareMathOperator*{\argmin}{argmin}
\theoremstyle{plain}
\newtheorem{theorem}{Theorem}
\newtheorem{corollary}[theorem]{Corollary}
\newtheorem{proposition}[theorem]{Proposition}
\newtheorem{lemma}[theorem]{Lemma}
\theoremstyle{plain}
\newtheorem{property}{Property}
\theoremstyle{plain}
\newtheorem{definition}{Definition}
\theoremstyle{remark}
 \newtheorem{myremark}{Remark}
\title{Regularized Optimal Transport and the ROT Mover's Distance}
\date{\today}
\author{Arnaud Dessein\\Qucit, B{\`{e}}gles, France\\\href{mailto:arnaud.dessein@qucit.com}{arnaud.dessein@qucit.com} \and Nicolas Papadakis\\IMB, CNRS, Bordeaux, France\\\hspace{2cm}\href{mailto:nicolas.papadakis@math.u-bordeaux.fr}{nicolas.papadakis@math.u-bordeaux.fr}\hspace{2cm} \and Jean-Luc Rouas\\LaBRI, CNRS, Bordeaux, France\\\href{mailto:jean-luc.rouas@labri.fr}{jean-luc.rouas@labri.fr}}
\begin{document}

\maketitle

\maketitle

\begin{abstract}%
This paper presents a unified framework for smooth convex regularization of discrete optimal transport problems. In this context, the regularized optimal transport turns out to be equivalent to a matrix nearness problem with respect to Bregman divergences. Our framework thus naturally generalizes a previously proposed regularization based on the Boltzmann-Shannon entropy related to the Kullback-Leibler divergence, and solved with the Sinkhorn-Knopp algorithm. We call the regularized optimal transport distance the rot mover's distance in reference to the classical earth mover's distance. By exploiting alternate Bregman projections, we develop the alternate scaling algorithm and non-negative alternate scaling algorithm, to compute efficiently the regularized optimal plans depending on whether the domain of the regularizer lies within the non-negative orthant or not. We further enhance the separable case with a sparse extension to deal with high data dimensions. We also instantiate our framework and discuss the inherent specificities for well-known regularizers and statistical divergences in the machine learning and information geometry communities. Finally, we demonstrate the merits of our methods with experiments using synthetic data to illustrate the effect of different regularizers, penalties and dimensions, as well as real-world data for a pattern recognition application to audio scene classification.
\end{abstract}

\section{Introduction}
\label{sec:intro}

A recurrent problem in statistical machine learning is the choice of a relevant distance measure to compare probability distributions. Various information divergences are famous, among which Euclidean, Mahalanobis, Kullback-Leibler, Itakura-Saito, Hellinger, $\chi^2$, $\ell_p$ (quasi-)norm, total variation, logistic loss function, or more general Csisz{\'{a}}r and Bregman divergences and parametric families of such divergences such as $\alpha$\nobreakdash- and $\beta$-divergences.

An alternative family of distances between probability distributions can be introduced in the framework of optimal transport (OT). Rather than performing a pointwise comparison of the distributions, the idea is to quantify the minimal effort for moving the probability mass of one distribution to the other, where the transport plan to move the mass is optimized according to a given ground cost. This makes OT distances suitable and robust in certain applications, notably in the field of computer vision where the discrete OT distance, also known as earth mover's distance (EMD), has been popularized to compare histograms of features for pattern recognition tasks~\cite{Rubner2000}.

Despite its appealing theoretical properties, intuitive formulation, and excellent performance in various problems of information retrieval, the computation of the EMD involves solving a linear program whose cost quickly becomes prohibitive with the data dimension. In practice, the best algorithms currently proposed, such as the network simplex~\cite{Ahuja1993}, scale at least with a super-cubic complexity. Embeddings of the distributions can be used to approximate the EMD with linear complexity~\cite{Indyk2003,Grauman2004,Shirdhonkar2008}, and the network simplex can be modified to run in quadratic time~\cite{Gudmundsson2007,Ling2007,Pele2009}. Nevertheless, the distortions inherent to such embeddings~\cite{Naor2007}, and the exponential increase of costs incurred by such modifications, make these approaches inapplicable for dimensions higher than four. Instead, multi-scale strategies~\cite{Obermann2015} and shortcut paths~\cite{Schmitzer2016a} can speed up the estimation of the exact optimal plan. These approaches are yet limited to particular convex costs such as $\ell_2$, while other costs such as $\ell_1$ and truncated or compressed versions are often preferred in practice for an increased robustness to data outliers~\cite{Pele2008,Pele2009,Rabin2009}. For general applications, a gain in performance can also be obtained with a cost directly learned from labeled data~\cite{Cuturi2014}. The aforementioned accelerated methods that are dedicated to $\ell_2$ or convex costs are thus not adapted in this context.

On another line of research, the regularization of the transport plan, for example via graph modeling~\cite{Ferradans2014}, has been considered to deal with noisy data, though this latter approach does not address the computational issue of efficiency for high dimensions. In this continuity, an entropic regularization was shown to admit an efficient algorithm with quadratic complexity that speeds up the computation of solutions by several orders of magnitude, and to improve performance on applications such as handwritten digit recognition~\cite{Cuturi2013}. In addition, a tailored computation can be obtained via convolution for specific ground costs~\cite{Solomon2015}. Since the introduction of the entropic regularization, OT has benefited from extensive developments in the machine learning community, with applications such as label propagation~\cite{Solomon2014}, domain adaptation~\cite{Courty2015}, matrix factorization~\cite{Zen2014}, dictionary learning~\cite{Rolet2016,Schmitz2018}, barycenter computation~\cite{Cuturi2016}, geodesic principal component analysis~\cite{Bigot2013,Seguy2015,Cazelles2017}, data fitting~\cite{Frogner2015}, statistical inference~\cite{Bernton2017}, training of Boltzmann machines~\cite{Montavon2016} and generative adversarial networks~\cite{Arjovsky2017,Bousquet2017,Genevay2017}.

With the entropic regularization, the gain in computational time is only important for high dimensions or large levels of regularization. For low regularization, advanced optimization strategies can still be used to obtain a significant speed-up~\cite{Thibault2017,Schmitz2018}. It is also a well-known effect that the entropic regularization overspreads the transported mass, which may be undesirable for certain applications as in the case of interpolation purposes. An interesting perspective of these works, however, is that many more regularizers are worth investigating to solve OT problems both efficiently and robustly~\cite{Galichon2015,Muzellec2016,Blondel2017}. This is the idea we address in the present work, focusing on smooth convex regularization.

\subsection{Notations}

For the sake of simplicity, we consider distributions with same dimension $d$, and thus work with the Euclidean space $\R^{d \times d}$ of square matrices. It is straightforward, however, to extend all results for a different number of bins $m, n$ by using rectangular matrices in $\R^{m \times n}$ instead. We denote the null matrix of $\R^{d \times d}$ by $\zeros$, and the matrix full of ones by $\ones$. The Frobenius inner product between two matrices $\boldsymbol\pi, \boldsymbol\xi \in \R^{d \times d}$ is defined by:
\begin{equation}
\langle \boldsymbol\pi, \boldsymbol\xi \rangle = \sum_{i = 1}^d \sum_{j = 1}^d \pi_{ij} \xi_{ij} \enspace.
\end{equation}
When the intended meaning is clear from the context, we also write $\zeros$ for the null vector of $\R^d$, and $\ones$ for the vector full of ones. The notation $\cdot^\top$ represents the transposition operator for matrices or vectors. The probability simplex of $\R^d$ is defined as follows:
\begin{equation}
\Sigma_d = \{\p \in \R_+^d \colon \p^\top \ones = 1\} \enspace.
\end{equation}

The operator $\diag(\v)$ transforms a vector $\v \in \R^d$ into a diagonal matrix $\boldsymbol\pi \in \R^{d \times d}$ such that $\pi_{ii} = v_i$, for all $1 \leq i \leq d$. The operator $\vect(\boldsymbol\pi)$ transforms a matrix $\boldsymbol\pi \in \R^{d \times d}$ into a vector $\x \in \R^{d^2}$ such that $x_{i + (j - 1) d} = \pi_{ij}$, for all $1 \leq i, j \leq d$. The operator $\sgn(x)$ for $x \in \R$ returns $-1, 0, +1$, if $x$ is negative, null, positive, respectively. Functions of a real variable, such as the absolute value, sign, exponential or power functions, are considered element-wise when applied to matrices. The max operator and inequalities between matrices should also be interpreted element-wise. Matrix divisions are similarly considered element-wise, whereas element-wise matrix multiplications, also known as Hadamard or Schur products, are denoted by $\odot$ to remove any ambiguity with standard matrix multiplications. Lastly, addition or subtraction of a scalar and a matrix should be understood element-wise by replicating the scalar.

\subsection{Background and Related Work}
\label{subsec:background}

Given two probability vectors $\p, \q \in \Sigma_d$, and a cost matrix $\boldsymbol\gamma \in \R_+^{d \times d}$ whose coefficients $\gamma_{ij}$ represent the cost of moving the mass from bin $p_i$ to $q_j$, the total cost of a given transport plan, or coupling, $\boldsymbol\pi \in \Pi(\p, \q)$ can be quantified as $\langle \boldsymbol\pi, \boldsymbol\gamma \rangle$. An optimal cost is then obtained by solving a linear program:
\begin{equation}
d_{\boldsymbol\gamma}(\p, \q) = \min_{\boldsymbol\pi \in \Pi(\p, \q)} \langle \boldsymbol\pi, \boldsymbol\gamma \rangle \enspace,
\end{equation}
with the transport polytope of $\p$ and $\q$, also known as the polytope of couplings between $\p$ and $\q$, defined as the following polyhedron:
\begin{equation}
\Pi(\p, \q) = \{\boldsymbol\pi \in \R_+^{d \times d} \colon \boldsymbol\pi \ones = \p, \boldsymbol\pi^\top \ones = \q\} \enspace.
\end{equation}
The EMD associated to the cost matrix $\boldsymbol\gamma$ is given by $d_{\boldsymbol\gamma}$ and is a true distance metric on the probability simplex $\Sigma_d$ whenever $\boldsymbol\gamma$ is itself a distance matrix. In general, the optimal plans, or earth mover's plans, have at most $2d - 1$ nonzero entries, and consist either of a single vertex or of a whole facet of the transport polytope. One of the earth mover's plans can be obtained with the network simplex~\cite{Ahuja1993} among other approaches. For a general cost matrix $\boldsymbol\gamma$, the complexity of solving an OT problem scales at least in $O(d^3 \log d)$ for the best algorithms currently proposed, including the network simplex, and turns out to be super-cubic in practice as well.

\cite{Cuturi2013} proposed a new family of OT distances, called Sinkhorn distances, from the perspective of maximum entropy. The idea is to smooth the original problem with a strictly convex regularization via the Boltzmann-Shannon entropy. The primal problem involves the entropic regularization as an additional constraint:
\begin{equation}
d_{\boldsymbol\gamma, \alpha}'(\p, \q) = \min_{\boldsymbol\pi \in \Pi_\alpha(\p, \q)} \langle \boldsymbol\pi, \boldsymbol\gamma \rangle \enspace,
\end{equation}
with the regularized transport polytope defined as follows:
\begin{equation}
\Pi_\alpha(\p, \q) = \{\boldsymbol\pi \in \Pi(\p, \q) \colon E(\boldsymbol\pi) \leq E(\p\q^\top) + \alpha\} \enspace,
\end{equation}
where $\alpha \geq 0$ is a regularization term and $E$ is minus the Boltzmann-Shannon entropy as defined in~\eqref{eq:mbse}. It is also straightforward to prove that we have:
\begin{align}
\Pi_\alpha(\p, \q) & = \{\boldsymbol\pi \in \Pi(\p, \q) \colon K(\boldsymbol\pi \Vert \ones) \leq K(\p\q^\top \Vert \ones) + \alpha\} \enspace,\\
\Pi_\alpha(\p, \q) & = \{\boldsymbol\pi \in \Pi(\p, \q) \colon K(\boldsymbol\pi \Vert \p\q^\top) \leq \alpha\} \enspace.
\end{align}
where $K$ is the Kullback-Leibler divergence as defined in~\eqref{eq:kl}. This enforces the solution to have sufficient entropy, or equivalently small enough mutual information, by constraining it to the Kullback-Leibler ball of radius $K(\p\q^\top \Vert \ones) + \alpha$, respectively $\alpha$, and center the matrix $\ones \in \R_{++}^{d \times d}$, respectively the transport plan $\p\q^\top \in \R_{++}^{d \times d}$, which have maximum entropy. The dual problem exploits a Lagrange multiplier to relax the entropic regularization as a penalty:
\begin{equation}
d_{\boldsymbol\gamma, \lambda}(\p, \q) = \langle \boldsymbol\pi_\lambda^\star, \boldsymbol\gamma \rangle \enspace,
\end{equation}
with the regularized optimal plan $\boldsymbol\pi_\lambda^\star$ defined as follows:
\begin{equation}
\boldsymbol\pi_\lambda^\star = \argmin_{\boldsymbol\pi \in \Pi(\p, \q)} \, \langle \boldsymbol\pi, \boldsymbol\gamma \rangle + \lambda E(\boldsymbol\pi) \enspace,
\end{equation}
where $\lambda > 0$ is a regularization term. The problem can then be solved empirically in quadratic complexity with linear convergence using the Sinkhorn-Knopp algorithm~\cite{Sinkhorn1967} based on iterative matrix scaling, where rows and columns are rescaled in turn so that they respectively sum up to $\p$ and $\q$ until convergence. Finally, it is easy to prove that we have:
\begin{align}
\boldsymbol\pi_\lambda^\star = & \argmin_{\boldsymbol\pi \in \Pi(\p, \q)} \, \langle \boldsymbol\pi, \boldsymbol\gamma \rangle + \lambda K(\boldsymbol\pi \Vert \ones) \enspace,\\
\boldsymbol\pi_\lambda^\star = & \argmin_{\boldsymbol\pi \in \Pi(\p, \q)} \, \langle \boldsymbol\pi, \boldsymbol\gamma \rangle + \lambda K(\boldsymbol\pi \Vert \p\q^\top) \enspace.
\end{align}
This again shows that the regularization enforces the solution to have sufficient entropy, or equivalently small enough mutual information, by shrinking it toward the matrix $\ones$ and the joint distribution $\p\q^\top$ which have maximum entropy.

\cite{Benamou2015} revisited the entropic regularization in a geometrical framework with iterative information projections. They showed that computing a Sinkhorn distance in dual form actually amounts to the minimization of a Kullback-Leibler divergence:
\begin{equation}
\boldsymbol\pi_\lambda^\star = \argmin_{\boldsymbol\pi \in \Pi(\p, \q)} K(\boldsymbol\pi \Vert \exp(-\boldsymbol\gamma / \lambda)) \enspace.
\end{equation}
Precisely, this amounts to computing the Kullback-Leibler projection of $\exp(-\boldsymbol\gamma / \lambda) \in \R_{++}^{d \times d}$ onto the transport polytope $\Pi(\p, \q)$. In this context, the Sinkhorn-Knopp algorithm turns out to be a special instance of Bregman projection onto the intersection of convex sets via alternate projections. Specifically, we see $\Pi(\p, \q)$ as the intersection of the non-negative orthant with two affine subspaces containing all matrices with rows and columns summing to $\p$ and $\q$ respectively, and we alternate projection on these two subspaces according to the Kullback-Leibler divergence until convergence.

\cite{Kurras2015} further studied this equivalence in the wider context of iterative proportional fitting. He notably showed that the Sinkhorn-Knopp and alternate Bregman projections can be extended to account for infinite entries in the cost matrix $\boldsymbol\gamma$, and thus null entries in the regularized optimal plan. Hence, it is possible to develop a sparse version of the entropic regularization to OT problems. This becomes interesting to store the $d \times d$ matrix variables and perform the required computations when the data dimension gets large.

\cite{Dhillon2007} had already enlightened such an equivalence in the field of matrix analysis. They actually considered the estimation of contingency tables with fixed marginals as a matrix nearness problem based on the Kullback-Leibler divergence. In more detail, they use a rough estimate $\boldsymbol\xi \in \R_{++}^{d \times d}$ to produce a contingency table $\boldsymbol\pi^\star$ that has fixed marginals $\p, \q$ by Kullback-Leibler projection of $ \boldsymbol\xi$ onto $\Pi(\p, \q)$:
\begin{equation}
\boldsymbol\pi^\star = \argmin_{\boldsymbol\pi \in \Pi(\p, \q)} K(\boldsymbol\pi \Vert \boldsymbol\xi) \enspace.
\end{equation}
They showed that alternate Bregman projections specialize to the Sinkhorn-Knopp algorithm in this context. However, no relationship to OT problems was highlighted.

\subsection{Contributions and Organization}

Our main contribution is to formulate a unified framework for discrete regularized optimal transport (ROT) by considering a large class of smooth convex regularizers. We call the underlying distance the rot mover's distance (RMD) and show that a given ROT problem actually amounts to the minimization of an associated Bregman divergence. This allows the derivation of two schemes that we call the alternate scaling algorithm (ASA) and the non-negative alternate scaling algorithm (NASA), to compute efficiently the regularized optimal plans depending on whether the domain of the regularizer lies within the non-negative orthant or not. These schemes are based on the general form of alternate projections for Bregman divergences. They also exploit the Newton-Raphson method to approximate the projections for separable divergences. The separable case is further enhanced with a sparse extension to deal with high data dimensions. We also instantiate our two generic schemes with widely-used regularizers and statistical divergences.

The proposed framework naturally extends the Sinkhorn-Knopp algorithm for the regularization based on the Boltzmann-Shannon entropy~\cite{Cuturi2013}, or equivalently the minimization of a Kullback-Leibler divergence~\cite{Benamou2015}, and their sparse version~\cite{Kurras2015}, which turn out to be special instances of ROT problems. It also relates to matrix nearness problems via minimization of Bregman divergences, and it is straightforward to construct more general estimators for contingency tables with fixed marginals than the classical estimator based on the Kullback-Leibler divergence~\cite{Dhillon2007}. Lastly, it brings some new insights between transportation theory~\cite{Villani2009} and information geometry~\cite{Amari2000}, where Bregman divergences are known to possess a dually flat structure with a generalized Pythagorean theorem in relation to information projections.

The remainder of this paper is organized as follows. In Section~\ref{sec:preliminaries}, we introduce some necessary preliminaries. In Section~\ref{sec:math}, we present our theoretical results for a unified framework of ROT problems. We then derive the algorithmic methods for solving ROT problems in Section~\ref{sec:alg}. We also discuss the inherent specificities of ROT problems for classical regularizers and associated divergences in Section~\ref{sec:examples}. In Section~\ref{sec:experiments}, we provide experiments to illustrate our methods on synthetic data and real-world audio data in a classification problem. Finally, in Section~\ref{sec:conclusion}, we draw some conclusions and perspectives for future work.

\section{Theoretical Preliminaries}
\label{sec:preliminaries}

In this section, we introduce the required preliminaries to our framework. We begin with elements of convex analysis (Section~\ref{subsec:convex}) and of Bregman geometry (Section~\ref{subsec:bregman}). We proceed with theoretical results for convergence of alternate Bregman projections (Section~\ref{subsec:dykstra}) and of the Newton-Raphson method (Section~\ref{subsec:newton}).

\subsection{Convex Analysis}
\label{subsec:convex}

Let $\E$ be a Euclidean space with inner product $\langle \cdot, \cdot \rangle$ and induced norm $\Vert\cdot\Vert$. The boundary, interior and relative interior of a subset $\X \subseteq \E$ are respectively denoted by $\bd(\X)$, $\interior(\X)$, and $\ri(\X)$, where we recall that for a convex set $\C$, we have:
\begin{equation}
\label{eq:ri}
\ri(\C) = \{\x \in \C \colon \forall \y \in \C, \exists \lambda > 1, \lambda \x + (1 - \lambda) \y \in \C\} \enspace.
\end{equation}

In convex analysis, scalar functions are defined over the whole space $\E$ and take values in the extended real number line $\R \cup \{-\infty, +\infty\}$. The effective domain, or simply domain, of a function $f$ is then defined as the set:
\begin{equation}
\dom f = \{\x \in \E \colon f(\x) < +\infty\} \enspace.
\end{equation}

A convex function $f$ is proper if $f(\x) < +\infty$ for at least one $\x \in \E$ and $f(\x) > -\infty$ for all $\x \in \E$, and it is closed if its lower level sets $\{\x \in \E \colon f(\x) \leq \alpha\}$ are closed for all $\alpha \in \R$. If $\dom f$ is closed, then $f$ is closed, and a proper convex function is closed if and only if it is lower semi-continuous. Moreover, a closed function $f$ is continuous relative to any simplex, polytope or polyhedral subset in $\dom f$. It is also well-known that a convex function $f$ is always continuous in the relative interior $\ri(\dom f)$ of its domain.

A function $f$ is essentially smooth if it is differentiable on $\interior(\dom f) \neq \emptyset$ and verifies $\lim_{k \to +\infty} \Vert \nabla f(\x_k) \Vert = +\infty \enspace$ for any sequence ${(\x_k)}_{k \in \N}$ from $\interior(\dom f)$ that converges to a point $\x \in \bd(\dom f)$. A function $f$ is of Legendre type if it is a closed proper convex function that is also essentially smooth and strictly convex on $\interior(\dom f)$.

The Fenchel conjugate $f^\star$ of a function $f$ is defined for all $\y \in \E$ as follows:
\begin{equation}\label{Fenchel}
f^\star(\y) = \sup_{\x \in \interior(\dom f)} \, \langle \x, \y \rangle - f(\x) \enspace.
\end{equation}
The Fenchel conjugate $f^\star$ is always a closed convex function. Moreover, if $f$ is a closed convex function, then $(f^\star)^\star = f$, and $f$ is of Legendre type if and only if $f^\star$ is of Legendre type. In this latter case, the gradient mapping $\nabla f$ is a homeomorphism between $\interior(\dom f)$ and $\interior(\dom f^\star)$, with inverse mapping ${(\nabla f)}^{-1} = \nabla f^\star$, which guarantees the existence of dual coordinate systems $\x(\y) = \nabla f^\star(\y)$ and $\y(\x) = \nabla f(\x)$ on $\interior(\dom f)$ and $\interior(\dom f^\star)$.

Finally, we say that a function $f$ is cofinite if it verifies:
\begin{equation}\label{cofinite}
\lim_{\lambda \to +\infty} f(\lambda \x) / \lambda = +\infty \enspace,
\end{equation}
for all nonzero $\x \in \E$. Intuitively, it means that $f$ grows super-linearly in every direction. In particular, a closed proper convex function is cofinite if and only if $\dom f^\star = \E$.

\subsection{Bregman Geometry}
\label{subsec:bregman}

Let $\phi$ be a convex function on $\E$ that is differentiable on $\interior(\dom \phi) \neq \emptyset$. The Bregman divergence generated by $\phi$ is defined as follows:
\begin{equation}
\label{eq:bdiv}
B_\phi(\x \Vert \y) = \phi(\x) - \phi(\y) - \langle \x - \y, \nabla\phi(\y) \rangle \enspace,
\end{equation}
for all $\x \in \dom \phi$ and $\y \in \interior(\dom \phi)$. We have $B_\phi(\x \Vert \y) \geq 0$ for any $\x \in \dom \phi$ and $\y \in \interior(\dom \phi)$. If in addition $\phi$ is strictly convex on $\interior(\dom \phi)$, then $B_\phi(\x \Vert \y) = 0$ if and only if $\x = \y$. Bregman divergences are also always convex in the first argument, and are invariant under adding an arbitrary affine term to their generator.

Bregman divergences are not symmetric and do not verify the triangle inequality in general, and thus are not necessarily distances in the strict sense. However, they still enjoy some nice geometrical properties that somehow generalize the Euclidean geometry. In particular, they verify a four-point identity similar to a parallelogram law:
\begin{equation}
B_\phi(\x \Vert \y) + B_\phi(\x' \Vert \y') = B_\phi(\x' \Vert \y) + B_\phi(\x \Vert \y') - \langle \x - \x', \nabla\phi(\y) - \nabla\phi(\y') \rangle \enspace,
\end{equation}
for all $\x, \x' \in \dom \phi$ and $\y, \y' \in \interior(\dom \phi)$. A special instance of this relation gives rise to a three-point property similar to a triangle law of cosines:
\begin{equation}
B_\phi(\x \Vert \y) = B_\phi(\x \Vert \y') + B_\phi(\y' \Vert \y) - \langle \x - \y', \nabla\phi(\y) - \nabla\phi(\y') \rangle \enspace,
\end{equation}
for all $\x \in \dom \phi$ and $\y, \y' \in \interior(\dom \phi)$.

Suppose now that $\phi$ is of Legendre type, and let $\C \subseteq \E$ be a closed convex set such that $\C \cap \interior(\dom \phi) \neq \emptyset$. Then, for any point $\y \in \interior(\dom \phi)$, the following problem:
\begin{equation}
P_\C(\y) = \argmin_{\x \in \C} B_\phi(\x \Vert \y) \enspace,
\end{equation}
has a unique solution, then called the Bregman projection of $\y$ onto $\C$. This solution actually belongs to $\C \cap \interior(\dom \phi)$, and is also characterized as the unique point $\y' \in \C \cap \interior(\dom \phi)$ that verifies the variational relation:
\begin{equation}
\label{eq:var}
\langle \x - \y', \nabla\phi(\y) - \nabla\phi(\y') \rangle \leq 0 \enspace,
\end{equation}
for all $\x \in \C \cap \dom \phi$. This characterization is equivalent to a well-known generalized Pythagorean theorem for Bregman divergences, which states that the Bregman projection of $\y$ onto $\C$ is the unique point $\y' \in \C \cap \interior(\dom \phi)$ that verifies the following inequality:
\begin{equation}
B_\phi(\x \Vert \y) \geq B_\phi(\x \Vert \y') + B_\phi(\y' \Vert \y) \enspace,
\end{equation}
for all $\x \in \C \cap \dom \phi$. When $\C$ is further an affine subspace, or more generally when the Bregman projection further belongs to $\ri(\C)$, the scalar product actually vanishes:
\begin{equation}
\label{eq:orth}
\langle \x - \y', \nabla\phi(\y) - \nabla\phi(\y') \rangle = 0 \enspace,
\end{equation}
leading to an equality in the generalized Pythagorean theorem:
\begin{equation}
\label{eq:pyth}
B_\phi(\x \Vert \y) = B_\phi(\x \Vert \y') + B_\phi(\y' \Vert \y) \enspace.
\end{equation}

A famous example of Bregman divergence is the Kullback-Leibler divergence, defined for matrices $\boldsymbol\pi \in \R_+^{d \times d}$ and $\boldsymbol\xi \in \R_{++}^{d \times d}$ as follows:
\begin{equation}
\label{eq:kl}
K(\boldsymbol\pi \Vert \boldsymbol\xi) = \sum_{i = 1}^d \sum_{j = 1}^d \left(\pi_{ij} \log\left(\frac{\pi_{ij}}{\xi_{ij}}\right) - \pi_{ij} + \xi_{ij}\right) \enspace.
\end{equation}
This divergence is generated by a function of Legendre type for $\boldsymbol\pi \in \R_+^{d \times d}$ given by minus the Boltzmann-Shannon entropy:
\begin{equation}
\label{eq:mbse}
E(\boldsymbol\pi) = K(\boldsymbol\pi \Vert \boldsymbol 1) = \sum_{i = 1}^d \sum_{j = 1}^d \left(\pi_{ij} \log(\pi_{ij}) - \pi_{ij} + 1\right) \enspace,
\end{equation}
with the convention $0 \log(0) = 0$. Another well-known example is the Itakura-Saito divergence, defined for matrices $\boldsymbol\pi, \boldsymbol\xi \in \R_{++}^{d \times d}$ as follows:
\begin{equation}
\label{eq:is}
I(\boldsymbol\pi \Vert \boldsymbol\xi) = \sum_{i = 1}^d \sum_{j = 1}^d \left(\frac{\pi_{ij}}{\xi_{ij}} - \log\left(\frac{\pi_{ij}}{\xi_{ij}}\right) - 1\right) \enspace.
\end{equation}
This divergence is generated by a function of Legendre type for $\boldsymbol\pi \in \R_{++}^{d \times d}$ given by minus the Burg entropy:
\begin{equation}
\label{eq:mbe}
F(\boldsymbol\pi) = \sum_{i = 1}^d \sum_{j = 1}^d \left(\pi_{ij} - \log \pi_{ij} - 1\right) \enspace.
\end{equation}

On the one hand, these examples belong to a particular type of so-called separable Bregman divergences between matrices on $\R^{d \times d}$, that can be seen as the aggregation of element-wise Bregman divergences between scalars on $\R$:
\begin{equation}
B_\phi(\boldsymbol\pi \Vert \boldsymbol\xi) = \sum_{i = 1}^d \sum_{j = 1}^d B_{\phi_{ij}}(\pi_{ij} \Vert \xi_{ij}) \enspace,
\end{equation}
\begin{equation}
\phi(\boldsymbol\pi) = \sum_{i = 1}^d \sum_{j = 1}^d \phi_{ij}(\pi_{ij}) \enspace.
\end{equation}
Often, all element-wise generators $\phi_{ij}$ are chosen equal, and are thus simply written as $\phi$ with a slight abuse of notation. Other examples of such divergences are discussed in Section~\ref{sec:examples}, and include the logistic loss function generated by minus the Fermi-Dirac entropy, or the squared Euclidean distance generated by the Euclidean norm.

On the other hand, a classical example of non-separable Bregman divergence is half the squared Mahalanobis distance, defined for matrices $\boldsymbol\pi, \boldsymbol\xi \in \R^{d \times d}$ as follows:
\begin{equation}
\label{eq:mahal}
M(\boldsymbol\pi \Vert \boldsymbol\xi) = \frac{1}{2} {\vect(\boldsymbol\pi - \boldsymbol\xi)}^\top \P \vect(\boldsymbol\pi - \boldsymbol\xi) \enspace,
\end{equation}
for a positive-definite matrix $\P \in \R^{d^2 \times d^2}$. This divergence is generated by a function of Legendre type for $\boldsymbol\pi \in \R^{d \times d}$ given by a quadratic form:
\begin{equation}
\label{eq:aqc}
Q(\boldsymbol\pi) = \frac{1}{2} {\vect(\boldsymbol\pi)}^\top \P \vect(\boldsymbol\pi) \enspace.
\end{equation}
This example is also discussed in Section~\ref{sec:examples}.

\subsection{Alternate Bregman Projections}
\label{subsec:dykstra}

Let $\phi$ be a function of Legendre type with Fenchel conjugate $\phi^\star = \psi$. In general, computing Bregman projections onto an arbitrary closed convex set $\C \subseteq \E$ such that $\C \cap \interior(\dom \phi) \neq \emptyset$ is nontrivial. Sometimes, it is possible to decompose $\C$ into the intersection of finitely many closed convex sets:
\begin{equation}
\C = \bigcap_{l = 1}^s \C_l \enspace,
\end{equation}
where the individual Bregman projections onto the respective sets $\C_1, \dotsc, \C_s$ are easier to compute. It is then possible to obtain the Bregman projection onto $\C$ by alternate projections onto $\C_1, \dotsc, \C_s$ according to Dykstra's algorithm.

In more detail, let $\sigma \colon \N \to \{1, \dotsc, s\}$ be a control mapping that determines the sequence of subsets onto which we project. For a given point $\x_0 \in \C \cap \interior(\dom \phi)$, the Bregman projection $P_\C(\x_0)$ of $\x_0$ onto $\C$ can be approximated with Dykstra's algorithm by iterating the following updates:
\begin{equation}
\label{dykstra}
\x_{k + 1} \leftarrow P_{\C_{\sigma(k)}}(\nabla\psi(\nabla\phi(\boldsymbol\x_{k}) + \y^{\sigma(k)})) \enspace,
\end{equation}
where the correction terms $\y^1, \dotsc, \y^s$ for the respective subsets are initialized with the null element of $\E$, and are updated after projection as follows:
\begin{equation}
\label{correction}
\y^{\sigma(k)} \leftarrow \y^{\sigma(k)} + \nabla\phi(\x_k) - \nabla\phi(\x_{k + 1}) \enspace.
\end{equation}
Under some technical conditions, the sequence of updates ${(\x_k)}_{k \in \N}$ then converges in norm to $P_\C(\x_0)$ with a linear rate. Several sets of such conditions have been studied, notably by \cite{Tseng1993}, \cite{Bauschke2000}, \cite{Dhillon2007}.

We here use the conditions proposed by \cite{Dhillon2007}, which reveal to be the less restrictive ones in our framework. Specifically, the convergence of Dykstra's algorithm is guaranteed as soon as the function $\phi$ is cofinite, the constraint qualification $\ri(\C_1) \cap \dotsb \cap \ri(\C_s) \cap \interior(\dom \phi) \neq \emptyset$ holds, and the control mapping $\sigma$ is essentially cyclic, that is, there exists a number $t \in \N$ such that $\sigma$ takes each output value at least once during any $t$ consecutive input values. If a given $\C_l$ is a polyhedral set, then the relative interior can be dropped from the constraint qualification. Hence, when all subsets $\C_l$ are polyhedral, the constraint qualification simply reduces to $\C \cap \interior(\dom \phi) \neq \emptyset$, which is already enforced for the definition of Bregman projections. 

Finally, if all subsets $\C_l$ are further affine, then we can relax other assumptions. Notably, we do not require $\phi$ to be cofinite \eqref{cofinite}, or equivalently $\dom\psi = \E$, but only $\dom\psi$ to be open. The control mapping need not be essentially cyclic anymore, as long as it takes each output value an infinite number of times. More importantly, we can completely drop the correction terms from the updates, leading to a simpler technique known as projections onto convex sets (POCS):
\begin{equation}\label{pocs}
\x_{k + 1} \leftarrow P_{\C_{\sigma(k)}}(\boldsymbol\x_{k}) \enspace.
\end{equation}

\subsection{Newton-Raphson Method}
\label{subsec:newton}

Let $f$ be a continuously differentiable scalar function on an open interval $I \subseteq \R$. Assume $f$ is increasing on a non-empty closed interval $[x^-, x^+] \subset I$, and write $y^- = f(x^-)$ and $y^+ = f(x^+)$. Then, for any $y \in [y^-, y^+]$, the equation $f(x) = y$ has at least one solution $x^\star \in [x^-, x^+]$. Such a solution can be approximated by iterative updates according to the Newton-Raphson method:
\begin{equation}
x \leftarrow \max\left\{x^-, \min\left\{x^+, x - \frac{f(x) - y}{f'(x)}\right\}\right\} \enspace,
\end{equation}
where the fraction takes infinite values when $f'(x) = 0$ and $f(x) \neq y$, and a null value by convention when $f'(x) = 0$ and $f(x) = y$. It is well-known that the Newton-Raphson method converges to a solution $x^\star$ as soon as $x$ is initialized sufficiently close to $x^\star$. Convergence is then quadratic provided that $f'(x^\star) \neq 0$. However, this local convergence has little importance in practice because it is hard to quantify the required proximity to the solution.

\cite{Thorlund-Petersen2004} elucidated results on global convergence of the Newton-Raphson method. He proved a necessary and sufficient condition of convergence for an arbitrary value $y \in [y^-, y^+]$ and from any starting point $x \in [x^-, x^+]$. This condition is that for any $a, b \in [x^-, x^+]$, $f(b) > f(a)$ implies:
\begin{equation}
\label{eq:nasc}
f'(a) + f'(b) > \frac{f(b) - f(a)}{b - a} \enspace.
\end{equation}
In particular, a sufficient condition is that the underlying function $f$ is an increasing convex or increasing concave function on $[x^-, x^+]$, or can be decomposed as the sum of such functions. In addition, if $f$ satisfies the necessary and sufficient condition and is strictly increasing with $f'(x) > 0$ for all $x \in [x^-, x^+]$, then initializing with a boundary point $x^- \neq x^\star$ or $x^+ \neq x^\star$ ensures that the entire sequence of updates is interior to $(x^-, x^+)$, so that we can actually drop the min and max truncation operators in the updates:
\begin{equation}
x \leftarrow x - \frac{f(x) - y}{f'(x)} \enspace.
\end{equation}

\section{Mathematical Formulation}
\label{sec:math}

In this section, we develop a unified framework to define ROT problems. We start by drawing some technical assumptions for our generalized framework to hold (Section~\ref{subsec:assumptions}). We then formulate primal ROT problems and study their properties (Section~\ref{subsec:primal}). We also formulate dual ROT problems and discuss their properties in relation to primal ones (Section~\ref{subsec:dual}). Finally, we provide some geometrical insights to summarize our developments in the light of information geometry (Section~\ref{subsec:geomoverview}).

\subsection{Technical Assumptions}
\label{subsec:assumptions}

Some mild technical assumptions are required on the convex regularizer $\phi$ and its Fenchel conjugate $\psi = \phi^*$ for the proposed framework to hold. Some assumptions relate to required conditions for the definition of Bregman projections and convergence of the algorithms, while others are more specific to ROT problems. In our framework, we also need to distinguish between two situations where the underlying closed convex set can be described as the intersection of either affine subspaces or polyhedral subsets. The two sets of assumptions (A) and (B) are summarized in Table~\ref{tab:assump}.

\begin{table}[t!]
\centering
\begin{tabular}{l@{\hspace{0.5cm}}l}
\toprule
(A) Affine constraints 						& (B) Polyhedral constraints\\
\midrule
(A1) $\phi$ is of Legendre type. 				& (B1) $\phi$ is of Legendre type.\\
(A2) ${(0, 1)}^{d \times d} \subseteq \dom \phi$. 	& (B2) ${(0, 1)}^{d \times d} \subseteq \dom \phi$.\\
(A3) $\dom \phi \subseteq \R_+^{d \times d}$.		& (B3) $\dom \phi \nsubseteq \R_+^{d \times d}$.\\
(A4) $\dom \psi$ is open.						& (B4) $\dom \psi = \R^{d \times d}$.\\
(A5) $\R_-^{d \times d} \subset \dom \psi$.		& \\
\bottomrule
\end{tabular}
\caption{Set of assumptions for the considered regularizers $\phi$.}
\label{tab:assump}
\end{table}

For the first assumptions (A1) and (B1), we recall that a closed proper convex function is of Legendre type if and only if it is essentially smooth and strictly convex on the interior of its domain (Section~\ref{subsec:convex}). This is required for the definition of Bregman projections (Section~\ref{subsec:bregman}). In addition, it guarantees the existence of dual coordinate systems on $\interior(\dom \phi)$ and $\interior(\dom \psi)$ via the homeomorphism $\nabla\phi = {\nabla\psi}^{-1}$:
\begin{align}
\label{eq:primalparam}
\boldsymbol\pi(\boldsymbol\theta) & = \nabla\psi(\boldsymbol\theta) \enspace,\\
\label{eq:dualparam}
\boldsymbol\theta(\boldsymbol\pi) & = \nabla\phi(\boldsymbol\pi) \enspace.
\end{align}
With a slight abuse of notation, we omit the reparameterization to simply denote corresponding primal and dual parameters by $\boldsymbol\pi$ and $\boldsymbol\theta$.

The second assumptions (A2) and (B2) imply that $\ri(\Pi(\p, \q)) \subset \dom \phi$ and ensure the constraint qualification $\Pi(\p, \q) \cap \interior(\dom \phi) \neq \emptyset$ for Bregman projection onto the transport polytope, independently of the input distributions $\p, \q$ as long as they do not have null or unit entries. We assume hereafter that this implicitly holds, and discuss in the practical considerations (Section~\ref{subsec:practice}) how our methods actually generalize to deal explicitly with null or unit entries in the input distributions.

The third assumptions (A3) and (B3) separate between two cases depending on whether $\dom \phi$ lies within the non-negative orthant or not for the alternate Bregman projections (Section~\ref{subsec:dykstra}). In the former case, non-negativity is already ensured by the domain of the regularizer, so that the underlying closed convex set is made of two affine subspaces for the row and column sum constraints, and the POCS method can be considered. The fourth assumption (A4) thus requires that $\dom \psi$ be open for convergence of this algorithm. In the latter case, there is one additional polyhedral subset for the non-negative constraints and Dykstra's algorithm should be used. The fourth assumption (B4) hence further requires that $\dom \psi = \R^{d \times d}$, or equivalently that $\phi$ be cofinite \eqref{cofinite}, for convergence. In both cases, we remark that we necessarily have $\dom \psi = \dom \nabla\psi$.

The fifth assumption (A5) in the affine constraints ensures that $-\boldsymbol\gamma / \lambda$ belongs to $\dom \nabla\psi$ for definition of ROT problems, independently of the non-negative cost matrix $\boldsymbol\gamma$ and positive regularization term $\lambda$. Notice that this is already guaranteed by the fourth assumption in the polyhedral constraints. We also show in the sparse extension (Section~\ref{subsec:sparse}) how to deal with infinite entries in the cost matrix $\boldsymbol\gamma$ for separable regularizers, so as to enforce null entries in the regularized optimal plan.

On the one hand, some common regularizers under assumptions (A) are the Boltzmann-Shannon entropy associated to the Kullback-Leibler divergence, the Burg entropy associated to the Itakura-Saito divergence, and the Fermi-Dirac entropy associated to the logistic loss function. To solve the underlying ROT problems, we employ our method called ASA based on the POCS technique, where alternate Bregman projections onto the two affine subspaces for the row and column sum constraints are considered (Section~\ref{subsec:asa}). On the other hand, examples under assumptions (B) include the Euclidean norm associated to the Euclidean distance, and the quadratic form associated to the Mahalanobis distance. For these ROT problems, we use our second method called NASA based on Dykstra's algorithm, where correction terms and a further Bregman projection onto the polyhedral non-negative orthant are needed (Section~\ref{subsec:nasa}).

\subsection{Primal Problem}
\label{subsec:primal}

We start our primal formulation with the following lemmas and definition for the RMD.

\begin{lemma}
\label{lemma:primal_glob_min}
The regularizer $\phi$ attains its global minimum uniquely at $\boldsymbol\xi' = \nabla\psi(\zeros)$.
\end{lemma}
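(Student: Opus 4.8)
The plan is to locate the minimizer via the first-order optimality condition for convex functions, using the Legendre-type structure of $\phi$ to translate the vanishing of the gradient into the dual coordinate $\boldsymbol\xi' = \nabla\psi(\zeros)$. The first step is therefore to check that $\zeros$ lies in $\interior(\dom\psi)$, so that $\boldsymbol\xi'$ is well-defined. Under assumptions (A), (A5) gives $\R_-^{d \times d} \subset \dom\psi$, hence $\zeros \in \dom\psi$, while (A4) ensures $\dom\psi$ is open, so that $\zeros \in \interior(\dom\psi)$. Under assumptions (B), (B4) directly gives $\dom\psi = \R^{d \times d}$ and the same conclusion is immediate. Since (A1) and (B1) make $\phi$ of Legendre type, the gradient $\nabla\psi$ is a homeomorphism from $\interior(\dom\psi)$ onto $\interior(\dom\phi)$ with inverse $\nabla\phi$; consequently $\boldsymbol\xi' \in \interior(\dom\phi)$ and $\nabla\phi(\boldsymbol\xi') = \nabla\phi(\nabla\psi(\zeros)) = \zeros$.

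The next step is to establish global minimality. Because $\phi$ is convex and differentiable at the interior point $\boldsymbol\xi'$, its gradient there is a subgradient, so the supporting-hyperplane inequality $\phi(\x) \geq \phi(\boldsymbol\xi') + \langle \nabla\phi(\boldsymbol\xi'), \x - \boldsymbol\xi' \rangle = \phi(\boldsymbol\xi')$ holds for every $\x \in \dom\phi$, and trivially for $\x \notin \dom\phi$ where $\phi(\x) = +\infty$. Hence $\boldsymbol\xi'$ minimizes $\phi$ over all of $\E$.

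For uniqueness, I would invoke essential smoothness: since $\phi$ is essentially smooth, its subdifferential is empty on $\bd(\dom\phi)$, so any global minimizer, being a point at which $\zeros$ is a subgradient, must lie in $\interior(\dom\phi)$. On that set $\phi$ is strictly convex, so it admits at most one minimizer, and the injectivity of $\nabla\phi$ on $\interior(\dom\phi)$ identifies it with the unique solution of $\nabla\phi(\x) = \zeros$, namely $\boldsymbol\xi'$.

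I expect the main obstacle to be the first step, verifying $\zeros \in \interior(\dom\psi)$ uniformly across both assumption sets, since this is precisely what makes $\nabla\psi(\zeros)$ meaningful and interior. The uniqueness argument also deserves care, because strict convexity is assumed only on the interior of the domain; it is essential smoothness that rules out spurious minimizers lying on $\bd(\dom\phi)$, and without it the conclusion could fail.
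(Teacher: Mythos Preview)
Your proof is correct and follows essentially the same approach as the paper: verify $\zeros \in \interior(\dom\psi)$ from the assumptions, invoke the Legendre duality to obtain $\boldsymbol\xi' = \nabla\psi(\zeros) \in \interior(\dom\phi)$ with $\nabla\phi(\boldsymbol\xi') = \zeros$, and conclude global minimality and uniqueness from convexity. Your treatment is in fact more careful than the paper's, which simply writes ``by strict convexity on $\interior(\dom\phi)$'' without explicitly ruling out boundary minimizers; your use of essential smoothness to show the subdifferential is empty on $\bd(\dom\phi)$ closes that gap cleanly.
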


\begin{proof}
Using the assumptions (A4) and (A5), respectively (B4), we have that $\zeros \in \dom \psi = \interior(\dom \psi)$. Thus, there exists a unique $\boldsymbol\xi' \in \interior(\dom \phi)$ such that $\nabla\phi(\boldsymbol\xi') = \zeros$, or equivalently $\boldsymbol\xi' = \nabla\psi(\zeros)$, via the homeomorphism $\nabla\phi = {\nabla\psi}^{-1}$ ensured by assumption (A1), respectively (B1). Hence, $\phi$ attains its global minimum uniquely at $\boldsymbol\xi'$ by strict convexity on $\interior(\dom \phi)$.
\end{proof}

\begin{lemma}
\label{lemma:primal_rest_min}
The restriction of the regularizer $\phi$ to the transport polytope $\Pi(\p, \q)$ attains its global minimum uniquely at the Bregman projection $\boldsymbol\pi'$ of $\boldsymbol\xi'$ onto $\Pi(\p, \q)$.
\end{lemma}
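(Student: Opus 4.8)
The plan is to reduce the constrained minimization of $\phi$ over $\Pi(\p, \q)$ to the Bregman projection of $\boldsymbol\xi'$, exploiting the fact that $\boldsymbol\xi'$ is precisely the point where the gradient of $\phi$ vanishes. From Lemma~\ref{lemma:primal_glob_min} we already know that $\boldsymbol\xi' = \nabla\psi(\zeros) \in \interior(\dom \phi)$ satisfies $\nabla\phi(\boldsymbol\xi') = \zeros$ via the homeomorphism $\nabla\phi = {\nabla\psi}^{-1}$. First I would verify that the Bregman projection $\boldsymbol\pi' = P_{\Pi(\p, \q)}(\boldsymbol\xi')$ is well-defined: $\phi$ is of Legendre type by (A1), respectively (B1), the transport polytope is closed and convex, and the constraint qualification $\Pi(\p, \q) \cap \interior(\dom \phi) \neq \emptyset$ holds thanks to (A2), respectively (B2). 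Hence $\boldsymbol\pi'$ exists, is unique, and lies in $\Pi(\p, \q) \cap \interior(\dom \phi)$.

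The key step is to write out the Bregman divergence at the base point $\boldsymbol\xi'$. Substituting $\nabla\phi(\boldsymbol\xi') = \zeros$ into the definition~\eqref{eq:bdiv} collapses the linear correction term, yielding $B_\phi(\boldsymbol\pi \Vert \boldsymbol\xi') = \phi(\boldsymbol\pi) - \phi(\boldsymbol\xi')$ for every $\boldsymbol\pi \in \dom \phi$; this identity persists in the extended reals for $\boldsymbol\pi \notin \dom \phi$, since both sides equal $+\infty$. Because $\phi(\boldsymbol\xi')$ is a finite constant, the two objectives $\boldsymbol\pi \mapsto B_\phi(\boldsymbol\pi \Vert \boldsymbol\xi')$ and $\boldsymbol\pi \mapsto \phi(\boldsymbol\pi)$ differ only by this constant on $\Pi(\p, \q)$. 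Their minimizers over the transport polytope therefore coincide, so the restriction of $\phi$ to $\Pi(\p, \q)$ attains its minimum exactly at $\boldsymbol\pi'$, and existence of this minimizer comes for free from the existence of the Bregman projection.

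Uniqueness then follows directly from the uniqueness of the Bregman projection recalled in the preliminaries, which itself rests on the strict convexity of $\phi$ on $\interior(\dom \phi)$. I expect the only delicate point to be the careful handling of the extended-real-valued nature of $\phi$ off its domain, together with ensuring the constraint qualification is in force; but both are immediate consequences of the standing assumptions, so the argument amounts to a one-line reduction once the vanishing of $\nabla\phi$ at $\boldsymbol\xi'$ is recognized.
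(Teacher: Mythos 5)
Your proof is correct, but it takes a different route from the paper's. You exploit the identity $B_\phi(\boldsymbol\pi \Vert \boldsymbol\xi') = \phi(\boldsymbol\pi) - \phi(\boldsymbol\xi')$, which follows from $\nabla\phi(\boldsymbol\xi') = \zeros$, to recognize the constrained minimization of $\phi$ over $\Pi(\p, \q)$ as literally the Bregman projection problem for $\boldsymbol\xi'$; existence and uniqueness of the minimizer then come for free from the general theory of Bregman projections recalled in Section~\ref{subsec:bregman}. The paper instead keeps $\phi$ as the objective and argues pointwise: it invokes the variational characterization~\eqref{eq:var} of the projection, which at $\boldsymbol\xi'$ reduces to $\langle \boldsymbol\pi - \boldsymbol\pi', \nabla\phi(\boldsymbol\pi') \rangle \geq 0$ for all $\boldsymbol\pi \in \Pi(\p, \q) \cap \dom\phi$, and combines this with the strict inequality $\phi(\boldsymbol\pi) - \phi(\boldsymbol\pi') > \langle \boldsymbol\pi - \boldsymbol\pi', \nabla\phi(\boldsymbol\pi') \rangle$ coming from $B_\phi(\boldsymbol\pi \Vert \boldsymbol\pi') > 0$, to conclude $\phi(\boldsymbol\pi) > \phi(\boldsymbol\pi')$ directly. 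Your reduction is shorter and is in fact the same device the paper itself uses later for Proposition~\ref{prop:bigbball}, Lemma~\ref{lemma:dual_rest_min} and Proposition~\ref{prop:dbproj} (where the regularized cost is rewritten as $\lambda B_\phi(\boldsymbol\pi \Vert \boldsymbol\xi)$ up to a constant), so it unifies the primal and dual arguments; the paper's version has the minor advantage of exhibiting the explicit strict inequality $\phi(\boldsymbol\pi) > \phi(\boldsymbol\pi')$ without appealing to the uniqueness statement for projections as a black box. Your handling of the extended-real convention off $\dom\phi$ and of the constraint qualification is appropriate, so there is no gap.
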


\begin{proof}
Using the assumption (A2), respectively (B2), we have that $\Pi(\p, \q) \cap \interior(\dom \phi) \neq \emptyset$. Since $\boldsymbol\xi' \in \interior(\dom \phi)$ and $\Pi(\p, \q)$ is a closed convex set, the Bregman projection $\boldsymbol\pi'$ of $\boldsymbol\xi'$ onto $\Pi(\p, \q)$ according to the function $\phi$ of Legendre type is well-defined. Moreover, it is characterized by the variational relation~\eqref{eq:var} as follows:
\begin{equation}
\langle \boldsymbol\pi - \boldsymbol\pi', \nabla\phi(\boldsymbol\pi') \rangle \geq 0 \enspace,
\end{equation}
for all $\boldsymbol\pi \in \Pi(\p, \q) \cap \dom \phi$. We also have $B_\phi(\boldsymbol\pi \Vert \boldsymbol\pi') > 0$ when $\boldsymbol\pi \neq \boldsymbol\pi'$ by strict convexity of $\phi$ on $\interior(\dom \phi)$. As a result, we have:
\begin{equation}
\phi(\boldsymbol\pi) - \phi(\boldsymbol\pi') > \langle \boldsymbol\pi - \boldsymbol\pi', \nabla\phi(\boldsymbol\pi') \rangle \enspace.
\end{equation}
Combining the two inequalities, we obtain $\phi(\boldsymbol\pi) > \phi(\boldsymbol\pi')$ and the restriction of $\phi$ to $\Pi(\p, \q)$ attains its global minimum uniquely at $\boldsymbol\pi'$.
\end{proof}

\begin{lemma}
\label{lemma:rtp}
The restriction of the cost $\langle \cdot, \boldsymbol\gamma \rangle$ to the regularized transport polytope:
\begin{equation}
\label{eq:rtp}
\Pi_{\alpha, \phi}(\p, \q) = \{\boldsymbol\pi \in \Pi(\p, \q) \colon \phi(\boldsymbol\pi) \leq \phi(\boldsymbol\pi') + \alpha\} \enspace,
\end{equation}
where $\alpha \geq 0$, attains its global minimum.
\end{lemma}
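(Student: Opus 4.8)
The plan is to invoke the Weierstrass extreme value theorem: a continuous function attains its minimum on a non-empty compact set. The objective $\langle \cdot, \boldsymbol\gamma \rangle$ is linear, hence continuous on all of $\R^{d \times d}$, so the work reduces to showing that the feasible set $\Pi_{\alpha, \phi}(\p, \q)$ is non-empty and compact.

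First I would establish non-emptiness. By Lemma~\ref{lemma:primal_rest_min}, the Bregman projection $\boldsymbol\pi'$ minimizes $\phi$ over $\Pi(\p, \q)$, so in particular $\phi(\boldsymbol\pi') \leq \phi(\boldsymbol\pi') + \alpha$ since $\alpha \geq 0$. Hence $\boldsymbol\pi' \in \Pi_{\alpha, \phi}(\p, \q)$, and the feasible set is non-empty (it contains the minimizer of $\phi$ itself).

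Next, for compactness I would argue closedness and boundedness separately. Boundedness is immediate because $\Pi_{\alpha, \phi}(\p, \q) \subseteq \Pi(\p, \q)$, and every coupling in the transport polytope has entries in $[0, 1]$. For closedness, I would write $\Pi_{\alpha, \phi}(\p, \q)$ as the intersection of the polyhedron $\Pi(\p, \q)$, which is closed, with the lower level set $\{\boldsymbol\pi \in \R^{d \times d} \colon \phi(\boldsymbol\pi) \leq \phi(\boldsymbol\pi') + \alpha\}$. Since $\phi$ is of Legendre type by assumption (A1), respectively (B1), it is in particular a closed convex function, so all of its lower level sets are closed by definition. The feasible set is therefore an intersection of two closed sets, hence closed, and thus compact as a closed subset of the bounded transport polytope.

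The only delicate point is that $\phi$ may take the value $+\infty$ on part of the boundary of $\Pi(\p, \q)$ — for instance with the Burg entropy~\eqref{eq:mbe}, whose domain excludes matrices with null entries. This causes no difficulty: such points automatically violate the constraint $\phi(\boldsymbol\pi) \leq \phi(\boldsymbol\pi') + \alpha < +\infty$ and are thereby excluded, while the closedness of the lower level set, and hence of the feasible set, is guaranteed purely by the lower semi-continuity built into $\phi$ being closed. Having secured a non-empty compact feasible set together with a continuous objective, the attainment of the global minimum follows from the extreme value theorem.
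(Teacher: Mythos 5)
Your proof is correct and follows essentially the same route as the paper: $\Pi_{\alpha,\phi}(\p,\q)$ is the intersection of the compact transport polytope with a closed lower level set of the closed function $\phi$, hence compact, and the continuous linear cost attains its minimum there. The only addition is your explicit non-emptiness check via $\boldsymbol\pi' \in \Pi_{\alpha,\phi}(\p,\q)$, which the paper leaves implicit but which is a worthwhile detail to record.
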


\begin{proof}
The regularized transport polytope is the intersection of the compact set $\Pi(\p, \q)$ with a lower level set of $\phi$ which is also closed since $\phi$ is closed. Hence, $\Pi_{\alpha, \phi}(\p, \q)$ is compact and the restriction of $\langle \cdot, \boldsymbol\gamma \rangle$ to $\Pi_{\alpha, \phi}(\p, \q)$ attains its global minimum by continuity on a compact set.
\end{proof}

\begin{definition}
\label{def:prmd}
The primal rot mover's distance is the quantity defined as:
\begin{equation}
\label{eq:prmd}
d_{\boldsymbol\gamma, \alpha, \phi}'(\p, \q) = \min_{\boldsymbol\pi \in \Pi_{\alpha, \phi}(\p, \q)} \langle \boldsymbol\pi, \boldsymbol\gamma \rangle \enspace.
\end{equation}
A minimizer ${\boldsymbol\pi'}\vphantom{\boldsymbol\pi}_\alpha^\star$ is then called a primal rot mover's plan.
\end{definition}

\begin{myremark}
For the sake of notation, we omit the dependence on $\p, \q, \boldsymbol\gamma, \phi$ in the index of primal rot mover's plans ${\boldsymbol\pi'}\vphantom{\boldsymbol\pi}_\alpha^\star$.
\end{myremark}

The regularization enforces the associated minimizers to have small enough Bregman information $\phi({\boldsymbol\pi'}\vphantom{\boldsymbol\pi}_\alpha^\star) \leq \phi(\boldsymbol\pi') + \alpha$ compared to the minimal one $\phi(\boldsymbol\pi')$ for transport plans. We also have a geometrical interpretation where the solutions are constrained to a Bregman ball whose center $\boldsymbol\xi'$ is the matrix with minimal Bregman information.

\begin{proposition}
\label{prop:bigbball}
The regularized transport polytope is the intersection of the transport polytope with the Bregman ball of radius $B_\phi(\boldsymbol\pi' \Vert \boldsymbol\xi') + \alpha$ and center $\boldsymbol\xi'$:
\begin{equation}
\label{eq:bigbball}
\Pi_{\alpha, \phi}(\p, \q) = \{\boldsymbol\pi \in \Pi(\p, \q) \colon B_\phi(\boldsymbol\pi \Vert \boldsymbol\xi') \leq B_\phi(\boldsymbol\pi' \Vert \boldsymbol\xi') + \alpha\} \enspace.
\end{equation}
\end{proposition}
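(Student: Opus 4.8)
The plan is to reduce both defining conditions---the level-set condition on $\phi$ in \eqref{eq:rtp} and the Bregman-ball condition in \eqref{eq:bigbball}---to one and the same inequality, by exploiting the special role of the center $\boldsymbol\xi'$. The key observation, inherited from Lemma~\ref{lemma:primal_glob_min}, is that $\boldsymbol\xi' = \nabla\psi(\zeros)$ lies in $\interior(\dom\phi)$ and satisfies $\nabla\phi(\boldsymbol\xi') = \zeros$ through the homeomorphism $\nabla\phi = {\nabla\psi}^{-1}$. This makes $B_\phi(\cdot \Vert \boldsymbol\xi')$ well-defined on all of $\dom\phi$ and, crucially, collapses its affine correction term.

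Concretely, I would substitute $\nabla\phi(\boldsymbol\xi') = \zeros$ into the definition~\eqref{eq:bdiv} of the Bregman divergence to obtain, for any $\boldsymbol\pi \in \dom\phi$,
\begin{equation}
B_\phi(\boldsymbol\pi \Vert \boldsymbol\xi') = \phi(\boldsymbol\pi) - \phi(\boldsymbol\xi') - \langle \boldsymbol\pi - \boldsymbol\xi', \zeros \rangle = \phi(\boldsymbol\pi) - \phi(\boldsymbol\xi') \enspace,
\end{equation}
so that the Bregman divergence to $\boldsymbol\xi'$ differs from $\phi$ only by the additive constant $\phi(\boldsymbol\xi')$. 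Applying the same identity with $\boldsymbol\pi'$ in place of $\boldsymbol\pi$, which is legitimate since $\boldsymbol\pi' \in \Pi(\p, \q) \cap \interior(\dom\phi)$ by Lemma~\ref{lemma:primal_rest_min}, yields $B_\phi(\boldsymbol\pi' \Vert \boldsymbol\xi') = \phi(\boldsymbol\pi') - \phi(\boldsymbol\xi')$.

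I would then plug these two expressions into the ball condition $B_\phi(\boldsymbol\pi \Vert \boldsymbol\xi') \leq B_\phi(\boldsymbol\pi' \Vert \boldsymbol\xi') + \alpha$. The common term $\phi(\boldsymbol\xi')$ cancels, leaving exactly $\phi(\boldsymbol\pi) \leq \phi(\boldsymbol\pi') + \alpha$, which is the defining inequality of $\Pi_{\alpha, \phi}(\p, \q)$ in \eqref{eq:rtp}. Since both sets additionally impose $\boldsymbol\pi \in \Pi(\p, \q)$, the two descriptions coincide.

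The computation itself presents no real obstacle; the only point meriting a line of care is the treatment of couplings outside $\dom\phi$. For $\boldsymbol\pi \in \Pi(\p, \q)$ with $\boldsymbol\pi \notin \dom\phi$ we have $\phi(\boldsymbol\pi) = +\infty$, whence both the level-set condition and the Bregman-ball condition fail simultaneously, the divergence being infinite as well, so the equivalence is preserved on the whole transport polytope and not merely on its intersection with $\dom\phi$. I expect this boundary bookkeeping, rather than any analytic difficulty, to be the most delicate aspect of the argument.
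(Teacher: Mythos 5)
Your proof is correct and follows essentially the same route as the paper: expand both divergences, use $\nabla\phi(\boldsymbol\xi') = \zeros$ to kill the affine correction terms, and observe that the ball condition reduces to $\phi(\boldsymbol\pi) \leq \phi(\boldsymbol\pi') + \alpha$. Your extra remark on couplings outside $\dom\phi$ is a harmless refinement the paper leaves implicit.
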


\begin{proof}
Expanding the Bregman divergences from their definition~\eqref{eq:bdiv}, we obtain:
\begin{align}
B_\phi(\boldsymbol\pi \Vert \boldsymbol\xi')	& = \phi(\boldsymbol\pi) - \phi(\boldsymbol\xi') - \langle \boldsymbol\pi - \boldsymbol\xi', \nabla\phi(\boldsymbol\xi') \rangle\enspace,\\ 
B_\phi(\boldsymbol\pi' \Vert \boldsymbol\xi') 	& = \phi(\boldsymbol\pi') - \phi(\boldsymbol\xi') - \langle \boldsymbol\pi' - \boldsymbol\xi', \nabla\phi(\boldsymbol\xi') \rangle\enspace.
\end{align}
Since $\nabla\phi(\boldsymbol\xi') = \zeros$, the last terms with scalar products vanish, leading to: 
\begin{equation}
\phi(\boldsymbol\pi) - \phi(\boldsymbol\pi') = B_\phi(\boldsymbol\pi \Vert \boldsymbol\xi') - B_\phi(\boldsymbol\pi' \Vert \boldsymbol\xi') \enspace.
\end{equation}
Therefore, in the definition~\eqref{eq:rtp} of $\Pi_{\alpha, \phi}(\p, \q)$, we have $\phi(\boldsymbol\pi) \leq \phi(\boldsymbol\pi') + \alpha$ if and only if $\boldsymbol\pi$ is in the Bregman ball of radius $B_\phi(\boldsymbol\pi' \Vert \boldsymbol\xi') + \alpha$ and center $\boldsymbol\xi'$.
\end{proof}

Under some additional conditions, this geometrical interpretation still holds with a Bregman ball whose center $\boldsymbol\pi'$ has minimal Bregman information for transport plans.

\begin{proposition}
\label{prop:bball}
If $\boldsymbol\pi' \in \ri(\Pi(\p, \q))$, then the regularized transport polytope is the intersection of the transport polytope with the Bregman ball of radius $\alpha$ and center $\boldsymbol\pi'$:
\begin{equation}
\Pi_{\alpha, \phi}(\p, \q) = \{\boldsymbol\pi \in \Pi(\p, \q) \colon B_\phi(\boldsymbol\pi \Vert \boldsymbol\pi') \leq \alpha\} \enspace.
\end{equation}
\end{proposition}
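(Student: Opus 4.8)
The plan is to reduce the whole statement to a single identity: under the hypothesis $\boldsymbol\pi' \in \ri(\Pi(\p, \q))$, the difference of Bregman informations $\phi(\boldsymbol\pi) - \phi(\boldsymbol\pi')$ coincides \emph{exactly} with the Bregman divergence $B_\phi(\boldsymbol\pi \Vert \boldsymbol\pi')$, so that the defining constraint $\phi(\boldsymbol\pi) \leq \phi(\boldsymbol\pi') + \alpha$ of the regularized transport polytope in \eqref{eq:rtp} transforms into $B_\phi(\boldsymbol\pi \Vert \boldsymbol\pi') \leq \alpha$, which is precisely the claimed Bregman ball condition.

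First I would gather the two facts about the center. From Lemma~\ref{lemma:primal_rest_min}, $\boldsymbol\pi'$ is the Bregman projection of $\boldsymbol\xi'$ onto $\Pi(\p, \q)$; in particular $\boldsymbol\pi' \in \Pi(\p, \q) \cap \interior(\dom \phi)$, so $B_\phi(\cdot \Vert \boldsymbol\pi')$ is well-defined and the ball on the right-hand side makes sense. From Lemma~\ref{lemma:primal_glob_min}, the global minimizer satisfies $\nabla\phi(\boldsymbol\xi') = \zeros$. The key step is then to observe that, because by hypothesis the projection $\boldsymbol\pi'$ lies in $\ri(\Pi(\p, \q))$, the variational inequality \eqref{eq:var} can be upgraded to the orthogonality equality \eqref{eq:orth}, which reads $\langle \boldsymbol\pi - \boldsymbol\pi', \nabla\phi(\boldsymbol\xi') - \nabla\phi(\boldsymbol\pi') \rangle = 0$ for all $\boldsymbol\pi \in \Pi(\p, \q) \cap \dom \phi$. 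Substituting $\nabla\phi(\boldsymbol\xi') = \zeros$ collapses this to $\langle \boldsymbol\pi - \boldsymbol\pi', \nabla\phi(\boldsymbol\pi') \rangle = 0$, so the cross term in the Bregman divergence vanishes.

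With the cross term gone, I would expand $B_\phi(\boldsymbol\pi \Vert \boldsymbol\pi')$ directly from its definition \eqref{eq:bdiv} to get
\[
\phi(\boldsymbol\pi) - \phi(\boldsymbol\pi') = B_\phi(\boldsymbol\pi \Vert \boldsymbol\pi')
\]
for every $\boldsymbol\pi \in \Pi(\p, \q) \cap \dom \phi$; for $\boldsymbol\pi \in \Pi(\p, \q) \setminus \dom \phi$ both sides equal $+\infty$, so the identity and hence the equivalence of the two constraints persists over all of $\Pi(\p, \q)$. The asserted set equality then follows immediately by reading off \eqref{eq:rtp}.

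The main point to be careful about — more a subtlety than a genuine obstacle — is the justification that the hypothesis $\boldsymbol\pi' \in \ri(\Pi(\p, \q))$ is exactly what licenses the passage from the inequality \eqref{eq:var} to the equality \eqref{eq:orth}. Without it, only $\langle \boldsymbol\pi - \boldsymbol\pi', \nabla\phi(\boldsymbol\pi') \rangle \geq 0$ is available, which would merely recover the weaker Bregman-ball characterization of Proposition~\ref{prop:bigbball} centered at $\boldsymbol\xi'$. I would also highlight that, in contrast to Proposition~\ref{prop:bigbball}, the global center $\boldsymbol\xi'$ drops out of the final expression entirely, precisely because the relation $\nabla\phi(\boldsymbol\xi') = \zeros$ together with the orthogonality annihilates any residual dependence on it.
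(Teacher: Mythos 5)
Your proof is correct. It differs from the paper's only in which intermediate result it leans on: the paper composes Proposition~\ref{prop:bigbball} with the generalized Pythagorean \emph{equality} $B_\phi(\boldsymbol\pi \Vert \boldsymbol\xi') = B_\phi(\boldsymbol\pi \Vert \boldsymbol\pi') + B_\phi(\boldsymbol\pi' \Vert \boldsymbol\xi')$, substituting it into the ball condition centered at $\boldsymbol\xi'$, whereas you bypass Proposition~\ref{prop:bigbball} entirely and work straight from the definition~\eqref{eq:rtp}, using the orthogonality relation~\eqref{eq:orth} together with $\nabla\phi(\boldsymbol\xi') = \zeros$ to collapse $B_\phi(\boldsymbol\pi \Vert \boldsymbol\pi')$ to $\phi(\boldsymbol\pi) - \phi(\boldsymbol\pi')$. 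Since the Pythagorean equality is itself derived from~\eqref{eq:orth} via the three-point property, the two arguments rest on the same geometric fact; yours is a one-step inlining that makes the role of the hypothesis $\boldsymbol\pi' \in \ri(\Pi(\p,\q))$ (upgrading~\eqref{eq:var} to~\eqref{eq:orth}) explicit, while the paper's reuses its already-established ball characterization and so reads as a shorter corollary-style derivation. Your remark about points of $\Pi(\p,\q) \setminus \dom\phi$, where both sides are $+\infty$, is a correct and slightly more careful touch than the paper bothers with.
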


\begin{proof}
Since $\boldsymbol\pi' \in \ri(\Pi(\p, \q))$, there is equality in the generalized Pythagorean theorem~\eqref{eq:pyth}:
\begin{equation}
B_\phi(\boldsymbol\pi \Vert \boldsymbol\xi') = B_\phi(\boldsymbol\pi \Vert \boldsymbol\pi') + B_\phi(\boldsymbol\pi' \Vert \boldsymbol\xi') \enspace.
\end{equation}
The regularized transport polytope as seen from~\eqref{eq:bigbball} is then the intersection of the transport polytope $\Pi(\p, \q)$ with the Bregman ball of radius $\alpha$ and center $\boldsymbol\pi'$.
\end{proof}

\begin{myremark}
The proposition also holds trivially when the global minimum is attained on the transport polytope, that is, when $\boldsymbol\xi' = \boldsymbol\pi'$.
\end{myremark}

\begin{corollary}
\label{corol:bball}
Under assumptions (A), the regularized transport polytope is the intersection of the transport polytope with the Bregman ball of radius $\alpha$ and center $\boldsymbol\pi'$:
\begin{equation}
\Pi_{\alpha, \phi}(\p, \q) = \{\boldsymbol\pi \in \Pi(\p, \q) \colon B_\phi(\boldsymbol\pi \Vert \boldsymbol\pi') \leq \alpha\} \enspace.
\end{equation}
\end{corollary}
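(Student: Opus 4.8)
The plan is to deduce Corollary~\ref{corol:bball} directly from Proposition~\ref{prop:bball}, whose conclusion is exactly the desired identity. By Proposition~\ref{prop:bball}, it suffices to show that under assumptions (A) the Bregman projection $\boldsymbol\pi'$ of $\boldsymbol\xi'$ onto $\Pi(\p, \q)$ always lands in the relative interior $\ri(\Pi(\p, \q))$. Once this membership is established, the hypothesis of Proposition~\ref{prop:bball} is satisfied and the stated equality follows immediately, with no further computation required.

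First I would recall from assumption (A3) that $\dom \phi \subseteq \R_+^{d \times d}$, so that the boundary of the non-negative orthant is effectively built into the boundary of $\dom \phi$. The essential smoothness of $\phi$ (part of (A1), $\phi$ of Legendre type) then forces $\Vert \nabla\phi(\x_k) \Vert \to +\infty$ along any sequence approaching $\bd(\dom\phi)$. I would combine this with the general fact, already stated in Section~\ref{subsec:bregman}, that the Bregman projection $\boldsymbol\pi'$ necessarily belongs to $\C \cap \interior(\dom\phi)$, here $\Pi(\p, \q) \cap \interior(\dom \phi)$. The key point to extract is that any matrix in $\interior(\dom\phi)$ has strictly positive entries, since $\interior(\dom\phi) \subseteq \interior(\R_+^{d\times d}) = \R_{++}^{d\times d}$ by (A3). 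Hence $\boldsymbol\pi'$ has no zero entries.

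Next I would identify $\ri(\Pi(\p, \q))$ explicitly. The transport polytope $\Pi(\p, \q)$ is the set of non-negative matrices with prescribed row and column sums, cut out by the equality constraints $\boldsymbol\pi\ones = \p$, $\boldsymbol\pi^\top\ones = \q$ together with the inequalities $\boldsymbol\pi \geq \zeros$. Its relative interior is obtained by keeping the affine equalities and strengthening the inequalities to be strict, so that $\ri(\Pi(\p, \q)) = \{\boldsymbol\pi \in \R_{++}^{d\times d} \colon \boldsymbol\pi\ones = \p, \boldsymbol\pi^\top\ones = \q\}$ whenever $\p, \q$ have no null entries, which holds under our standing hypothesis that input distributions have neither null nor unit entries (as invoked for (A2)). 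Since $\boldsymbol\pi' \in \Pi(\p, \q)$ already satisfies the marginal equalities and, by the previous step, has strictly positive entries, it lies in $\ri(\Pi(\p, \q))$.

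The main obstacle I anticipate is the careful justification that $\interior(\dom\phi)$ consists only of strictly positive matrices and that this, together with the projection landing in $\interior(\dom\phi)$, is enough to place $\boldsymbol\pi'$ in the relative interior rather than merely in the topological interior of a lower-dimensional face. This requires being precise about the interplay between the affine hull of $\Pi(\p, \q)$ and the positivity constraints, and about the degenerate cases where $\p$ or $\q$ could have extremal entries; these are exactly the cases deferred to the practical considerations of Section~\ref{subsec:practice}, so under the standing assumptions the argument is clean. With $\boldsymbol\pi' \in \ri(\Pi(\p, \q))$ established, the corollary is an immediate application of Proposition~\ref{prop:bball}.
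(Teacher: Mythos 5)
Your proposal is correct and follows essentially the same route as the paper: both reduce the corollary to Proposition~\ref{prop:bball} by showing that (A3) forces $\boldsymbol\pi' \in \Pi(\p, \q) \cap \interior(\dom \phi) \subseteq \R_{++}^{d \times d}$, hence $\boldsymbol\pi' \in \ri(\Pi(\p, \q))$. The only (cosmetic) difference is that you invoke the standard polyhedral description of $\ri(\Pi(\p, \q))$ as the strictly positive matrices with the prescribed marginals, whereas the paper verifies membership in $\ri(\Pi(\p, \q))$ directly from the characterization~\eqref{eq:ri} by perturbing along segments with $\lambda > 1$; both are valid under the standing assumption that $\p, \q$ have no null or unit entries.
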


\begin{proof}
This is a result of $\boldsymbol\pi' \in \Pi(\p, \q) \cap \interior(\dom \phi) = \ri(\Pi(\p, \q))$ when $\dom \phi \subseteq \R_+^{d \times d}$. Indeed, we then have $\ri(\Pi(\p, \q)) \subset \Pi(\p, \q)$ and $\ri(\Pi(\p, \q)) \subset \interior(\dom \phi)$, so that $\ri(\Pi(\p, \q)) \subseteq \Pi(\p, \q) \cap \interior(\dom \phi)$. Conversely, let $\boldsymbol\pi \in \Pi(\p, \q) \cap \interior(\dom \phi)$ so that $\boldsymbol\pi \in \R_{++}^{d \times d}$. Then, for a given $\overline{\boldsymbol\pi} \in \Pi(\p, \q)$, let us pose $\boldsymbol\pi_\lambda = \lambda \boldsymbol\pi + (1 - \lambda) \overline{\boldsymbol\pi}$ for $\lambda > 1$. We easily have $\boldsymbol\pi_\lambda \ones = \p$ and $\boldsymbol\pi_\lambda^\top \ones = \q$. Moreover, since all entries of $\boldsymbol\pi$ are positive and that of $\overline{\boldsymbol\pi}$ are non-negative, we can always choose a given $\lambda$ sufficiently close to $1$ such that $\boldsymbol\pi_\lambda \in \R_+^{d \times d}$. We then have $\boldsymbol\pi_\lambda \in \Pi(\p, \q)$ so that $\boldsymbol\pi \in \ri(\Pi(\p, \q))$ as characterized by~\eqref{eq:ri}, and thus $\Pi(\p, \q) \cap \interior(\dom \phi) \subseteq \ri(\Pi(\p, \q))$.
\end{proof}

\begin{myremark}
Under assumptions (B), the Bregman projection $\boldsymbol\pi'$ does not necessarily lie within $\ri(\Pi(\p, \q))$. Hence, the geometrical interpretation in terms of a Bregman ball might break down, although the solutions are still constrained to have a small enough Bregman information above that of $\boldsymbol\pi'$.
\end{myremark}

Although Sinkhorn distances verify the triangular inequality when $\boldsymbol\gamma$ is a distance matrix, thanks to specific chain rules and information inequalities for the Bolzmann-Shannon entropy and Kullback-Leibler divergence, it is not necessarily the case for the RMD with other regularizations, even for separable regularizers. Hence, the RMD does not provide a true distance metric on $\Sigma_d$ in general even if $\boldsymbol\gamma$ is a distance matrix. Nonetheless, the RMD is symmetric as soon as $\phi$ is invariant by transposition, which holds for separable regularizers $\phi_{ij} = \phi$, and $\boldsymbol\gamma$ is symmetric. We now study some properties of the RMD that hold for general regularizers.

\begin{property}
The primal rot mover's distance $d_{\boldsymbol\gamma, \alpha, \phi}'(\p, \q)$ is a decreasing convex and continuous function of $\alpha$.
\end{property}

\begin{proof}
The fact that it is decreasing is a direct consequence of the regularized transport polytope $\Pi_{\alpha, \phi}(\p, \q)$ growing with $\alpha$. The convexity can be proved as follows. Let $\alpha_0, \alpha_1 \geq 0$, and $0 < \lambda < 1$. We pose $\alpha_\lambda = (1 - \lambda) \alpha_0 + \lambda \alpha_1 \geq 0$. We also choose arbitrary rot mover's plans ${\boldsymbol\pi'}\vphantom{\boldsymbol\pi}_{\alpha_0}^\star, {\boldsymbol\pi'}\vphantom{\boldsymbol\pi}_{\alpha_1}^\star, {\boldsymbol\pi'}\vphantom{\boldsymbol\pi}_{\alpha_\lambda}^\star$. We finally pose $\boldsymbol\pi_\lambda = (1 - \lambda) {\boldsymbol\pi'}\vphantom{\boldsymbol\pi}_{\alpha_0}^\star + \lambda {\boldsymbol\pi'}\vphantom{\boldsymbol\pi}_{\alpha_1}^\star$. By convexity of $\phi$, we have:
\begin{align}
\phi(\boldsymbol\pi_\lambda)	& \leq (1 - \lambda) \phi({\boldsymbol\pi'}\vphantom{\boldsymbol\pi}_{\alpha_0}^\star) + \lambda \phi({\boldsymbol\pi'}\vphantom{\boldsymbol\pi}_{\alpha_1}^\star)\\
						& \leq (1 - \lambda) (\alpha_0 + \phi(\boldsymbol\pi')) + \lambda (\alpha_1 + \phi(\boldsymbol\pi'))\\
						& = \alpha_\lambda + \phi(\boldsymbol\pi') \enspace.
\end{align}
Hence, $\boldsymbol\pi_\lambda \in \Pi_{\alpha_\lambda, \phi}(\p, \q)$, and by construction we have $\langle {\boldsymbol\pi'}\vphantom{\boldsymbol\pi}_{\alpha_\lambda}^\star, \boldsymbol\gamma \rangle \leq \langle \boldsymbol\pi_\lambda, \boldsymbol\gamma \rangle$, or equivalently:
\begin{equation}
\langle {\boldsymbol\pi'}\vphantom{\boldsymbol\pi}_{\alpha_\lambda}^\star, \boldsymbol\gamma \rangle \leq (1 - \lambda) \langle {\boldsymbol\pi'}\vphantom{\boldsymbol\pi}_{\alpha_0}^\star, \boldsymbol\gamma \rangle + \lambda \langle {\boldsymbol\pi'}\vphantom{\boldsymbol\pi}_{\alpha_1}^\star, \boldsymbol\gamma \rangle \enspace.
\end{equation}
The continuity for $\alpha > 0$ is a direct consequence of convexity for $\alpha > 0$, since a convex function is always continuous on the relative interior of its domain. Lastly, the continuity at $\alpha = 0$ can be seen as follows. Let ${(\alpha_k)}_{k \in \N}$ be a sequence of positive numbers that converges to $0$. We choose arbitrary rot mover's plans ${({\boldsymbol\pi'}\vphantom{\boldsymbol\pi}_{\alpha_k}^\star)}_{k \in \N}$. By compactness of $\Pi(\p, \q)$, we can extract a subsequence of rot mover's plans that converges in norm to a point ${\boldsymbol\pi'}\vphantom{\boldsymbol\pi}^\star \in \Pi(\p, \q)$. For the sake of simplicity, we do not relabel this subsequence. By construction, we have $\phi(\boldsymbol\pi') \leq \phi({\boldsymbol\pi'}\vphantom{\boldsymbol\pi}_{\alpha_k}^\star) \leq \phi(\boldsymbol\pi') + \alpha_k$, and $\phi({\boldsymbol\pi'}\vphantom{\boldsymbol\pi}_{\alpha_k}^\star)$ converges to $\phi(\boldsymbol\pi')$. By lower semi-continuity of $\phi$, we thus have $\phi({\boldsymbol\pi'}\vphantom{\boldsymbol\pi}^\star) \leq \phi(\boldsymbol\pi')$. Since the global minimum of $\phi$ on $\Pi(\p, \q)$ is attained uniquely at $\boldsymbol\pi'$, we must have ${\boldsymbol\pi'}\vphantom{\boldsymbol\pi}^\star = \boldsymbol\pi'$, and the original sequence also converges in norm to $\boldsymbol\pi'$. By continuity of the total cost $\langle \cdot, \boldsymbol\gamma \rangle$ on $\R^{d \times d}$, $\langle {\boldsymbol\pi'}\vphantom{\boldsymbol\pi}_{\alpha_k}^\star, \boldsymbol\gamma \rangle$ converges to $\langle \boldsymbol\pi', \boldsymbol\gamma \rangle$. Hence, the limit of the RMD when $\alpha$ tends to $0$ from above is $\langle \boldsymbol\pi', \boldsymbol\gamma \rangle$, which equals the RMD for $\alpha = 0$ as shown in the next property.
\end{proof}

\begin{property}
When $\alpha = 0$, the primal rot mover's distance reduces to:
\begin{equation}
d_{\boldsymbol\gamma, 0, \phi}'(\p, \q) = \langle \boldsymbol\pi', \boldsymbol\gamma \rangle \enspace,
\end{equation}
and the unique primal rot mover's plan is the transport plan with minimal Bregman information:
\begin{equation}
{\boldsymbol\pi'}\vphantom{\boldsymbol\pi}_0^\star = \boldsymbol\pi' \enspace.
\end{equation}
\end{property}

\begin{proof}
Since $\boldsymbol\pi'$ is the unique global minimizer of $\phi$ on $\Pi(\p, \q)$, the regularized transport polytope reduces to the singleton $\Pi_{0, \phi}(\p, \q) = \{\boldsymbol\pi \in \Pi(\p, \q) \colon \phi(\boldsymbol\pi) \leq \phi(\boldsymbol\pi')\} = \{\boldsymbol\pi'\}$. The property follows immediately.
\end{proof}

\begin{property}
When $\alpha$ tends to $+\infty$, the primal rot mover's distance converges to the earth mover's distance:
\begin{equation}
\lim_{\alpha \to +\infty} d_{\boldsymbol\gamma, \alpha, \phi}'(\p, \q) = d_{\boldsymbol\gamma}(\p, \q) \enspace.
\end{equation}
\end{property}

\begin{proof}
Let $\boldsymbol\pi^\star \in \Pi(\p, \q)$ be an earth mover's plan so that $d_{\boldsymbol\gamma}(\p, \q) = \langle \boldsymbol\pi^\star, \boldsymbol\gamma \rangle$. By continuity of the total cost $\langle \cdot, \boldsymbol\gamma \rangle$ on $\R^{d \times d}$, we have that for all $\epsilon > 0$, there exists an open neighborhood of $\boldsymbol\pi^\star$ such that $\langle \boldsymbol\pi, \boldsymbol\gamma \rangle \leq \langle \boldsymbol\pi^\star, \boldsymbol\gamma \rangle + \epsilon$ for any transport plan $\boldsymbol\pi$ within this neighborhood. We can always choose a transport plan such that $\boldsymbol\pi \in \ri(\Pi(\p, \q))$. Since $\ri(\Pi(\p, \q)) \subset \dom \phi$, $\phi(\boldsymbol\pi)$ is finite and we can fix $\alpha_\epsilon = \phi(\boldsymbol\pi) - \phi(\boldsymbol\pi') \geq 0$. Hence, $\boldsymbol\pi \in \Pi_{\alpha, \phi}(\p, \q)$ for any $\alpha \geq \alpha_\epsilon$, and we have $d_{\boldsymbol\gamma}(\p, \q) \leq d_{\boldsymbol\gamma, \alpha, \phi}'(\p, \q) \leq \langle \boldsymbol\pi, \boldsymbol\gamma \rangle \leq d_{\boldsymbol\gamma}(\p, \q) + \epsilon$. 
\end{proof}

\begin{property}
If ${[0, 1)}^{d \times d} \subseteq \dom \phi$, then there exists a minimal $\alpha' \geq 0$ such that for all $\alpha \geq \alpha'$, the primal rot mover's distance reduces to the earth mover's distance:
\begin{equation}
d_{\boldsymbol\gamma, \alpha, \phi}'(\p, \q) = d_{\boldsymbol\gamma}(\p, \q) \enspace.
\end{equation}
\end{property}

\begin{proof}
The extra condition guarantees that $\Pi(\p, \q) \subset \dom \phi$, and thus that $\phi$ is bounded on the closed set $\Pi(\p, \q)$. The property is then a direct consequence of $\Pi_{\alpha, \phi}(\p, \q) = \Pi(\p, \q)$ for $\alpha$ large enough.
\end{proof}

\begin{property}
If ${[0, 1)}^{d \times d} \subseteq \dom \phi$ and $\phi$ is strictly convex on ${[0, 1)}^{d \times d}$, then the unique primal rot mover's plan for $\alpha = \alpha'$ is the earth mover's plan $\boldsymbol\pi_0^\star$ with minimal Bregman information:
\begin{equation}
{\boldsymbol\pi'}\vphantom{\boldsymbol\pi}_{\alpha'}^\star = \boldsymbol\pi_0^\star \enspace.
\end{equation}
\end{property}

\begin{proof}
First, we recall that the set of earth mover's plans $\boldsymbol\pi^\star$ is either a single vertex or a whole facet of $\Pi(\p, \q)$. Hence, it forms a closed convex subset in $\Pi(\p, \q)$, and there is a unique earth mover's plan $\boldsymbol\pi_0^\star$ with minimal Bregman information by strict convexity of $\phi$ on this subset. Second, it is trivial that all primal rot mover's plan ${\boldsymbol\pi'}\vphantom{\boldsymbol\pi}_{\alpha'}^\star$ must be earth mover's plans. If there is a single vertex as earth mover's plan, then the property follows immediately. Otherwise, we can see the property geometrically as follows. The whole facet of earth mover's plans is orthogonal to $\boldsymbol\gamma$. Nevertheless, by strict convexity of $\phi$ on ${[0, 1)}^{d \times d}$, the facet must be tangent to $\Pi_{\alpha', \phi}(\p, \q)$ at the unique earth mover's plan $\boldsymbol\pi_0^\star$ with minimal Bregman information $\phi(\boldsymbol\pi_0^\star) = \phi(\boldsymbol\pi') + \alpha'$, and $\boldsymbol\pi_0^\star$ is also the rot mover's plan ${\boldsymbol\pi'}\vphantom{\boldsymbol\pi}_{\alpha'}^\star$. Another way to prove the property more formally is as follows. Suppose that a primal rot mover's plan $\boldsymbol\pi^\star$ is not the earth mover's plan with minimal Bregman information. We thus have $\phi(\boldsymbol\pi_0^\star) < \phi(\boldsymbol\pi^\star) \leq \phi(\boldsymbol\pi') + \alpha'$. We can then choose a smaller $\alpha'$ such that $\phi(\boldsymbol\pi_0^\star) \leq \phi(\boldsymbol\pi') + \alpha'$ and the RMD still equals the EMD for this smaller value, and actually all values in between by monotonicity. This leads to a contradiction and $\boldsymbol\pi_0^\star$ must be the earth mover's plan with minimal Bregman information.
\end{proof}

\begin{myremark}
When $\alpha > \alpha'$, the regularized transport polytope might grow to include several earth mover's plans with different Bregman information, which are then all minimizers for the RMD. When we do not have strict convexity outside ${(0, 1)}^{d \times d}$, there might also be multiple earth mover's plans with minimal Bregman information.
\end{myremark}

If ${[0, 1)}^{d \times d} \subseteq \dom \phi$, then it is easy to check that the strict convexity of $\phi$ on ${[0, 1)}^{d \times d}$ is always verified when $\phi$ is separable under assumptions (A) or (B), or when ${[0, 1)}^{d \times d} \subset \interior(\dom \phi)$ under assumptions (B). This holds for almost all typical regularizers, notably for all regularizers considered in this paper except from minus the Burg entropy as defined in~\eqref{eq:mbe} and associated to the Itakura-Saito divergence in~\eqref{eq:is}. For this latter regularizer, the solutions for an increasing $\alpha$ all lie within $\ri(\Pi(\p, \q))$, and the RMD never reaches the EMD. In such cases where the minimal $\alpha'$ does not exist, we can use the convention $\alpha' = +\infty$ since the RMD always converges to the EMD in the limit when $\alpha$ tends to $+\infty$. We can then prove that there is a unique rot mover's plan ${\boldsymbol\pi'}\vphantom{\boldsymbol\pi}_\alpha^\star$ as long as $0 < \alpha < \alpha'$, which can be seen informally as follows. The solutions geometrically lie at the intersection of $\Pi_{\alpha, \phi}(\p, \q)$ and of a supporting hyperplane with normal $\boldsymbol\gamma$. By strict convexity of $\phi$ on $\ri(\Pi(\p, \q))$, this intersection is a singleton inside the polytope. When the intersection reaches a facet, the only facet that can coincide locally with the hyperplane is the one that contains the earth mover's plans. Hence, we also have a singleton on the boundary of the polytope before reaching an earth mover's plan. We formally prove this uniqueness result next by exploiting duality.

\subsection{Dual Problem}
\label{subsec:dual}

We now present the following two lemmas before defining our dual formulation for the RMD.

\begin{lemma}
\label{lemma:xi}
The regularized cost $\langle \cdot, \boldsymbol\gamma \rangle + \lambda \phi(\cdot)$, where $\lambda > 0$, attains its global minimum uniquely at $\boldsymbol\xi = \nabla\psi(-\boldsymbol\gamma / \lambda)$.
\end{lemma}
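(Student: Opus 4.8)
The plan is to recognize the regularized cost as a Bregman divergence up to an additive constant, thereby reducing the minimization to the elementary fact that a Bregman divergence is non-negative and vanishes only at its second argument.

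First I would check that the candidate minimizer $\boldsymbol\xi = \nabla\psi(-\boldsymbol\gamma / \lambda)$ is well-defined and interior. Since $\boldsymbol\gamma \in \R_+^{d \times d}$ and $\lambda > 0$, the point $-\boldsymbol\gamma / \lambda$ lies in $\R_-^{d \times d}$, which by assumption (A5), respectively (B4) where $\dom \psi = \R^{d \times d}$, is contained in $\dom \psi = \dom \nabla\psi = \interior(\dom \psi)$. The homeomorphism $\nabla\phi = {\nabla\psi}^{-1}$ guaranteed by (A1), respectively (B1), then ensures that $\boldsymbol\xi \in \interior(\dom \phi)$ and, crucially, that $\nabla\phi(\boldsymbol\xi) = -\boldsymbol\gamma / \lambda$, or equivalently $\boldsymbol\gamma = -\lambda \nabla\phi(\boldsymbol\xi)$.

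Next I would substitute this identity into the cost and complete it to a Bregman divergence. Writing $g(\boldsymbol\pi) = \langle \boldsymbol\pi, \boldsymbol\gamma \rangle + \lambda \phi(\boldsymbol\pi)$ and using $\boldsymbol\gamma = -\lambda \nabla\phi(\boldsymbol\xi)$ together with the definition~\eqref{eq:bdiv} of $B_\phi$ yields
\begin{equation}
g(\boldsymbol\pi) = \lambda \left(\phi(\boldsymbol\pi) - \langle \boldsymbol\pi, \nabla\phi(\boldsymbol\xi) \rangle\right) = \lambda B_\phi(\boldsymbol\pi \Vert \boldsymbol\xi) + \lambda \left(\phi(\boldsymbol\xi) - \langle \boldsymbol\xi, \nabla\phi(\boldsymbol\xi) \rangle\right) \enspace,
\end{equation}
where the second term is a constant independent of $\boldsymbol\pi$, so that minimizing $g$ over $\dom \phi$ is equivalent to minimizing $B_\phi(\cdot \Vert \boldsymbol\xi)$.

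Finally I would conclude. Since $\boldsymbol\xi \in \interior(\dom \phi)$, the divergence $B_\phi(\boldsymbol\pi \Vert \boldsymbol\xi)$ is defined and non-negative for every $\boldsymbol\pi \in \dom \phi$, and because $\phi$ is strictly convex on $\interior(\dom \phi)$ it vanishes if and only if $\boldsymbol\pi = \boldsymbol\xi$ as recalled in Section~\ref{subsec:bregman}. As $\lambda > 0$ and the remaining term is constant, $g$ therefore attains its global minimum uniquely at $\boldsymbol\xi$. The only delicate point is the first step: the entire argument hinges on $-\boldsymbol\gamma / \lambda$ belonging to $\dom \nabla\psi$, which is precisely what assumptions (A5) and (B4) were designed to secure; the algebraic rearrangement and the concluding non-negativity argument are then routine.
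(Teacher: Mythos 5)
Your proof is correct, but it takes a different route from the paper's. The paper argues forward from first-order optimality: it notes that the regularized cost is strictly convex on $\interior(\dom \phi)$ and characterizes the (interior) minimizer by the stationarity condition $\boldsymbol\gamma + \lambda \nabla\phi(\boldsymbol\xi) = \zeros$, then inverts via $\nabla\psi$ to get the closed form. You instead work backward from the candidate $\boldsymbol\xi$, complete the cost to $\lambda B_\phi(\cdot \Vert \boldsymbol\xi)$ plus a constant, and invoke non-negativity of the Bregman divergence with equality only on the diagonal. Your algebra checks out, and the identification of the cost with a Bregman divergence is precisely the device the paper itself deploys immediately afterward in Lemma~\ref{lemma:dual_rest_min} and Proposition~\ref{prop:dbproj}, so your treatment unifies the two lemmas under one computation. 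A modest technical advantage of your version is that the inequality $B_\phi(\boldsymbol\pi \Vert \boldsymbol\xi) \geq 0$ holds for every $\boldsymbol\pi \in \dom \phi$, boundary points included, so you never need to rule out a minimizer on $\bd(\dom \phi) \cap \dom \phi$; the paper's stationarity argument implicitly relies on essential smoothness of the Legendre-type $\phi$ to exclude that case. What the paper's approach buys in exchange is that it derives the formula for $\boldsymbol\xi$ rather than merely verifying it, which is a more natural way to discover the statement in the first place.
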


\begin{proof}
The regularized cost is convex with same domain as $\phi$, and is strictly convex on $\interior(\dom \phi)$. Thus, it attains its global minimum at a unique point $\boldsymbol\xi \in \interior(\dom \phi)$ if and only if $\boldsymbol\gamma + \lambda \nabla\phi(\boldsymbol\xi) = \zeros$, or equivalently $\nabla\phi(\boldsymbol\xi) = -\boldsymbol\gamma / \lambda$. By assumptions (A4) and (A5), respectively (B4), $-\boldsymbol\gamma / \lambda \in \dom \nabla\psi$, so that the global minimum is attained uniquely at $\boldsymbol\xi = \nabla\psi(-\boldsymbol\gamma / \lambda)$ in virtue of the homeomorphism in~\eqref{eq:primalparam} and~\eqref{eq:dualparam}.
\end{proof}

\begin{lemma}
\label{lemma:dual_rest_min}
The restriction of the regularized cost $\langle \cdot, \boldsymbol\gamma \rangle + \lambda \phi(\cdot)$ to the transport polytope $\Pi(\p, \q)$ attains its global minimum uniquely.
\end{lemma}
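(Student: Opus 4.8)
The plan is to recognize that, up to an additive constant, the regularized cost restricted to $\Pi(\p, \q)$ is exactly $\lambda$ times a Bregman divergence from the fixed point $\boldsymbol\xi = \nabla\psi(-\boldsymbol\gamma / \lambda)$ produced by Lemma~\ref{lemma:xi}. Minimizing over the transport polytope then becomes a Bregman projection, whose existence and uniqueness were already established in Section~\ref{subsec:bregman}, so both claims follow at once.

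Concretely, I would first invoke Lemma~\ref{lemma:xi} to obtain $\boldsymbol\xi \in \interior(\dom \phi)$ with $\nabla\phi(\boldsymbol\xi) = -\boldsymbol\gamma / \lambda$. Expanding the Bregman divergence from its definition~\eqref{eq:bdiv} and substituting this gradient gives
\[
\lambda B_\phi(\boldsymbol\pi \Vert \boldsymbol\xi) = \langle \boldsymbol\pi, \boldsymbol\gamma \rangle + \lambda \phi(\boldsymbol\pi) - \bigl( \langle \boldsymbol\xi, \boldsymbol\gamma \rangle + \lambda \phi(\boldsymbol\xi) \bigr) \enspace.
\]
The bracketed term is a finite constant independent of $\boldsymbol\pi$, since $\boldsymbol\xi \in \interior(\dom \phi)$ makes both $\phi(\boldsymbol\xi)$ and $\langle \boldsymbol\xi, \boldsymbol\gamma \rangle$ finite. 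Hence the maps $\boldsymbol\pi \mapsto \langle \boldsymbol\pi, \boldsymbol\gamma \rangle + \lambda \phi(\boldsymbol\pi)$ and $\boldsymbol\pi \mapsto B_\phi(\boldsymbol\pi \Vert \boldsymbol\xi)$ differ by an affine rescaling with positive factor $\lambda$, and therefore share exactly the same minimizers over $\Pi(\p, \q)$.

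It then remains to observe that minimizing $B_\phi(\cdot \Vert \boldsymbol\xi)$ over $\Pi(\p, \q)$ is precisely the computation of the Bregman projection $P_{\Pi(\p, \q)}(\boldsymbol\xi)$. Since $\phi$ is of Legendre type by (A1), respectively (B1), the set $\Pi(\p, \q)$ is closed and convex, $\boldsymbol\xi \in \interior(\dom \phi)$, and the constraint qualification $\Pi(\p, \q) \cap \interior(\dom \phi) \neq \emptyset$ is guaranteed by (A2), respectively (B2), the Bregman projection is well-defined and unique. Consequently, the restriction of the regularized cost to $\Pi(\p, \q)$ attains its global minimum uniquely, at that projection point.

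The only real content here is spotting the completion-of-the-divergence identity; once it is in place, existence and uniqueness are inherited directly from the Bregman projection theory of Section~\ref{subsec:bregman}, so there is no substantial obstacle. If one preferred an elementary route, existence would instead follow from lower semi-continuity of the closed proper convex cost on the compact polytope $\Pi(\p, \q)$, whose intersection with $\dom \phi$ is nonempty by (A2), respectively (B2), and uniqueness from strict convexity of $\phi$ on $\interior(\dom \phi)$; but care would then be needed to exclude a pair of distinct boundary minimizers, a difficulty the Bregman-projection argument sidesteps entirely.
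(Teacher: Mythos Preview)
Your proof is correct and follows essentially the same approach as the paper: both rewrite the regularized cost as $\lambda B_\phi(\boldsymbol\pi \Vert \boldsymbol\xi)$ plus a constant, then invoke existence and uniqueness of the Bregman projection of $\boldsymbol\xi$ onto $\Pi(\p,\q)$. Your identity is in fact slightly more carefully stated than the paper's displayed equation~\eqref{eq:lemmaeq}, which omits the constant $\langle \boldsymbol\xi, \boldsymbol\gamma \rangle$, though this has no effect on the argument.
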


\begin{proof}
We notice that the regularized cost is equal to a Bregman divergence up to a positive factor and additive constant:
\begin{equation}
\label{eq:lemmaeq}
\langle \boldsymbol\pi, \boldsymbol\gamma \rangle + \lambda \phi(\boldsymbol\pi) - \lambda \phi(\boldsymbol\xi) = \lambda B_\phi(\boldsymbol\pi \Vert \boldsymbol\xi) \enspace.
\end{equation}
Hence, its minimization over the closed convex set $\Pi(\p, \q)$ is equivalent to the Bregman projection of $\boldsymbol\xi \in \interior(\dom \phi)$ onto $\Pi(\p, \q)$ according to the function $\phi$ of Legendre type. Since $\Pi(\p, \q) \cap \interior(\dom \phi) \neq \emptyset$, this projection exists and is unique.
\end{proof}

\begin{definition}
\label{def:drmd}
The dual rot mover's distance is the quantity defined as:
\begin{equation}
d_{\boldsymbol\gamma, \lambda, \phi}(\p, \q) = \langle \boldsymbol\pi_\lambda^\star, \boldsymbol\gamma \rangle \enspace,
\end{equation}
where the dual rot mover's plan $\boldsymbol\pi_\lambda^\star$ is given by:
\begin{equation}
\label{eq:drmp}
\boldsymbol\pi_\lambda^\star = \argmin_{\boldsymbol\pi \in \Pi(\p, \q)} \, \langle \boldsymbol\pi, \boldsymbol\gamma \rangle + \lambda \phi(\boldsymbol\pi) \enspace.
\end{equation}
\end{definition}

\begin{myremark}
For the sake of notation, we omit the dependence on $\p, \q, \boldsymbol\gamma, \phi$ in the index of dual rot mover's plans $\boldsymbol\pi_\lambda^\star$.
\end{myremark}

We proceed with the following proposition that enlightens the relation between the RMD and associated Bregman divergence.

\begin{proposition}
\label{prop:dbproj}
The dual rot mover's plan is the Bregman projection of $\boldsymbol\xi$ onto the transport polytope:
\begin{equation}
\label{rotbp}
\boldsymbol\pi_\lambda^\star = \argmin_{\boldsymbol\pi \in \Pi(\p, \q)} B_\phi(\boldsymbol\pi \Vert \boldsymbol\xi) \enspace.
\end{equation}
\end{proposition}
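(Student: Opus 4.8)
The plan is to recognize that this proposition is essentially a reformulation of Lemma~\ref{lemma:dual_rest_min} and requires no new machinery beyond the identity already established there. First I would recall that the dual rot mover's plan is defined in~\eqref{eq:drmp} as the minimizer of the regularized cost $\langle \boldsymbol\pi, \boldsymbol\gamma \rangle + \lambda \phi(\boldsymbol\pi)$ over the transport polytope $\Pi(\p, \q)$, and that by Lemma~\ref{lemma:xi} the point $\boldsymbol\xi = \nabla\psi(-\boldsymbol\gamma / \lambda)$ satisfies the defining relation $\nabla\phi(\boldsymbol\xi) = -\boldsymbol\gamma / \lambda$ and lies in $\interior(\dom \phi)$.

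The key step is to invoke the algebraic identity~\eqref{eq:lemmaeq}, which expresses the regularized cost as the Bregman divergence $\lambda B_\phi(\boldsymbol\pi \Vert \boldsymbol\xi)$ up to an additive constant that does not depend on $\boldsymbol\pi$. This identity rests on the relation $\nabla\phi(\boldsymbol\xi) = -\boldsymbol\gamma / \lambda$, which allows the linear term $\langle \boldsymbol\pi, \boldsymbol\gamma \rangle$ to be absorbed into the inner-product term of the Bregman divergence. From there I would argue that, since $\lambda > 0$ and the additive constant is independent of $\boldsymbol\pi$, minimizing the regularized cost over $\Pi(\p, \q)$ is equivalent to minimizing $B_\phi(\boldsymbol\pi \Vert \boldsymbol\xi)$ over the same set, so the two problems share the same solution. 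Together with $\boldsymbol\xi \in \interior(\dom \phi)$ and the constraint qualification $\Pi(\p, \q) \cap \interior(\dom \phi) \neq \emptyset$ ensured by assumption (A2) or (B2), the Bregman projection of $\boldsymbol\xi$ onto $\Pi(\p, \q)$ is well-defined and unique as recalled in Section~\ref{subsec:bregman}, and it coincides with $\boldsymbol\pi_\lambda^\star$, giving exactly~\eqref{rotbp}.

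I do not anticipate a genuine obstacle here: the content is a direct consequence of~\eqref{eq:lemmaeq} and the well-definedness of the Bregman projection. The only point requiring minor care is to confirm that the positive factor $\lambda$ and the additive constant leave the argmin invariant, and that $\boldsymbol\xi$ lies in the interior of the domain so that the projection formalism of Section~\ref{subsec:bregman} genuinely applies.
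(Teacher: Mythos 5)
Your proposal is correct and follows exactly the paper's own route: both reduce the claim to the identity~\eqref{eq:lemmaeq} established in the proof of Lemma~\ref{lemma:dual_rest_min}, noting that the positive factor $\lambda$ and the additive constant leave the argmin unchanged, so the minimizer of the regularized cost coincides with the well-defined Bregman projection of $\boldsymbol\xi$ onto $\Pi(\p, \q)$. Your added care about $\boldsymbol\xi \in \interior(\dom \phi)$ and the constraint qualification is consistent with what the paper relies on implicitly.
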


\begin{proof}
This is a consequence of the proof for Lemma~\ref{lemma:dual_rest_min}. Indeed, from the definition in \eqref{eq:drmp}, we see that the rot mover's plan also minimizes \eqref{eq:lemmaeq}. Therefore, it is the unique Bregman projection of $\boldsymbol\xi$ onto the transport polytope.
\end{proof}

We have a geometrical interpretation where the regularization shrinks the solution toward the matrix $\boldsymbol\xi'$ that has minimal Bregman information.

\begin{proposition}
\label{prop:dbigbball}
The dual rot mover's plan $\boldsymbol\pi_\lambda^\star$ can be obtained as:
\begin{equation}
\label{eq:drmpalt}
\boldsymbol\pi_\lambda^\star = \argmin_{\boldsymbol\pi \in \Pi(\p, \q)} \, \langle \boldsymbol\pi, \boldsymbol\gamma \rangle + \lambda B_\phi(\boldsymbol\pi \Vert \boldsymbol\xi') \enspace.
\end{equation}
\end{proposition}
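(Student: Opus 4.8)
The plan is to show that the objective in~\eqref{eq:drmpalt} differs from the one in the definition~\eqref{eq:drmp} of the dual rot mover's plan only by an additive constant independent of $\boldsymbol\pi$, so that the two problems share the same unique minimizer over $\Pi(\p, \q)$.

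First I would invoke Lemma~\ref{lemma:primal_glob_min}, which guarantees that $\nabla\phi(\boldsymbol\xi') = \zeros$, since $\boldsymbol\xi' = \nabla\psi(\zeros)$ and $\nabla\phi = {\nabla\psi}^{-1}$. Expanding the Bregman divergence from its definition~\eqref{eq:bdiv},
\begin{equation}
B_\phi(\boldsymbol\pi \Vert \boldsymbol\xi') = \phi(\boldsymbol\pi) - \phi(\boldsymbol\xi') - \langle \boldsymbol\pi - \boldsymbol\xi', \nabla\phi(\boldsymbol\xi') \rangle \enspace,
\end{equation}
the scalar-product term vanishes, leaving $B_\phi(\boldsymbol\pi \Vert \boldsymbol\xi') = \phi(\boldsymbol\pi) - \phi(\boldsymbol\xi')$. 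Multiplying by $\lambda > 0$ and adding the linear cost then yields
\begin{equation}
\langle \boldsymbol\pi, \boldsymbol\gamma \rangle + \lambda B_\phi(\boldsymbol\pi \Vert \boldsymbol\xi') = \langle \boldsymbol\pi, \boldsymbol\gamma \rangle + \lambda \phi(\boldsymbol\pi) - \lambda \phi(\boldsymbol\xi') \enspace,
\end{equation}
which is exactly the regularized cost minimized in~\eqref{eq:drmp} shifted by the constant $-\lambda \phi(\boldsymbol\xi')$. Since $\boldsymbol\xi' \in \interior(\dom \phi)$, this constant is finite, so it does not affect the set of minimizers, and the two argmin problems coincide.

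Finally I would note that the minimizer is well-defined and unique: Lemma~\ref{lemma:dual_rest_min} already established that the regularized cost attains its global minimum uniquely on $\Pi(\p, \q)$, and the constant shift simply transfers this conclusion to~\eqref{eq:drmpalt}. There is no genuine obstacle here beyond recognizing that $\nabla\phi(\boldsymbol\xi') = \zeros$ is precisely what makes the Bregman term collapse to $\phi$ up to an additive constant; this mirrors the computation already carried out in the proof of Proposition~\ref{prop:bigbball}.
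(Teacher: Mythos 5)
Your proof is correct and follows essentially the same route as the paper's: expand $B_\phi(\boldsymbol\pi \Vert \boldsymbol\xi')$ from the definition, use $\nabla\phi(\boldsymbol\xi') = \zeros$ (Lemma~\ref{lemma:primal_glob_min}) to kill the inner-product term, and observe that the objective in~\eqref{eq:drmpalt} differs from that in~\eqref{eq:drmp} only by the constant $-\lambda\phi(\boldsymbol\xi')$. The added remark on uniqueness via Lemma~\ref{lemma:dual_rest_min} is a harmless bonus the paper leaves implicit.
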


\begin{proof}
Developing the Bregman divergence based on its definition~\eqref{eq:bdiv}, we have:
\begin{equation}
B_\phi(\boldsymbol\pi \Vert \boldsymbol\xi') = \phi(\boldsymbol\pi) - \phi(\boldsymbol\xi') - \langle \boldsymbol\pi - \boldsymbol\xi', \nabla\phi(\boldsymbol\xi') \rangle \enspace.
\end{equation}
Since $\nabla\phi(\boldsymbol\xi') = \zeros$, the last term with scalar product vanishes and we are left out with $\phi(\boldsymbol\pi)$ plus a constant term with respect to $\boldsymbol\pi$. Hence, we can replace $\phi(\boldsymbol\pi)$ by $B_\phi(\boldsymbol\pi \Vert \boldsymbol\xi')$ in the minimization~\eqref{eq:drmp} that defines $\boldsymbol\pi_\lambda^\star$.
\end{proof}

Under some additional conditions, this interpretation can also be seen as shrinking toward the transport plan $\boldsymbol\pi'$ with minimal Bregman information.

\begin{proposition}
\label{prop:dbball}
If $\boldsymbol\pi' \in \ri(\Pi(\p, \q))$, then the dual rot mover's plan $\boldsymbol\pi_\lambda^\star$ can be obtained as:
\begin{equation}
\boldsymbol\pi_\lambda^\star = \argmin_{\boldsymbol\pi \in \Pi(\p, \q)} \, \langle \boldsymbol\pi, \boldsymbol\gamma \rangle + \lambda B_\phi(\boldsymbol\pi \Vert \boldsymbol\pi') \enspace.
\end{equation}
\end{proposition}

\begin{proof}
If $\boldsymbol\pi' \in \ri(\Pi(\p, \q))$, then we have equality in the generalized Pythagorean theorem~\eqref{eq:pyth}, leading to:
\begin{equation}
B_\phi(\boldsymbol\pi \Vert \boldsymbol\xi') = B_\phi(\boldsymbol\pi \Vert \boldsymbol\pi') + B_\phi(\boldsymbol\pi' \Vert \boldsymbol\xi')\enspace.
\end{equation}
Since the last term is constant with respect to $\boldsymbol\pi$, we can replace $B_\phi(\boldsymbol\pi \Vert \boldsymbol\xi')$ by $B_\phi(\boldsymbol\pi \Vert \boldsymbol\pi')$ in the minimization~\eqref{eq:drmpalt} that characterizes $\boldsymbol\pi_\lambda^\star$.
\end{proof}

\begin{myremark}
The proposition also holds trivially when the global minimum is attained on the transport polytope, that is, when $\boldsymbol\xi' = \boldsymbol\pi'$.
\end{myremark}

\begin{corollary}
\label{corol:dbball}
Under assumptions (A), the dual rot mover's plan $\boldsymbol\pi_\lambda^\star$ can be obtained as:
\begin{equation}
\boldsymbol\pi_\lambda^\star = \argmin_{\boldsymbol\pi \in \Pi(\p, \q)} \, \langle \boldsymbol\pi, \boldsymbol\gamma \rangle + \lambda B_\phi(\boldsymbol\pi \Vert \boldsymbol\pi') \enspace.
\end{equation}
\end{corollary}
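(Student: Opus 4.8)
The plan is to observe that this corollary is the exact dual analogue of Corollary~\ref{corol:bball}, and that Proposition~\ref{prop:dbball} already delivers the stated characterization of $\boldsymbol\pi_\lambda^\star$ as soon as its hypothesis $\boldsymbol\pi' \in \ri(\Pi(\p, \q))$ is met. So the whole task reduces to verifying that, under assumptions (A), the Bregman projection $\boldsymbol\pi'$ of $\boldsymbol\xi'$ onto the transport polytope lands in the relative interior of $\Pi(\p, \q)$.

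First I would recall that the Bregman projection $\boldsymbol\pi'$, by the general properties of Bregman projections from Section~\ref{subsec:bregman}, always belongs to $\Pi(\p, \q) \cap \interior(\dom \phi)$. Second, I would reuse the identification already established in the proof of Corollary~\ref{corol:bball}: under assumption (A3) we have $\dom \phi \subseteq \R_+^{d \times d}$, hence $\interior(\dom \phi) \subseteq \R_{++}^{d \times d}$, and consequently $\Pi(\p, \q) \cap \interior(\dom \phi) = \ri(\Pi(\p, \q))$. The nontrivial inclusion there is that any coupling with all-positive entries can be pushed slightly past an arbitrary coupling while remaining in $\Pi(\p, \q)$, which is exactly the characterization of the relative interior in~\eqref{eq:ri}. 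Combining the two facts yields $\boldsymbol\pi' \in \ri(\Pi(\p, \q))$.

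With this hypothesis verified, I would simply invoke Proposition~\ref{prop:dbball} to conclude that $\boldsymbol\pi_\lambda^\star = \argmin_{\boldsymbol\pi \in \Pi(\p, \q)} \langle \boldsymbol\pi, \boldsymbol\gamma \rangle + \lambda B_\phi(\boldsymbol\pi \Vert \boldsymbol\pi')$, which is the claimed identity. There is no genuine obstacle here: the only substantive ingredient is the relative-interior identity, and that has already been proven for the primal corollary, so the argument is essentially a one-line reduction. The one point to be careful about is to confirm that both the equality $\Pi(\p, \q) \cap \interior(\dom \phi) = \ri(\Pi(\p, \q))$ and the membership of the Bregman projection in $\Pi(\p, \q) \cap \interior(\dom \phi)$ are stated independently of whether we solve the primal or the dual regularized problem, so that the same $\boldsymbol\pi'$ legitimately serves both corollaries.
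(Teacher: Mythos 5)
Your proposal is correct and follows exactly the paper's own route: establish $\boldsymbol\pi' \in \Pi(\p, \q) \cap \interior(\dom \phi) = \ri(\Pi(\p, \q))$ under assumption (A3) by reusing the argument from the proof of Corollary~\ref{corol:bball}, then invoke Proposition~\ref{prop:dbball}. No gaps; the additional remark that $\boldsymbol\pi'$ is the same object in both the primal and dual corollaries is a sensible sanity check but not needed beyond what the paper already states.
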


\begin{proof}
This is a result of $\boldsymbol\pi' \in \Pi(\p, \q) \cap \interior(\dom \phi) = \ri(\Pi(\p, \q))$ when $\dom \phi \subseteq \R_+^{d \times d}$, as shown in the proof of Corollary~\ref{corol:bball}.
\end{proof}

In the sequel, we also extend naturally the definition of the dual RMD for $\lambda = 0$ as the EMD. We then do not necessarily have uniqueness of dual rot mover's plans for $\lambda = 0$, and the geometrical interpretation in terms of a Bregman projection does not hold anymore for $\lambda = 0$. However, we have the following theorem based on duality theory that shows the equivalence between primal and dual ROT problems.

\begin{theorem}
\label{thm:duality}
For all $\alpha > 0$, there exists $\lambda \geq 0$ such that the primal and dual rot mover's distances are equal:
\begin{equation}
d_{\boldsymbol\gamma, \alpha, \phi}'(\p, \q) = d_{\boldsymbol\gamma, \lambda, \phi}(\p, \q) \enspace.
\end{equation}
Moreover, if $\alpha < \alpha'$, then a corresponding value is such that $\lambda > 0$, and the primal and dual rot mover's plans are unique and equal:
\begin{equation}
{\boldsymbol\pi'}\vphantom{\boldsymbol\pi}_\alpha^\star = \boldsymbol\pi_\lambda^\star \enspace.
\end{equation}
\end{theorem}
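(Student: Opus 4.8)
The plan is to establish the duality between the primal problem, which minimizes a linear cost over a Bregman-ball constraint, and the dual problem, which minimizes a Lagrangian-penalized cost, by invoking standard convex duality (Lagrangian) theory together with the structural results already proved about the regularized transport polytope. I would proceed in three stages: first exhibit the Lagrangian correspondence between a penalty $\lambda$ and a radius $\alpha$; second verify that strong duality holds with no duality gap so that the optimal values coincide; and third upgrade the equality of values to equality of plans in the regime $\alpha < \alpha'$, where uniqueness is available.

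For the first stage, I would fix $\alpha \in (0, \alpha')$ and consider the primal problem from Definition~\ref{def:prmd}, which by Corollary~\ref{corol:bball} (under assumptions (A)) or Proposition~\ref{prop:bigbball} (in general) amounts to minimizing the linear cost $\langle \boldsymbol\pi, \boldsymbol\gamma \rangle$ over $\Pi(\p, \q)$ subject to the single convex inequality constraint $\phi(\boldsymbol\pi) \leq \phi(\boldsymbol\pi') + \alpha$. Since $\phi$ is closed proper convex and $\Pi(\p, \q)$ is a compact polytope meeting $\interior(\dom\phi)$, Slater's condition holds as soon as $\alpha > 0$: the transport plan $\boldsymbol\pi'$ with minimal Bregman information satisfies $\phi(\boldsymbol\pi') < \phi(\boldsymbol\pi') + \alpha$ strictly, giving a feasible point in the relative interior of the constraint. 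Slater's condition then guarantees existence of a Lagrange multiplier $\lambda \geq 0$ and zero duality gap, so the primal optimum equals
\begin{equation}
\min_{\boldsymbol\pi \in \Pi(\p, \q)} \langle \boldsymbol\pi, \boldsymbol\gamma \rangle + \lambda\bigl(\phi(\boldsymbol\pi) - \phi(\boldsymbol\pi') - \alpha\bigr) \enspace.
\end{equation}
Dropping the constant terms $-\lambda\phi(\boldsymbol\pi') - \lambda\alpha$ leaves exactly the dual minimization~\eqref{eq:drmp} defining $\boldsymbol\pi_\lambda^\star$, so the two optimal values agree and the first displayed equality follows. The complementary slackness condition $\lambda(\phi(\boldsymbol\pi_\lambda^\star) - \phi(\boldsymbol\pi') - \alpha) = 0$ will be the key to pinning down the radius.

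The second stage, proving $\lambda > 0$ when $\alpha < \alpha'$, is where I expect the main obstacle. If $\lambda = 0$, then the dual problem reduces to minimizing the unregularized cost $\langle \boldsymbol\pi, \boldsymbol\gamma \rangle$ over $\Pi(\p, \q)$, whose minimizers are precisely the earth mover's plans; but an earth mover's plan has Bregman information at least $\phi(\boldsymbol\pi') + \alpha'$ by the very definition of $\alpha'$ as the minimal radius reaching the EMD. Since $\alpha < \alpha'$, no earth mover's plan is feasible for the primal constraint, so the primal constraint must be active at the optimum, forcing $\phi(\boldsymbol\pi_\lambda^\star) = \phi(\boldsymbol\pi') + \alpha$ and hence $\lambda > 0$ by complementary slackness. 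I would need to argue this carefully in the degenerate case $\alpha' = +\infty$ (as occurs for the Burg entropy), where the same reasoning applies since the constraint is always active for finite $\alpha$.

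For the third stage, once $\lambda > 0$ is established, uniqueness of both plans follows from the strict convexity structure. The dual plan $\boldsymbol\pi_\lambda^\star$ is unique by Proposition~\ref{prop:dbproj}, being the Bregman projection of $\boldsymbol\xi = \nabla\psi(-\boldsymbol\gamma/\lambda)$ onto $\Pi(\p, \q)$, which is well-defined and unique by Lemma~\ref{lemma:dual_rest_min}. This same $\boldsymbol\pi_\lambda^\star$ attains the primal optimum and is feasible with the constraint active, so it is a primal rot mover's plan; uniqueness of the primal plan then follows because any primal minimizer must achieve the minimal cost while lying on the active Bregman sphere, and strict convexity of $\phi$ on $\interior(\dom\phi)$ combined with the strict convexity of the penalized objective $\langle \cdot, \boldsymbol\gamma \rangle + \lambda\phi(\cdot)$ forces it to coincide with $\boldsymbol\pi_\lambda^\star$. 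This yields ${\boldsymbol\pi'}\vphantom{\boldsymbol\pi}_\alpha^\star = \boldsymbol\pi_\lambda^\star$ and completes the proof.
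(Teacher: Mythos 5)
Your proposal follows essentially the same route as the paper's proof: set up the Lagrangian for the constrained primal problem, verify Slater's condition at the strictly feasible point $\boldsymbol\pi'$ for $\alpha > 0$, invoke strong duality to obtain a multiplier $\lambda^\star \geq 0$ with zero gap, argue $\lambda^\star > 0$ for $\alpha < \alpha'$ because the primal value has not yet reached the EMD, and transfer uniqueness from the dual side where the solution is a Bregman projection. One step needs repair: the constraint being active at the optimum does not give $\lambda > 0$ ``by complementary slackness'' --- that implication runs the other way ($\lambda > 0$ forces activeness, while an active constraint is compatible with $\lambda = 0$). The correct argument is already contained in the preceding clause of your sentence: if $\lambda^\star = 0$, the Lagrangian minimizers over $\Pi(\p,\q)$ are exactly the earth mover's plans, every one of which has $\phi(\boldsymbol\pi^\star) \geq \phi(\boldsymbol\pi') + \alpha'$ and is therefore infeasible when $\alpha < \alpha'$, contradicting the fact that primal optima minimize the Lagrangian; this is precisely the paper's observation that the RMD has not reached the EMD yet. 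With that phrasing fixed, the argument is sound and matches the paper's.
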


\begin{proof}
The primal problem can be seen as the minimization $p^\star$ of the cost $\langle \boldsymbol\pi, \boldsymbol\gamma \rangle$ on $\Pi(\p, \q)$ subject to $\phi(\boldsymbol\pi) - \phi(\boldsymbol\pi') - \alpha \leq 0$. The domain of this constrained convex problem is $\D = \Pi(\p, \q) \cap \dom \phi \neq \emptyset$. The Lagrangian on $\D \times \R$ is given by $\L(\boldsymbol\pi, \lambda) = \langle \boldsymbol\pi, \boldsymbol\gamma \rangle + \lambda (\phi(\boldsymbol\pi) - \phi(\boldsymbol\pi') - \alpha)$, and its minimization over $\D$ for a fixed $\lambda \geq 0$ has the same solutions $\boldsymbol\pi^\star$ as the dual problem. In addition, Slater's condition for convex problems, stating that there is a strictly feasible point in the relative interior of the domain, is verified as long as $\alpha > 0$. Indeed, we have $\ri(\D) = \ri(\Pi(\p, \q))$. The existence of a strictly feasible point $\phi(\boldsymbol\pi) < \phi(\boldsymbol\pi') + \alpha$ then holds by continuity of $\phi$ at $\boldsymbol\pi' \in \interior(\dom \phi)$. As a result, we have strong duality with a zero duality gap $p^\star = d^\star$, where $d^\star$ is the maximization of $g(\lambda)$ subject to $\lambda \geq 0$. Moreover, if $d^\star$ is finite, then it is attained at least once at a point $\lambda^\star$. This is the case since we already know that $p^\star$ is finite. Since $p^\star$ is also attained at least once at a point $\boldsymbol\pi^\star$ solution of the primal problem, we have the following chain:
\begin{align}
p^\star 	& = d^\star\\
		& = \min_{\boldsymbol\pi \in \D} \L(\boldsymbol\pi, \lambda^\star)\\
		& \leq \L(\boldsymbol\pi^\star, \lambda^\star)\\
		& = \langle \boldsymbol\pi^\star, \boldsymbol\gamma \rangle + \lambda^\star (\phi(\boldsymbol\pi^\star) - \phi(\boldsymbol\pi') - \alpha)\\
		& \leq \langle \boldsymbol\pi^\star, \boldsymbol\gamma \rangle\\
		& = p^\star \enspace.
\end{align}
Therefore, all inequalities are in fact equalities, $\boldsymbol\pi^\star$ also minimizes the Lagrangian over $\D$ and thus is a solution of the dual problem. In other words, the primal and dual RMD for $\alpha$ and $\lambda^\star$ are equal, and the primal solutions must be dual solutions too. For $\alpha < \alpha'$, the RMD has not reached the EMD yet, and thus we must have $\lambda^\star > 0$. Hence, the dual solution is unique, so that the primal solution is unique too and equal to the dual one.
\end{proof}

\begin{myremark}
Corresponding values of $\alpha$ and $\lambda$ depend on $\p, \q, \boldsymbol\gamma, \phi$. In addition, there might be multiple values of $\lambda$ that correspond to a given $\alpha$.
\end{myremark}

Again, the RMD does not verify the triangular inequality in general, and hence does not provide a true distance metric on $\Sigma_d$ even if $\boldsymbol\gamma$ is a distance matrix. Nevertheless, we still have the result that the RMD is symmetric as soon as $\phi$ is invariant by transposition, which holds for separable regularizers $\phi_{ij} = \phi$, and $\boldsymbol\gamma$ is symmetric. We also obtain properties for the dual RMD that are similar to the ones for the primal RMD.

\begin{property}
The dual rot mover's distance $d_{\boldsymbol\gamma, \lambda, \phi}(\p, \q)$ is an increasing and continuous function of $\lambda$.
\end{property}

\begin{proof}
The fact that it is increasing can be seen as follows. Let $0 \leq \lambda_1 < \lambda_2$. By construction, we have the following inequalities:
\begin{align}
\langle \boldsymbol\pi_{\lambda_1}^\star, \boldsymbol\gamma \rangle + \lambda_1 \phi(\boldsymbol\pi_{\lambda_1}^\star) & \leq \langle \boldsymbol\pi_{\lambda_2}^\star, \boldsymbol\gamma \rangle + \lambda_1 \phi(\boldsymbol\pi_{\lambda_2}^\star) \enspace,\\
\langle \boldsymbol\pi_{\lambda_2}^\star, \boldsymbol\gamma \rangle + \lambda_2 \phi(\boldsymbol\pi_{\lambda_2}^\star) & \leq \langle \boldsymbol\pi_{\lambda_1}^\star, \boldsymbol\gamma \rangle + \lambda_2 \phi(\boldsymbol\pi_{\lambda_1}^\star) \enspace.
\end{align}
Subtracting these two inequalities, we obtain that $\phi(\boldsymbol\pi_{\lambda_1}^\star) \geq \phi(\boldsymbol\pi_{\lambda_2}^\star)$. Reinserting this result in the first inequality, we finally get $\langle \boldsymbol\pi_{\lambda_1}^\star, \boldsymbol\gamma \rangle \leq \langle \boldsymbol\pi_{\lambda_2}^\star, \boldsymbol\gamma \rangle$. The continuity of the dual RMD results from that of the primal RMD. Let $\lambda \geq 0$, and choose an arbitrary dual rot mover's plan $\boldsymbol\pi_\lambda^\star$ and earth mover's plan $\boldsymbol\pi^\star$. On the one hand, we have $\langle \boldsymbol\pi^\star, \boldsymbol\gamma \rangle \leq \langle \boldsymbol\pi_\lambda^\star, \boldsymbol\gamma \rangle$. On the other hand, we have $\langle \boldsymbol\pi_\lambda^\star, \boldsymbol\gamma \rangle + \lambda \phi(\boldsymbol\pi_\lambda^\star) \leq \langle \boldsymbol\pi', \boldsymbol\gamma \rangle + \lambda \phi(\boldsymbol\pi')$, and thus $\langle \boldsymbol\pi_\lambda^\star, \boldsymbol\gamma \rangle \leq \langle \boldsymbol\pi', \boldsymbol\gamma \rangle + \lambda (\phi(\boldsymbol\pi') - \phi(\boldsymbol\pi_\lambda^\star)) \leq \langle \boldsymbol\pi', \boldsymbol\gamma \rangle$. Suppose we have a discontinuity of the dual RMD at $\lambda$. Then by monotonicity, there is a value $\langle \boldsymbol\pi^\star, \boldsymbol\gamma \rangle < d < \langle \boldsymbol\pi', \boldsymbol\gamma \rangle$ that is not in the image of the dual RMD. But $d$ is in the image of the primal RMD for a given $\alpha > 0$ by continuity. It means that $\nexists \lambda > 0$ such that $\langle \pi_\lambda^\star, C \rangle = d$, whereas, by continuity of the primal problem, we know that there exist $\alpha > 0$ such that $\langle \pi_\alpha^\star, C \rangle \geq d$. This is in contradiction with the duality result in Theorem~\ref{thm:duality}, which implies that the image of the primal RMD for $\alpha > 0$ must be included in that of the dual RMD for $\lambda \geq 0$.
\end{proof}

\begin{property}
\label{property:convergenceup}
When $\lambda$ tends to $+\infty$, the dual rot mover's distance converges to:
\begin{equation}
\lim_{\lambda \to +\infty} d_{\boldsymbol\gamma, \lambda, \phi}(\p, \q) = \langle \boldsymbol\pi', \boldsymbol\gamma \rangle \enspace,
\end{equation}
and the dual rot mover's plan converges in norm to the transport plan with minimal Bregman information:
\begin{equation}
\lim_{\lambda \to +\infty} \boldsymbol\pi_\lambda^\star = \boldsymbol\pi' \enspace.
\end{equation}
\end{property}

\begin{proof}
Let ${(\lambda_k)}_{k \in \N}$ be a sequence of positive numbers that tends to $+\infty$, and ${({\boldsymbol\pi}_{\lambda_k}^\star)}_{k \in \N}$ the associated rot mover's plans. By compactness of $\Pi(\p, \q)$, we can extract a subsequence of rot mover's plans that converges in norm to a point $\boldsymbol\pi^\star \in \Pi(\p, \q)$. For the sake of simplicity, we do not relabel this subsequence. By construction, we have $\langle \boldsymbol\pi_{\lambda_k}^\star, \boldsymbol\gamma \rangle + \lambda_k \phi(\boldsymbol\pi') \leq \langle \boldsymbol\pi_{\lambda_k}^\star, \boldsymbol\gamma \rangle + \lambda_k \phi(\boldsymbol\pi_{\lambda_k}^\star) \leq \langle \boldsymbol\pi', \boldsymbol\gamma \rangle + \lambda_k \phi(\boldsymbol\pi')$. The scalar products are bounded, so dividing the inequalities by $\lambda_k$ and taking the limit, we obtain that $\phi(\boldsymbol\pi_{\lambda_k}^\star)$ converges to $\phi(\boldsymbol\pi')$. By lower semi-continuity of $\phi$, we thus have $\phi(\boldsymbol\pi^\star) \leq \phi(\boldsymbol\pi')$. Since the global minimum of $\phi$ on $\Pi(\p, \q)$ is attained uniquely at $\boldsymbol\pi'$, we must have $\boldsymbol\pi^\star = \boldsymbol\pi'$, and the original sequence also converges in norm to $\boldsymbol\pi'$. Hence, the dual rot mover's plan $\boldsymbol\pi_\lambda$ converges in norm to $\boldsymbol\pi'$ when $\lambda$ tends to $+\infty$. By continuity of the total cost $\langle \cdot, \boldsymbol\gamma \rangle$ on $\R^{d \times d}$, $\langle \boldsymbol\pi_{\lambda_k}^\star, \boldsymbol\gamma \rangle$ converges to $\langle \boldsymbol\pi', \boldsymbol\gamma \rangle$. Hence, the limit of the RMD when $\lambda$ tends to $+\infty$ is $\langle \boldsymbol\pi', \boldsymbol\gamma \rangle$.
\end{proof}

\begin{property}
When $\lambda$ tends to $0$, the dual rot mover's distance converges to the earth mover's distance:
\begin{equation}
\lim_{\lambda \to 0} d_{\boldsymbol\gamma, \lambda, \phi}(\p, \q) = d_{\boldsymbol\gamma}(\p, \q) \enspace.
\end{equation}
\end{property}

\begin{proof}
This is a direct consequence of the dual RMD being continuous at $\lambda = 0$.
\end{proof}

\begin{property}
\label{property:convergencedown}
If ${[0, 1)}^{d \times d} \subseteq \dom \phi$ and $\phi$ is strictly convex on ${[0, 1)}^{d \times d}$, then the dual rot mover's plan converges in norm when $\lambda$ tends to $0$ to the earth mover's plan $\boldsymbol\pi_0^\star$ with minimal Bregman information:
\begin{equation}
\lim_{\lambda \to 0} \boldsymbol\pi_\lambda^\star = \boldsymbol\pi_0^\star \enspace.
\end{equation}
\end{property}

\begin{proof}
Let ${(\lambda_k)}_{k \in \N}$ be a sequence of positive numbers that converges to $0$, and ${({\boldsymbol\pi}_{\lambda_k}^\star)}_{k \in \N}$ the associated rot mover's plans. By compactness of $\Pi(\p, \q)$, we can extract a subsequence of rot mover's plans that converges in norm to a point $\boldsymbol\pi^\star \in \Pi(\p, \q)$. For the sake of simplicity, we do not relabel this subsequence. By construction, we have $\langle \boldsymbol\pi_0^\star, \boldsymbol\gamma \rangle + \lambda_k \phi(\boldsymbol\pi_{\lambda_k}^\star) \leq \langle \boldsymbol\pi_{\lambda_k}^\star, \boldsymbol\gamma \rangle + \lambda_k \phi(\boldsymbol\pi_{\lambda_k}^\star) \leq \langle \boldsymbol\pi_0^\star, \boldsymbol\gamma \rangle + \lambda_k \phi(\boldsymbol\pi_0^\star)$. The regularizer $\phi$ is continuous on the polytope $\Pi(\p, \q) \subseteq \dom \phi$, so taking the limit, we obtain that $\langle \boldsymbol\pi_{\lambda_k}^\star, \boldsymbol\gamma \rangle$ converges to $\langle \boldsymbol\pi_0^\star, \boldsymbol\gamma \rangle$. Therefore, $\boldsymbol\pi^\star$ must be an earth mover's plan. Now dividing by $\lambda_k$ and taking the limit, we obtain that $\phi(\boldsymbol\pi^\star) \leq \phi(\boldsymbol\pi_0^\star)$. Since $\boldsymbol\pi_0^\star$ is the unique earth mover's plan with minimal Bregman information, we must have $\boldsymbol\pi^\star = \boldsymbol\pi_0^\star$.
\end{proof}

\subsection{Geometrical Insights}
\label{subsec:geomoverview}

Our primal and dual formulations enlighten some intricate relations between optimal transportation theory~\cite{Villani2009} and information geometry~\cite{Amari2000}, where Bregman divergences are known to possess a dually flat structure with a generalized Pythagorean theorem for information projections. A schematic view of the underlying geometry for ROT problems is represented in Figure~\ref{fig:geometry_rot}, and can be discussed as follows.

\begin{figure}[t!]
\centering
\includegraphics[width=0.9\textwidth]{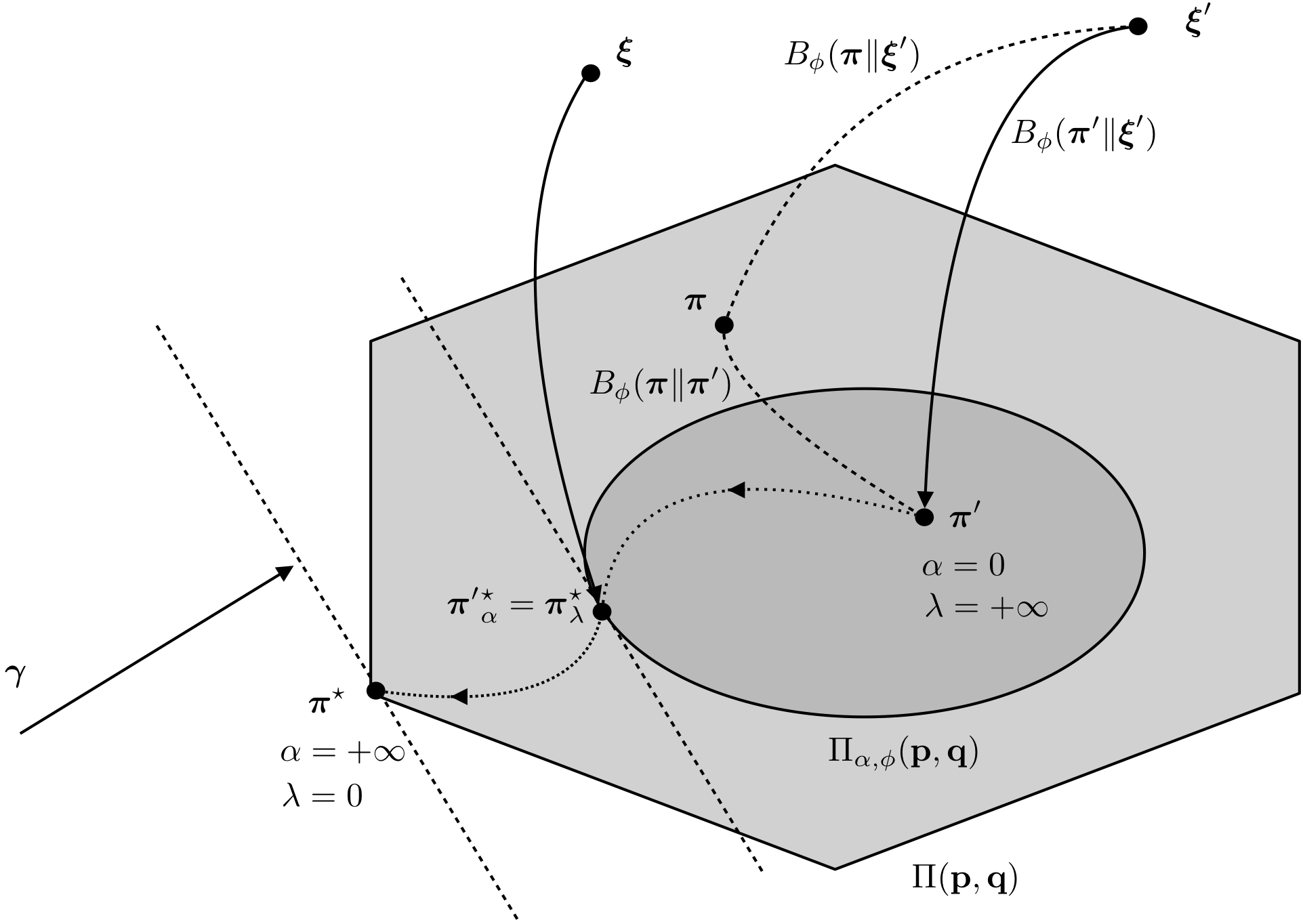}
\caption{Geometry of regularized optimal transport.}
\label{fig:geometry_rot}
\end{figure}

Our constructions start from the global minimizer $\boldsymbol\xi'$ of the regularizer $\phi$ (Lemma~\ref{lemma:primal_glob_min}). The Bregman projection $\boldsymbol\pi'$ of $\boldsymbol\xi'$ onto the transport polytope $\Pi(\p, \q)$ has minimal Bregman information on $\Pi(\p, \q)$ (Lemma~\ref{lemma:primal_rest_min}). The linear cost restricted to the regularized transport polytope $\Pi_{\alpha, \phi}(\p, \q)$ also attains its global minimum (Lemma~\ref{lemma:rtp}). Such a minimizer ${\boldsymbol\pi'}\vphantom{\boldsymbol\pi}_\alpha^\star$ is a primal rot mover's plan (Definition~\ref{def:prmd}). We can interpret $\Pi_{\alpha, \phi}(\p, \q)$ as the intersection of $\Pi(\p, \q)$ with the Bregman ball of radius $B_\phi(\boldsymbol\pi' \Vert \boldsymbol\xi') + \alpha$ and center $\boldsymbol\xi'$ (Proposition~\ref{prop:bigbball}). In certain cases, $\Pi_{\alpha, \phi}(\p, \q)$ is also the intersection of $\Pi(\p, \q)$ with the Bregman ball of radius $\alpha$ and center $\boldsymbol\pi'$, as a result of the generalized Pythagorean theorem $B_\phi(\boldsymbol\pi \Vert \boldsymbol\xi') = B_\phi(\boldsymbol\pi \Vert \boldsymbol\pi') + B_\phi(\boldsymbol\pi' \Vert \boldsymbol\xi')$ (Proposition~\ref{prop:bball}, Corollary~\ref{corol:bball}). All in all, this enforces the solutions to have small enough Bregman information, by constraining them to lie close to the matrix $\boldsymbol\xi'$ or transport plan $\boldsymbol\pi'$ with minimal Bregman information.

In our developments, we next introduce the global minimizer $\boldsymbol\xi$ of the regularized cost (Lemma~\ref{lemma:xi}). The regularized cost restricted to $\Pi(\p, \q)$ also attains its global minimum uniquely (Lemma~\ref{lemma:dual_rest_min}). This minimizer defines the dual rot mover's plan $\boldsymbol\pi_\lambda^\star$ (Definition~\ref{def:drmd}). Actually, $\boldsymbol\pi_\lambda^\star$ can be seen as the Bregman projection of $\boldsymbol\xi$ onto $\Pi(\p, \q)$ (Proposition~\ref{prop:dbproj}). The regularization by the Bregman information is also equivalent to regularizing the solution toward $\boldsymbol\xi'$ (Proposition~\ref{prop:dbigbball}). In some cases, this can also be seen as regularizing toward $\boldsymbol\pi'$, as a result of the generalized Pythagorean theorem $B_\phi(\boldsymbol\pi \Vert \boldsymbol\xi') = B_\phi(\boldsymbol\pi \Vert \boldsymbol\pi') + B_\phi(\boldsymbol\pi' \Vert \boldsymbol\xi')$ (Proposition~\ref{prop:dbball}, Corollary~\ref{corol:dbball}). Again, this enforces the solutions to have small enough Bregman information, by shrinking them toward the matrix $\boldsymbol\xi'$ or transport plan $\boldsymbol\pi'$ with minimal Bregman information.

We have duality between the primal and dual formulations, so that primal and dual rot mover's plans follow the same path on $\Pi(\p, \q)$ from no regularization ($\alpha = +\infty$, $\lambda = 0$) to full regularization ($\alpha = 0$, $\lambda = +\infty$) (Theorem~\ref{thm:duality}). In the limit of no regularization, we obviously retrieve an earth mover's plan $\boldsymbol\pi^\star$ for the cost matrix $\boldsymbol\gamma$. By duality, it is also intuitive that the additional constraint for the primal formulation, seen in the equivalent forms of $\phi(\boldsymbol\pi)$, $B_\phi(\boldsymbol\pi \Vert \boldsymbol\xi')$ or $B_\phi(\boldsymbol\pi \Vert \boldsymbol\pi')$, leads to an analog penalty for the dual formulation in the same respective form.

Since $\boldsymbol\xi' = \ones$, $\boldsymbol\pi' = \p \q^\top$, $\boldsymbol\xi = \exp(-\boldsymbol\gamma / \lambda)$, for minus the Boltzmann-Shannon entropy and Kullback-Leibler divergence, we retrieve the existing results discussed in Section~\ref{subsec:background} as a specific case~\cite{Cuturi2013,Benamou2015}. In addition, we can readily generalize the estimation of contingency tables with fixed marginals to a matrix nearness problem based on other divergences than the Kullback-Leibler divergence~\cite{Dhillon2007}. Given a rough estimate $\boldsymbol\xi \in \interior(\dom\phi)$, a contingency table with fixed marginals $\p, \q$ can be estimated by Bregman projection of $\boldsymbol\xi$ onto $\Pi(\p, \q)$:
\begin{equation}
\boldsymbol\pi^\star = \argmin_{\boldsymbol\pi \in \Pi(\p, \q)} B_\phi(\boldsymbol\pi \Vert \boldsymbol\xi) \enspace.
\end{equation}
This simply amounts to solving a dual ROT problem with an arbitrary penalty $\lambda > 0$ and a cost matrix $\boldsymbol\gamma = -\lambda \nabla\phi(\boldsymbol\xi)$.

Finally, since Bregman divergences are invariant under adding an affine term to their generator, it is straightforward to generalize ROT problems by shrinking toward an arbitrary prior matrix $\boldsymbol\xi, \boldsymbol\xi' \in \interior(\dom \phi)$, or transport plan $\boldsymbol\pi' \in \Pi(\p, \q)$. This is indeed equivalent to translating the regularizer by the appropriate amount $\phi(\boldsymbol\pi) + \langle \boldsymbol\pi, \boldsymbol\delta \rangle$, so that the global minimizer is now attained at the desired point. Equivalently, this amounts to translating the cost matrix as $\boldsymbol\gamma + \lambda \boldsymbol\delta$ instead.

\section{Algorithmic Derivations}
\label{sec:alg}

In this section, we introduce algorithmic methods to solve ROT problems. We focus without lack of generality on the dual problem, which can be solved efficiently via alternate Bregman projections. The primal problem can then easily be solved for $0 < \alpha < \alpha'$ by a bisection search on $\lambda > 0$. For $\alpha = 0$, we could simply use alternate Bregman projections to project $\nabla\psi(\zeros)$ instead of $\nabla\psi(-\boldsymbol\gamma / \lambda)$ in virtue of Lemmas~\ref{lemma:primal_glob_min} and~\ref{lemma:primal_rest_min}, which actually corresponds to the special case $\boldsymbol\gamma = \zeros$ in our algorithms, though this is not really relevant in practice since this completely removes the linear influence of the total cost from the ROT problem. In the limit $\alpha \geq \alpha'$, a classical OT solver such as the network simplex can directly be used. We first study the underlying Bregman projections in their generic form (Section~\ref{subsec:genproj}) and specifically develop the case of separable divergences (Section~\ref{subsec:sepproj}). We then derive the two generic schemes of ASA (Section~\ref{subsec:asa}) and NASA (Section~\ref{subsec:nasa}) to solve dual ROT problems, depending on whether the domain of the smooth convex regularizer lies within the non-negative orthant or not. We also enhance both algorithms in the separable case with a sparse extension (Section~\ref{subsec:sparse}), and finally discuss some practical considerations of our methods (Section~\ref{subsec:practice}). To simplify notations, we omit the penalty value $\lambda$ in the index and simply write $\boldsymbol\pi^\star$ for the rot mover's plan. 

\subsection{Generic Projections}
\label{subsec:genproj}

The closed convex transport polytope $\Pi(\p, \q)$ is the intersection of the non-negative orthant:
\begin{equation}
\C_0 = \R_+^{d \times d} \enspace,
\end{equation}
which is a polyhedral subset, with two affine subspaces:
\begin{align}
\C_1 & = \{\boldsymbol\pi \in \R^{d \times d} \colon \boldsymbol\pi \ones = \p\} \enspace,\\
\C_2 & = \{\boldsymbol\pi \in \R^{d \times d} \colon \boldsymbol\pi^\top \ones = \q\} \enspace.
\end{align}
The Bregman projection $\boldsymbol\pi^\star$ onto $\Pi(\p, \q)$ can then be obtained by alternate Bregman projections onto $\C_0, \C_1, \C_2$, where we expect that these latter projections are easier to compute.

On the one hand, the Karush-Kuhn-Tucker conditions for Bregman projection $\boldsymbol\pi_0^\star$ of a given matrix $\overline{\boldsymbol\pi} \in \interior(\dom \phi)$ onto $\C_0$ are necessary and sufficient, and write as follows:
\begin{align}
\boldsymbol\pi_0^\star \geq \zeros \enspace,\\
\nabla\phi(\boldsymbol\pi_0^\star) - \nabla\phi(\overline{\boldsymbol\pi}) \geq \zeros \enspace,\\
(\nabla\phi(\boldsymbol\pi_0^\star) - \nabla\phi(\overline{\boldsymbol\pi})) \odot \boldsymbol\pi_0^\star = \zeros \enspace.
\end{align}
While these conditions are nontrivial to solve in general, we shall see that they admit an elegant solver specific to the non-separable squared Mahalanobis distances defined in~\eqref{eq:mahal} and generated by the quadratic form in~\eqref{eq:aqc}. In addition, they also greatly simplify for separable divergences, which encompass all other divergences used in this paper.

On the other hand, the Lagrangians with Lagrange multipliers $\boldsymbol\mu, \boldsymbol\nu \in \R^d$ for the Bregman projections $\boldsymbol\pi_1^\star$ and $\boldsymbol\pi_2^\star$ of a given matrix $\overline{\boldsymbol\pi} \in \interior(\dom \phi)$ onto $\C_1$ and $\C_2$ respectively write as follows:
\begin{align}
\L_1(\boldsymbol\pi, \boldsymbol\mu) & = \phi(\boldsymbol\pi) - \langle \boldsymbol\pi, \nabla\phi(\overline{\boldsymbol\pi}) \rangle + \boldsymbol\mu^\top (\boldsymbol\pi \ones - \p) \enspace,\\
\L_2(\boldsymbol\pi, \boldsymbol\nu) & = \phi(\boldsymbol\pi) - \langle \boldsymbol\pi, \nabla\phi(\overline{\boldsymbol\pi}) \rangle + \boldsymbol\nu^\top (\boldsymbol\pi^\top \ones - \q) \enspace.
\end{align}
Their gradients are given on $\interior(\dom \phi)$ by:
\begin{align}
\nabla\L_1(\boldsymbol\pi, \boldsymbol\mu)	& = \nabla\phi(\boldsymbol\pi) - \nabla\phi(\overline{\boldsymbol\pi}) + \boldsymbol\mu \ones^\top \enspace,\\
\nabla\L_2(\boldsymbol\pi, \boldsymbol\nu) & = \nabla\phi(\boldsymbol\pi) - \nabla\phi(\overline{\boldsymbol\pi}) + \ones \boldsymbol\nu^\top \enspace,
\end{align}
and vanish at $\boldsymbol\pi^\star_1, \boldsymbol\pi^\star_2 \in \interior(\dom \phi)$ if and only if:
\begin{align}
\boldsymbol\pi^\star_1 & = \nabla\psi(\nabla\phi(\overline{\boldsymbol\pi}) - \boldsymbol\mu \ones^\top) \enspace,\\
\boldsymbol\pi^\star_2 & = \nabla\psi(\nabla\phi(\overline{\boldsymbol\pi}) - \ones \boldsymbol\nu^\top) \enspace.
\end{align}
By duality, the Bregman projections onto $\C_1, \C_2$ are thus equivalent to finding the unique vectors $\boldsymbol\mu, \boldsymbol\nu$, such that the rows of $\boldsymbol\pi^\star_1$ sum up to $\p$, respectively the columns of $\boldsymbol\pi^\star_2$ sum up to $\q$:
\begin{align}
\nabla\psi(\nabla\phi(\overline{\boldsymbol\pi}) - \boldsymbol\mu \ones^\top) \ones 				& = \p\enspace,\\
{\nabla\psi(\nabla\phi(\overline{\boldsymbol\pi}) - \ones \boldsymbol\nu^\top)}^\top \ones & = \q \enspace.
\end{align}
Similarly, solving for the Lagrange multipliers is an expensive problem in general, since the search space is of dimension $d$ and we evaluate matrix functions of size $d \times d$. This is because a given entry $\mu_i, \nu_j$ can actually modify any entry of the $d \times d$ matrix functions being evaluated. Again, we shall see that they can nevertheless be computed efficiently for separable divergences as well as the non-separable Mahalanobis distances.

\subsection{Separable Case}
\label{subsec:sepproj}

Assuming that the regularizer $\phi$ is separable, the underlying Bregman projections can be computed more efficiently. To keep notations simple, we focus on separable divergences with same element-wise regularizer, and thus chiefly omit the indices $\phi_{ij} = \phi$. We emphasize, however, that it is straightforward to apply all our methods for separable divergences with different element-wise regularizers, which notably enables weighting a given element-wise regularizer.

In case of separability, the Karush-Kuhn-Tucker conditions for projection onto $\C_0$ simplify to provide a closed-form solution on primal parameters:
\begin{equation}
\pi_{0, ij}^\star = \max\{0, \overline{\pi}_{ij}\} \enspace.
\end{equation}
Since $\phi'$ is increasing, this is equivalent on dual parameters to:
\begin{equation}
\theta_{0, ij}^\star = \max\{\phi'(0), \overline{\theta}_{ij}\} \enspace.
\end{equation}

Now turning to projections onto $\C_1, \C_2$ for primal parameters $\pi_{1, ij}^\star, \pi_{2, ij}^\star$, we can divide the initial problems into $d$ parallel subproblems in search space of dimension $1$ each. This is much more efficient to solve than in the non-separable case. This can be summarized as looking for $d$ separate Lagrange multipliers $\mu_i$, respectively $\nu_j$, such that:
\begin{align}
\sum_{j = 1}^d \psi'(\overline{\theta}_{ij} - \mu_i)	& = p_i \enspace,\\
\sum_{i = 1}^d \psi'(\overline{\theta}_{ij} - \nu_j)	& = q_j \enspace.
\end{align}
Finding the optimal values $\mu_i, \nu_j \in \R$ through $\psi'$ and the sums over rows or columns, however, is still nontrivial in general.

An analytical solution can be obtained in specific cases. Intuitively, we need to factor $\mu_i, \nu_j$ out of $\psi'$ as additive or multiplicative terms. This is related to Pexider's functional equations, which hold only for functions with a linear form $\psi'(\theta) = a \theta + b$, or exponential form $\psi'(\theta) = a \exp(b \theta)$, with $a, b \in \R$. This leads to regularizers with a quadratic form $\phi(\pi) = a \pi^2 + b \pi + c$, or entropic form $\phi(\pi) = a \pi \log \pi + b \pi + c$, with $a, b, c \in \R$. The constants $a, b$ actually only scale and translate the cost matrix, whereas the constant $c$ has no effect. Referring to Table \ref{tab:assump}, the quadratic case holds under assumptions (B), and thus requires Dykstra's algorithm for alternate Bregman projections with correction terms to ensure non-negativity by projection onto the polyhedral non-negative orthant. The entropic case holds under assumptions (A), using the POCS technique for alternate Bregman projection with no correction terms since the non-negativity is already ensured by the domain of the regularizer. The latter case reduces to the regularization of \cite{Cuturi2013} and \cite{Benamou2015}, so that we actually end up with the Sinkhorn-Knopp algorithm. Hence, the Euclidean norm associated to the squared Euclidean distance, and the entropic case associated to the Kullback-Leibler divergence, are reasonably the only two existing analytical schemes to find the sum constraint projections. For other ROT problems, available solvers for line search can be employed instead.

For simplicity, we assume hereafter that $\psi$ is twice continuously differentiable with $\psi''$ positive and $\psi'$ verifying the necessary and sufficient condition~\eqref{eq:nasc} on its whole domain. Therefore, we can use the Newton-Raphson method with guarantees of global convergence. This encompasses most of the common regularizers, and notably all regularizers used in this paper except from the Fermi-Dirac entropy, $\ell_p$ norms and Hellinger distance. When the condition~\eqref{eq:nasc} for global convergence is not met on the whole domain, it is still possible to apply the Newton-Raphson method after careful initialization, so as to restrict to a smaller interval where the condition holds. This is discussed in more detail with practical examples for the Fermi-Dirac entropy, $\ell_p$ norms and Hellinger distance in Section~\ref{sec:examples}, where the first-order derivatives are increasing convex on half of the domain and increasing concave on the other half. When the second-order derivatives do not exist, are not continuous or vanish at some points, a similar strategy can be applied. This is again discussed for the $\ell_p$ norms in Section~\ref{sec:examples}, where the second-order derivative is undefined or vanishes at $0$ depending on the value of the parameter. If such an initialization is not possible, then a bisection search can always be applied instead of the Newton-Raphson method.

To apply the Newton-Raphson method, we exploit the following functions:
\begin{align}
f(\mu_i) & = -\sum_{j = 1}^d \psi'(\overline{\theta}_{ij} - \mu_i) \enspace,\\
g(\nu_j) & = -\sum_{i = 1}^d \psi'(\overline{\theta}_{ij} - \nu_j) \enspace,
\end{align}
defined respectively on the open intervals $(\hat{\theta}_i - \overline{\theta}, +\infty)$ and $(\check{\theta}_j - \overline{\theta}, +\infty)$, where $0 < \overline{\theta} \leq +\infty$ is such that $\dom \psi = (-\infty, \overline{\theta})$, and $\hat{\theta}_i = \max\{\overline{\theta}_{ij}\}_{1 \leq j \leq d}$, $\check{\theta}_j = \max\{\overline{\theta}_{ij}\}_{1 \leq i \leq d}$. Their continuous derivatives are given by:
\begin{align}
f'(\mu_i) & = \sum_{j = 1}^d \psi''(\overline{\theta}_{ij} - \mu_i) \enspace,\\
g'(\nu_j) & = \sum_{i = 1}^d \psi''(\overline{\theta}_{ij} - \nu_j) \enspace,
\end{align}
and are positive, so that $f, g$ are strictly increasing on their whole domain, and thus on any closed interval with endpoints consisting of a feasible point and a solution. By construction, $f, g$ also verify the necessary and sufficient condition~\eqref{eq:nasc} for global convergence, and we know that there are unique solutions to $f(\mu_i) = -p_i$ and $g(\nu_j) = -q_j$. Hence, the Newton-Raphson updates:
\begin{align}
\mu_i & \leftarrow \mu_i + \frac{\sum_{j = 1}^d \psi'(\overline{\theta}_{ij} - \mu_i) - p_i}{\sum_{j = 1}^d \psi''(\overline{\theta}_{ij} - \mu_i)} \enspace,\\
\nu_j & \leftarrow \nu_j + \frac{\sum_{i = 1}^d \psi'(\overline{\theta}_{ij} - \nu_j) - q_j}{\sum_{i = 1}^d \psi''(\overline{\theta}_{ij} - \nu_j)} \enspace,
\end{align}
converge to the optimal solutions with a quadratic rate for any feasible starting points. By construction, we also know that initialization can be done with $\mu_i \leftarrow 0$, $\nu_j \leftarrow 0$. To avoid storing the intermediate Lagrange multipliers, the updates can then directly be written on dual parameters:
\begin{align}
\theta_{1, ij}^\star & \leftarrow \theta_{1, ij}^\star - \frac{\sum_{j = 1}^d \psi'(\theta_{1, ij}^\star) - p_i}{\sum_{j = 1}^d \psi''(\theta_{1, ij}^\star)} \enspace,\\
\theta_{2, ij}^\star & \leftarrow \theta_{2, ij}^\star - \frac{\sum_{i = 1}^d \psi'(\theta_{2, ij}^\star) - q_j}{\sum_{i = 1}^d \psi''(\theta_{2, ij}^\star)} \enspace,
\end{align}
after initialization by $\theta_{1, ij}^\star \leftarrow \overline{\theta}_{ij}$, $\theta_{2, ij}^\star \leftarrow \overline{\theta}_{ij}$.

\subsection{Alternate Scaling Algorithm}
\label{subsec:asa}

Under assumptions (A), we can drop the non-negative constraint since it is already ensured by $\dom \phi \subseteq \R_+^{d \times d}$ (Table \ref{tab:assump}). The POCS technique in its basic form~\eqref{pocs} then states that the projection of $\boldsymbol\xi$ onto $\Pi(\p, \q)$ can be obtained by alternate Bregman projections onto the affine subspaces $\C_1$ and $\C_2$ with linear convergence. Clearly, the underlying control mapping takes each output value an infinite number of times. Since we have just two sets, the only possible alternative in the control mapping is to swap the order of projections starting from $\C_2$ instead of $\C_1$, which actually amounts to swapping the input distributions $\p, \q$ and transposing the cost matrix $\boldsymbol\gamma$, to obtain the transposed of the rot mover's plan. We thus focus on the first choice without lack of generality.

Starting from $\boldsymbol\xi$ and writing the successive vectors $\boldsymbol\mu^{(k)}, \boldsymbol\nu^{(k)}$ along iterations, we have the following sequence:
\begin{align}
\nabla\psi(-\boldsymbol\gamma / \lambda) 	& \to \nabla\psi\left(-\boldsymbol\gamma / \lambda - \boldsymbol\mu^{(1)} \ones^\top\right)\\
															& \to \nabla\psi\left(-\boldsymbol\gamma / \lambda - \boldsymbol\mu^{(1)} \ones^\top - \ones \boldsymbol\nu^{(1)\top}\right)\\
															& \to \dotso \\
															& \to \nabla\psi\left(-\boldsymbol\gamma / \lambda - \boldsymbol\mu^{(1)} \ones^\top - \ones \boldsymbol\nu^{(1)\top} - \dotsb - \boldsymbol\mu^{(k)} \ones^\top\right)\\
															& \to \nabla\psi\left(-\boldsymbol\gamma / \lambda - \boldsymbol\mu^{(1)} \ones^\top - \ones \boldsymbol\nu^{(1)\top} - \dotsb - \boldsymbol\mu^{(k)} \ones^\top - \ones \boldsymbol\nu^{(k)\top}\right)\\
															& \to \dotso \\
															& \to \boldsymbol\pi^\star \enspace.
\end{align}
In other terms, we obtain the rot mover's plan $\boldsymbol\pi^\star$ by scaling iteratively the rows and columns of the successive estimates through $\nabla\psi$. An efficient algorithm, called ASA, is to store a unique $d \times d$ matrix in dual parameter space and update it by alternating the projections in primal parameter space (Algorithm~\ref{alg:asa}). The updates have a complexity in $O(d^2)$ once the vectors $\boldsymbol\mu, \boldsymbol\nu$ are obtained.

\begin{algorithm}[t!]
\caption{Alternate scaling algorithm.}
\label{alg:asa}
\begin{algorithmic}
\STATE $\boldsymbol\theta^\star \leftarrow - \boldsymbol\gamma / \lambda$
\REPEAT
\STATE $\boldsymbol\theta^\star \leftarrow \boldsymbol\theta^\star - \boldsymbol\mu \ones^\top$, where $\boldsymbol\mu$ uniquely solves $\nabla\psi(\boldsymbol\theta^\star - \boldsymbol\mu \ones^\top) \ones = \p$
\STATE $\boldsymbol\theta^\star \leftarrow \boldsymbol\theta^\star - \ones \boldsymbol\nu^\top$, where $\boldsymbol\nu$ uniquely solves ${\nabla\psi(\boldsymbol\theta^\star - \ones \boldsymbol\nu^\top)}^\top \ones = \q$
\UNTIL{convergence}
\STATE $\boldsymbol\pi^\star \leftarrow \nabla\psi(\boldsymbol\theta^\star)$
\end{algorithmic}
\end{algorithm}

In the separable case, the projections can be obtained by iterating the respective Newton-Raphson update steps, which can be written compactly with matrix and vector operations (Algorithm~\ref{alg:sasa}). The complexity for the updates are now clearly in $O(d^2)$. In more detail, each update step features one vector row or column replication, one vector element-wise division, one vector subtraction, one matrix subtraction, two matrix row or column sums, and two element-wise matrix function evaluations. Because of separability, we can expect the required number of iterations for convergence in the different loops to be independent of the data dimension, and thus expect a quadratic empirical complexity as well.

\begin{algorithm}[t!]
\caption{Alternate scaling algorithm in the separable case.}
\label{alg:sasa}
\begin{algorithmic}
\STATE $\boldsymbol\theta^\star \leftarrow - \boldsymbol\gamma / \lambda$
\REPEAT
\REPEAT
\STATE $\boldsymbol\theta^\star \leftarrow \boldsymbol\theta^\star - \frac{\psi'(\boldsymbol\theta^\star) \ones - \p}{\psi''(\boldsymbol\theta^\star) \ones} \, \ones^\top$
\UNTIL{convergence}
\REPEAT
\STATE $\boldsymbol\theta^\star \leftarrow \boldsymbol\theta^\star - \ones \, \frac{\ones^\top \psi'(\boldsymbol\theta^\star) - \q^\top}{\ones^\top \psi''(\boldsymbol\theta^\star)}$
\UNTIL{convergence}
\UNTIL{convergence}
\STATE $\boldsymbol\pi^\star \leftarrow \psi'(\boldsymbol\theta^\star)$
\end{algorithmic}
\end{algorithm}

\subsection{Non-negative Alternate Scaling Algorithm}
\label{subsec:nasa}

Under assumptions (B), we must now include the non-negative constraint since $\dom \phi \nsubseteq \R_+^{d \times d}$ (Table~\ref{tab:assump}). We suggest to ensure non-negativity of each update, and thus follow a cycle of projections onto $\C_0, \C_1, \C_0, \C_2$. The underlying control mapping is a fortiori essentially cyclic. For practical reasons, we also ensure non-negativity of the output solution with a final projection onto $\C_0$. Again, swapping the order of projections onto $\C_1, \C_2$ is equivalent to swapping the input distributions $\p, \q$ and transposing the cost matrix $\boldsymbol\gamma$ to obtain the transposed of the rot mover's plan. Other control mappings could also be exploited, for example by ensuring non-negativity every two or more sum constraint projections. We do not discuss such variants here and focus on the above-mentioned sequence. The non-negative orthant being polyhedral but not affine, we also need to incorporate correction terms $\boldsymbol\vartheta, \boldsymbol\varrho, \boldsymbol\varsigma$ for all three projections. In more detail, the projections are computed after correction so that we do not directly project the obtained updates $\boldsymbol\theta^\star$ but the corrected updates $\overline{\boldsymbol\theta} = \boldsymbol\theta^\star + \boldsymbol\vartheta$, $\overline{\boldsymbol\theta} = \boldsymbol\theta^\star + \boldsymbol\varrho$, and $\overline{\boldsymbol\theta} = \boldsymbol\theta^\star + \boldsymbol\varsigma$ for the respective subsets. The correction terms are also updated as the difference $\overline{\boldsymbol\theta} - \boldsymbol\theta^\star$ between the projected point and its projection. Dykstra's algorithm~\eqref{dykstra} for Bregman divergences with corrections~\eqref{correction} then guarantees that the projection of $\boldsymbol\xi$ onto $\Pi(\p, \q)$ is obtained with linear convergence.

A general algorithm, called NASA, is to store $d \times d$ matrices for projected points, projections and correction terms in dual parameter space, update them accordingly and finally go back to primal parameter space (Algorithm~\ref{alg:nasa}). The updates have a complexity in $O(d^2)$ once the Karush-Kuhn-Tucker conditions are solved or Lagrange multipliers $\boldsymbol\mu, \boldsymbol\nu$ are obtained.

\begin{algorithm}[t!]
\caption{Non-negative alternate scaling algorithm.}
\label{alg:nasa}
\begin{algorithmic}
\STATE $\boldsymbol\theta^\star \leftarrow - \boldsymbol\gamma / \lambda$
\STATE $\boldsymbol\vartheta \leftarrow \zeros$
\STATE $\boldsymbol\varrho \leftarrow \zeros$
\STATE $\boldsymbol\varsigma \leftarrow \zeros$
\STATE $\overline{\boldsymbol\theta} \leftarrow \boldsymbol\theta^\star + \boldsymbol\vartheta$
\STATE $\boldsymbol\theta^\star \leftarrow \boldsymbol\theta$, where $\boldsymbol\theta$ uniquely solves $\nabla\psi(\boldsymbol\theta) \geq \zeros$, $\boldsymbol\theta \geq \overline{\boldsymbol\theta}$, $(\boldsymbol\theta - \overline{\boldsymbol\theta}) \odot \nabla\psi(\boldsymbol\theta) = \zeros$
\STATE $\boldsymbol\vartheta \leftarrow \overline{\boldsymbol\theta} - \boldsymbol\theta^\star$
\REPEAT
\STATE $\overline{\boldsymbol\theta} \leftarrow \boldsymbol\theta^\star + \boldsymbol\varrho$
\STATE $\boldsymbol\theta^\star \leftarrow \overline{\boldsymbol\theta} - \boldsymbol\mu \ones^\top$, where $\boldsymbol\mu$ uniquely solves $\nabla\psi(\overline{\boldsymbol\theta} - \boldsymbol\mu \ones^\top) \ones = \p$
\STATE $\boldsymbol\varrho \leftarrow \overline{\boldsymbol\theta} - \boldsymbol\theta^\star$
\STATE $\overline{\boldsymbol\theta} \leftarrow \boldsymbol\theta^\star + \boldsymbol\vartheta$
\STATE $\boldsymbol\theta^\star \leftarrow \boldsymbol\theta$, where $\boldsymbol\theta$ uniquely solves $\nabla\psi(\boldsymbol\theta) \geq \zeros$, $\boldsymbol\theta \geq \overline{\boldsymbol\theta}$, $(\boldsymbol\theta - \overline{\boldsymbol\theta}) \odot \nabla\psi(\boldsymbol\theta) = \zeros$
\STATE $\boldsymbol\vartheta \leftarrow \overline{\boldsymbol\theta} - \boldsymbol\theta^\star$
\STATE $\overline{\boldsymbol\theta} \leftarrow \boldsymbol\theta^\star + \boldsymbol\varsigma$
\STATE $\boldsymbol\theta^\star \leftarrow \overline{\boldsymbol\theta} - \ones \boldsymbol\nu^\top$, where $\boldsymbol\nu$ uniquely solves ${\nabla\psi(\overline{\boldsymbol\theta} - \ones \boldsymbol\nu^\top)}^\top \ones = \q$
\STATE $\boldsymbol\varsigma \leftarrow \overline{\boldsymbol\theta} - \boldsymbol\theta^\star$
\STATE $\overline{\boldsymbol\theta} \leftarrow \boldsymbol\theta^\star + \boldsymbol\vartheta$
\STATE $\boldsymbol\theta^\star \leftarrow \boldsymbol\theta$, where $\boldsymbol\theta$ uniquely solves $\nabla\psi(\boldsymbol\theta) \geq \zeros$, $\boldsymbol\theta \geq \overline{\boldsymbol\theta}$, $(\boldsymbol\theta - \overline{\boldsymbol\theta}) \odot \nabla\psi(\boldsymbol\theta) = \zeros$
\STATE $\boldsymbol\vartheta \leftarrow \overline{\boldsymbol\theta} - \boldsymbol\theta^\star$
\UNTIL{convergence}
\STATE $\boldsymbol\pi^\star \leftarrow \nabla\psi(\boldsymbol\theta^\star)$
\end{algorithmic}
\end{algorithm}

In the separable case, the non-negativity constraint can be obtained analytically and the sequence of updates greatly simplifies. Starting from $\boldsymbol\xi$ and writing the successive vectors $\boldsymbol\mu^{(k)}, \boldsymbol\nu^{(k)}$ along iterations, we have:
\begin{align*}
\psi'(-\boldsymbol\gamma / \lambda)						& \to \psi'\big(\max\{\phi'(\zeros), -\boldsymbol\gamma / \lambda\}\big)\\
 												& \to \psi'\left(\max\{\phi'(\zeros), -\boldsymbol\gamma / \lambda\} - \boldsymbol\mu^{(1)} \ones^\top\right)\\
 												& \to \psi'\left(\max\{\phi'(\zeros), -\boldsymbol\gamma / \lambda - \boldsymbol\mu^{(1)} \ones^\top\}\right)\\
 												& \to \psi'\left(\max\{\phi'(\zeros), -\boldsymbol\gamma / \lambda - \boldsymbol\mu^{(1)} \ones^\top\} - \ones \boldsymbol\nu^{(1)\top}\right)\\
												& \to \psi'\left(\max\{\phi'(\zeros), -\boldsymbol\gamma / \lambda - \boldsymbol\mu^{(1)} \ones^\top - \ones \boldsymbol\nu^{(1)\top}\}\right)\\
												& \to \psi'\left(\max\{\phi'(\zeros), -\boldsymbol\gamma / \lambda - \boldsymbol\mu^{(1)} \ones^\top - \ones \boldsymbol\nu^{(1)\top}\} + \boldsymbol\mu^{(1)} \ones^\top - \boldsymbol\mu^{(2)} \ones^\top\right)\\
												& \to \psi'\left(\max\{\phi'(\zeros), -\boldsymbol\gamma / \lambda - \boldsymbol\mu^{(2)} \ones^\top - \ones \boldsymbol\nu^{(1)\top}\}\right)\\
												& \to \psi'\left(\max\{\phi'(\zeros), -\boldsymbol\gamma / \lambda - \boldsymbol\mu^{(2)} \ones^\top - \ones \boldsymbol\nu^{(1)\top}\} + \ones \boldsymbol\nu^{(1)\top} - \ones \boldsymbol\nu^{(2)\top}\right)\\
												& \to \psi'\left(\max\{\phi'(\zeros), -\boldsymbol\gamma / \lambda - \boldsymbol\mu^{(2)} \ones^\top - \ones \boldsymbol\nu^{(2)\top}\}\right)\\
												& \to \dotso \\
												& \to \psi'\left(\max\{\phi'(\zeros), -\boldsymbol\gamma / \lambda - \boldsymbol\mu^{(k)} \ones^\top - \ones \boldsymbol\nu^{(k)\top}\}\right)\\
												& \to \psi'\left(\max\{\phi'(\zeros), -\boldsymbol\gamma / \lambda - \boldsymbol\mu^{(k)} \ones^\top - \ones \boldsymbol\nu^{(k)\top}\} + \boldsymbol\mu^{(k)} \ones^\top - \boldsymbol\mu^{(k + 1)} \ones^\top\right)\\
												& \to \psi'\left(\max\{\phi'(\zeros), -\boldsymbol\gamma / \lambda - \boldsymbol\mu^{(k + 1)} \ones^\top - \ones \boldsymbol\nu^{(k)\top}\}\right)\\
												& \to \psi'\left(\max\{\phi'(\zeros), -\boldsymbol\gamma / \lambda - \boldsymbol\mu^{(k + 1)} \ones^\top - \ones \boldsymbol\nu^{(k)\top}\} + \ones \boldsymbol\nu^{(k)\top} - \ones \boldsymbol\nu^{(k + 1)\top}\right)\\
												& \to \psi'\left(\max\{\phi'(\zeros), -\boldsymbol\gamma / \lambda - \boldsymbol\mu^{(k + 1)} \ones^\top - \ones \boldsymbol\nu^{(k + 1)\top}\}\right)\\
												& \to \dotso \\
												& \to \boldsymbol\pi^\star \enspace.
\end{align*}
An efficient algorithm then exploits the differences $\boldsymbol\tau^{(k)} = \boldsymbol\mu^{(k)} - \boldsymbol\mu^{(k - 1)}$ and $\boldsymbol\sigma^{(k)} = \boldsymbol\nu^{(k)} - \boldsymbol\nu^{(k - 1)}$ to scale the rows and columns (Algorithm~\ref{alg:snasa}). We store $d \times d$ matrices as well as difference vectors instead of correction matrices. The algorithm can then be interpreted as producing interleaved updates between the projections according to the max operator and according to the respective scalings. The updates in NASA now clearly have a complexity in $O(d^2)$ when using the Newton-Raphson method for scaling, with similar matrix and vector operations to ASA in the separable case, and an expected empirical complexity that is quadratic.

\begin{algorithm}[t!]
\caption{Non-negative alternate scaling algorithm in the separable case.}
\label{alg:snasa}
\begin{algorithmic}
\STATE $\widetilde{\boldsymbol\theta} \leftarrow - \boldsymbol\gamma / \lambda$
\STATE $\boldsymbol\theta^\star \leftarrow \max\{\phi'(\zeros), \widetilde{\boldsymbol\theta}\}$
\REPEAT
\STATE $\boldsymbol\tau \leftarrow \zeros$
\REPEAT
\STATE $\boldsymbol\tau \leftarrow \boldsymbol\tau + \frac{\psi'(\boldsymbol\theta^\star - \boldsymbol\tau \ones^\top) \ones - \p}{\psi''(\boldsymbol\theta^\star - \boldsymbol\tau \ones^\top) \ones}$
\UNTIL{convergence}
\STATE $\widetilde{\boldsymbol\theta} \leftarrow \widetilde{\boldsymbol\theta} - \boldsymbol\tau \ones^\top$
\STATE $\boldsymbol\theta^\star \leftarrow \max\{\phi'(\zeros), \widetilde{\boldsymbol\theta}\}$
\STATE $\boldsymbol\sigma \leftarrow \zeros$
\REPEAT
\STATE $\boldsymbol\sigma \leftarrow \boldsymbol\sigma + \frac{\ones^\top \psi'(\boldsymbol\theta^\star - \ones \boldsymbol\sigma^\top) - \q^\top}{\ones^\top \psi''(\boldsymbol\theta^\star - \ones \boldsymbol\sigma^\top)}$
\UNTIL{convergence}
\STATE $\widetilde{\boldsymbol\theta} \leftarrow \widetilde{\boldsymbol\theta} - \ones \boldsymbol\sigma^\top$
\STATE $\boldsymbol\theta^\star \leftarrow \max\{\phi'(\zeros), \widetilde{\boldsymbol\theta}\}$
\UNTIL{convergence}
\STATE $\boldsymbol\pi^\star \leftarrow \psi'(\boldsymbol\theta^\star)$
\end{algorithmic}
\end{algorithm}

\subsection{Sparse Extension}
\label{subsec:sparse}

In the separable case, it is possible to develop a sparse extension of both our methods ASA and NASA. Storing and updating full $d \times d$ matrices becomes expensive with the data dimension. Instead, we allow for infinite entries in the cost matrix $\boldsymbol\gamma$, meaning that the transport of mass between certain bins is proscribed. As a result, the corresponding entries of $\boldsymbol\pi^\star$ must be null. Eventually, we can drop all these entries so that we just need to store and update the remaining ones. The RMD via the Frobenius inner product $\langle \boldsymbol\pi^\star, \boldsymbol\gamma \rangle$ is then computed without accounting for discarded entries, or equivalently by setting indefinite element-wise products $0 \times \infty = 0$ by convention, so it naturally costs nothing to move no mass on a path that is forbidden. This leads to an expected complexity in $O(r)$, where $r$ is the number of finite entries in $\boldsymbol\gamma$. Typically, $r$ can be chosen in the order of magnitude of $d$, so as to obtain a linear instead of quadratic empirical complexity.

In practice, both ASA and NASA are compatible with this strategy. We always have $\lim_{\theta \to -\infty} \psi'(\theta) = 0$ under assumptions (A) for ASA. Under assumptions (B) for NASA, this limit might be finite or infinite but is necessarily negative, so also leads to $0$ after enforcing non-negativity by projection onto the non-negative orthant. As a result, the obtained sequence of projections preserves the desired zeros in both algorithms, and an infinite element-wise cost does lead to no mass transport at all between the corresponding bins. In theory, we can understand this extension in light of the dual formulation seen as a Bregman projection in~\eqref{rotbp}. Under assumptions (B), we always have $0 \in \interior(\dom \phi)$ and thus $B_\phi(0 \Vert 0) = 0$. Hence, Dykstra's algorithm is readily applicable in the sparse version. Under assumptions (A), however, we have $0 \notin \interior(\dom \phi)$, and even sometimes $0 \notin \dom \phi$ as for the Itakura-Saito divergence. We can nonetheless extend the domain of the element-wise divergence at the origin by continuity on the diagonal, that is, by setting it null as $B_\phi(0 \Vert 0) = 0$. This is akin to considering absolutely continuous measures, also known as dominated measures, and Radon-Nikodym derivatives to generalize the definition of Bregman divergences. \cite{Kurras2015} then showed that the POCS method still holds with this convention by introducing a notion of locally affine spaces.

With such a sparse extension, however, we must take care that a sparse solution does exist, meaning that there is a transport plan in the transport polytope that has the desired zeros. For example, if all entries of $\boldsymbol\gamma$ are infinite, then there are obviously no possible sparse solutions since we enforce all entries of the plan to be null. A necessary condition for the existence of a sparse solution is that for any entry $q_j$, all entries $p_k$ from which we are allowed to transport mass must provide enough total mass to fill $q_j$ completely. Similarly, for any entry $p_i$, all entries $q_k$ to which we are allowed to transport mass must require enough total mass to empty $p_i$ completely. Unfortunately, sufficient conditions are not so intuitive. \cite[Theorem~4.1]{Idel2016} studied such problems thoroughly and elucidated several necessary and sufficient conditions for sparse solutions to exist, but these conditions are nontrivial to use from in practice. \cite{Kurras2015} advocates trying first to compute a solution with the desired sparsity, and if no solution can be found, then gradually reduce sparsity until a solution is found. This might still speed up computation drastically because of the linear instead of quadratic complexity. Lastly, we remark that it is not evident to propose a sparse extension for the non-separable case in general, since a given entry of $\boldsymbol\gamma$ might influence all entries of $\boldsymbol\pi^\star$.

\subsection{Practical Considerations}
\label{subsec:practice}

As noticed by \cite{Cuturi2013} and \cite{Benamou2015}, the Sinkhorn-Knopp algorithm might fail to converge because of numerical instability when the penalty $\lambda$ gets small. In particular, unless taking special care of numerical stabilization~\cite{Schmitzer2016b}, a direct limitation is the machine precision under which some entries of $\exp(-\boldsymbol\gamma / \lambda)$ are represented as zeros in memory. Such issues occur similarly for other regularizations, notably via the representation $\nabla\psi(-\boldsymbol\gamma / \lambda)$ of the unconstrained solution to project. Therefore, the proposed methods are actually competitive in a range where the penalty $\lambda$ is not too small, and for which the rot mover's plan $\boldsymbol\pi^\star$ exhibits a significant amount of smoothing. Hence, we do not target the same problems as traditional schemes such as interior point methods or the network simplex.

In addition, the different Bregman projections in our algorithms are most of the time approximate up to a given tolerance depending on the termination criterion used for convergence. Exceptions occur for the sum constraints with the Euclidean distance or Kullback-Leibler divergence, as well as the non-negativity constraints in the separable case, which are obtained analytically. A natural question to raise is then whether our algorithms still converge when the projections are approximate only. However, this is relatively hard to answer in theory. We did not observe in practice any problem of convergence when using sufficiently good approximations. Furthermore, first approximations can be quite rough without affecting convergence as long as final approximations are good enough. Sometimes, even alternating a single or two steps of the Newton-Raphson method throughout the main iterations the algorithm still works, though this is not systematic. Thus, we advocate for safety to use a tight tolerance for the auxiliary projections.

We also observed numerical instability of the Newton-Raphson updates for separable divergences under assumptions (A). This is due to the denominator being based on $\psi''$ with limit $\lim_{\theta \to -\infty} \psi''(\theta) = 0$, that is, for entries $\pi$ close to zero. It is possible, however, to make the updates of $\mu_i, \nu_j$ much more stable in practice by using the max truncation operator, despite theoretical guarantees of convergence without it. Specifically, we know that the entries $\pi_{1, ij}^\star$ must lie between $0$ and $p_i$, and $\pi_{2, ij}^\star$ between $0$ and $q_j$. Hence, we can lower bound $\mu_i$ and $\nu_j$ by $\hat{\theta}_i - \phi'(p_i)$ and $\hat{\theta}_j - \phi'(q_j)$, respectively. Interestingly, this also speeds up the convergence of the updates significantly when the initialization by $0$ is far from the actual solution.

A possible termination criterion for the main and auxiliary iterations is to compute the marginal difference between the updated matrix and $\p, \q$. In the auxiliary iterations for the two scaling projections, we compare the sums of rows or columns to $\p$ or $\q$ respectively, and in the main iterations of the algorithm, we compare both marginals simultaneously. Typically, we can use the $\ell_p$ (quasi-)norm with $0 < p \leq +\infty$ to assess the marginal difference, and the auxiliary tolerance should be at least the main one for sufficient precision of the approximations. Two alternative quantities in absolute or relative scales can also be used for termination, either the variation with $\ell_p$ (quasi-)norm in the updated matrix or in the updated distance. Here the auxiliary tolerance should be at least the square of the main one. This seems reasonable for $\boldsymbol\pi^\star$ given the quadratic rate of convergence for the Newton-Raphson method versus the linear one for alternate Bregman projections, as well as for $\langle \boldsymbol\pi^\star, \boldsymbol\gamma \rangle$ under the Cauchy-Schwarz inequality. In all cases, convergence can be checked either after each iteration or after a given number of iterations to reduce the underlying cost of computing the termination criterion. We can also fix a maximum number of main and auxiliary iterations to limit the overall running time.

Regarding implementation, the matrix and vector operations used for ASA and NASA in the separable case are well-suited for fast calculation on a GPU and for processing of multiple input distributions in parallel. By working directly in the primal parameter space, the Sinkhorn-Knopp algorithm is also readily suited for dealing with sparse plans, based on existing libraries. In more general ROT problems, however, a specific library should be written for the sparse extension because null entries in the transport plan are not represented by null entries in the dual parameter space, so that tailored data structures and operations for such matrices need to be coded. Therefore, we only implemented and will focus in our experiments on the non-sparse version of our methods.

Finally, although we implicitly assumed throughout that the entries of $\p$ and $\q$ are strictly comprised between $0$ and $1$ for theoretical issues, it is often possible in practice to deal explicitly with null or unit entries in the input distributions. Intuitively, no mass can be moved from or to a null entry, so the transport plans have null rows and columns for the corresponding null entries of $\p$ and $\q$, respectively. In the separable case, we can thus simply remove these entries, solve the reduced ROT problem, and reinsert the corresponding null entries in the rot mover's plan $\boldsymbol\pi^\star$. The same reasoning as for the sparse extension can be made to show that our two algorithms still hold with this strategy from a theoretical standpoint. In the non-separable case, however, this is not as straightforward again because the influences of the different entries of $\boldsymbol\pi^\star$ are interleaved through the regularizer $\phi$. Nonetheless, as long as we have ${[0, 1)}^{d \times d} \subset \interior(\dom \phi)$ under assumptions (B), then we have $\Pi(\p, \q) \subset \interior(\dom \phi)$ and we can apply NASA without modification. This is notably the case for the Mahalanobis distances whose domain is $\R^{d \times d}$. For a non-separable regularizer under assumptions (A), it is not easy to account for null entries because the constraint qualification $\Pi(\p, \q) \cap \interior(\dom \phi) \neq \emptyset$ never holds due to mandatory null entries in the transport plans. Nevertheless, common regularizers under assumptions (A), including the ones used in this paper, are separable in general. Lastly, it is direct to cope with unit entries in $\p$ or $\q$ in all cases, since the transport polytope then reduces to a singleton, so that there is a unique transport plan $\p\q^\top$ which is the rot mover's plan.

\section{Classical Regularizers and Divergences}
\label{sec:examples}

In this section, we discuss the specificities of the ASA (Algorithm~\ref{alg:asa}) and NASA (Algorithm~\ref{alg:nasa}) methods to solve ROT problems for classical regularizers and associated divergences. We start with several separable regularizers under assumptions (A), based on the Boltzmann-Shannon entropy related to the Kullback-Leibler divergence (\texttt{BSKL}, Section~\ref{subsec:kl}), the Burg entropy related to the Itakura-Saito divergence (\texttt{BIS}, Section~\ref{subsec:is}), and the Fermi-Dirac entropy related to a logistic loss function (\texttt{FDLOG}, Section~\ref{subsec:log}), as well as the parametric families of $\beta$\nobreakdash-potentials related to the $\beta$\nobreakdash-divergences (\texttt{BETA}, Section~\ref{subsec:beta}). We then discuss the separable $\ell_p$ quasi-norms (\texttt{LPQN}, Section~\ref{subsec:lpqn}), which require a slight adaptation of assumptions (A). We also consider separable regularizers under assumptions (B) related to $\ell_p$ norms (\texttt{LPN}, Section~\ref{subsec:lpn}), as well as the Euclidean norm related to the Euclidean distance (\texttt{EUC}, Section~\ref{subsec:euc}) and the Hellinger distance (\texttt{HELL}, Section~\ref{subsec:hell}). Finally, we study a non-separable regularizer under assumptions (B) via quadratic forms in relation to Mahalanobis distances (Section~\ref{subsec:mahal}). We plot all separable regularizers in Figure~\ref{fig:reg}. All regularizers and their corresponding divergences are also summed up in Table~\ref{tab:regdiv}. Lastly, we provide in Table~\ref{tab:deriv} the related terms based on derivatives that are needed to instantiate the separable versions of ASA (Algorithm~\ref{alg:sasa}) or NASA (Algorithm~\ref{alg:snasa}) accordingly.

\begin{table}[t!]
\centering
\begin{tabular}{llll}
\toprule
$\phi(\pi)$ / $\phi(\boldsymbol\pi)$ 													&
$B_\phi(\pi \Vert \xi)$ / $B_\phi(\boldsymbol\pi \Vert \boldsymbol\xi)$							&
$\dom \phi$ 																	&
$\dom \psi$ 																	\\
\midrule
\textit{Boltzmann-Shannon entropy} 													&
\textit{Kullback-Leibler divergence} 													&
																			&
																			\\ \vspace{0.25cm}
$\pi \log \pi - \pi + 1$ 																&
$\pi \log \frac{\pi}{\xi} - \pi + \xi$														&
$\R_+$ 																		&
$\R$																			\\
\textit{Burg entropy}																&
\textit{Itakura-Saito divergence}														&
																			&
																			\\ \vspace{0.25cm}
$\pi - \log \pi - 1$ 																&
$\frac{\pi}{\xi} - \log \frac{\pi}{\xi} - 1$													&
$\R_{++}$ 																	&
$(-\infty, 1)$ 																	\\
\textit{Fermi-Dirac entropy}														&
\textit{Logistic loss function}														&
																			&
																			\\ \vspace{0.25cm}
$\pi \log \pi + (1 - \pi) \log (1 - \pi)$ 													&
$\pi \log \frac{\pi}{\xi} + (1 - \pi) \log \frac{1 - \pi}{1 - \xi}$									&
$[0, 1]$ 																		&
$\R$ 																		\\
\textit{$\beta$\nobreakdash-potentials $(0 < \beta < 1)$}									&
\textit{$\beta$\nobreakdash-divergences}												&
																			&
																			\\ \vspace{0.25cm}
$\frac{1}{\beta (\beta - 1)} (\pi^\beta - \beta \pi + \beta - 1)$ 								&
$\frac{1}{\beta (\beta - 1)} (\pi^\beta + (\beta - 1) \xi^\beta - \beta \pi \xi^{\beta - 1})$				&
$\R_+$ 																		&
$(-\infty, \frac{1}{1 - \beta})$														\\
\textit{$\ell_p$ quasi-norms $(0 < p < 1)$}												&
																			&
																			&
																			\\ \vspace{0.25cm}
$-\pi^p$ 																		&
$-\pi^p + p \pi \xi^{p - 1} - (p - 1) \xi^p$												&
$\R_+$ 																		&
$\R_{--}$ 																		\\
\textit{$\ell_p$ norms $(1 < p < +\infty)$}												&
																			&
																			&
																			\\ \vspace{0.25cm}
${|\pi|}^p$ 																		&
${|\pi|}^p - p \pi \, \sgn(\xi) {|\xi|}^{p - 1} + (p - 1) {|\xi|}^p$									&
$\R$ 																		&
$\R$ 																		\\
\textit{Euclidean norm}															&
\textit{Euclidean distance}															&
																			&
																			\\ \vspace{0.25cm}
$\frac{1}{2} \pi^2$ 																&
$\frac{1}{2} {(\pi - \xi)}^2$															&
$\R$ 																		&
$\R$ 																		\\
\textit{Hellinger distance}															&
																			&
																			&
																			\\ \vspace{0.25cm}
$-{(1 - \pi^2)}^{\frac{1}{2}}$ 														&
$(1 - \pi \xi) {(1 - \xi^2)}^{-\frac{1}{2}} - {(1 - \pi^2)}^{\frac{1}{2}}$								&
$[-1, 1]$ 																		&
$\R$ 																		\\
\textit{Quadratic forms $(\P \succ \zeros)$}											&
\textit{Mahalanobis distances}														&
																			&
																			\\ \vspace{0.25cm}
$\frac{1}{2} {\vect(\boldsymbol\pi)}^\top \P \vect(\boldsymbol\pi)$							&
$\frac{1}{2} {\vect(\boldsymbol\pi - \boldsymbol\xi)}^\top \P \vect(\boldsymbol\pi - \boldsymbol\xi)$	&
$\R^{d \times d}$ 																&
$\R^{d \times d}$ 																\\
\bottomrule
\end{tabular}
\caption{Convex regularizers and associated Bregman divergences.}
\label{tab:regdiv}
\end{table}

\begin{table}[t!]
\centering
\begin{tabular}{llll}
\toprule
$\phi(\pi)$											&
$\phi'(\pi)$ 										&
$\psi'(\theta)$										&
$\psi''(\theta)$										\\
\midrule
\textit{Boltzmann-Shannon entropy} 						&
												&
												&
												\\ \vspace{0.25cm}
$\pi \log \pi - \pi + 1$									&
$\log \pi$ 											&
$\exp \theta$ 										&
$\exp \theta$ 										\\
\textit{Burg entropy} 									&
												&
												&
												\\ \vspace{0.25cm}
$\pi - \log \pi - 1$									&
$1 - \pi^{-1}$ 										&
${(1 -\theta)}^{-1}$ 									&
${(1 - \theta)}^{-2}$									\\
\textit{Fermi-Dirac entropy}							&
												&
												&
												\\ \vspace{0.25cm}
$\pi \log \pi + (1 - \pi) \log (1 - \pi)$ 						&
$\log \frac{\pi}{1 - \pi}$ 								&
$\frac{\exp \theta}{(1 + \exp \theta)}$ 					&
$\frac{\exp \theta}{{(1 + \exp \theta)}^2}$					\\
\textit{$\beta$\nobreakdash-potentials $(0 < \beta < 1)$} 		&
												&
												&
												\\ \vspace{0.25cm}
$\frac{1}{\beta (\beta - 1)} (\pi^\beta - \beta \pi + \beta - 1)$	&
$\frac{1}{\beta - 1} (\pi^{\beta - 1} - 1)$ 					&
${((\beta - 1) \theta + 1)}^{\frac{1}{\beta - 1}}$ 				&
${((\beta - 1) \theta + 1)}^{\frac{1}{\beta - 1} - 1}$			\\
\textit{$\ell_p$ quasi-norms $(0 < p < 1)$}					&
												&
												&
												\\ \vspace{0.25cm}
$-\pi^p$ 											&
$-p \pi^{p - 1}$ 										&
$p^{-\frac{1}{p - 1}} {(-\theta)}^{\frac{1}{p - 1}}$ 				&
$- \frac{p^{-\frac{1}{p - 1}}}{p - 1} {(-\theta)}^{\frac{1}{p - 1} - 1}$	\\
\textit{$\ell_p$ norms $(1 < p < +\infty)$}					&
												&
												&
												\\ \vspace{0.25cm}
${|\pi|}^p$ 											&
$p \, \sgn(\pi) {|\pi|}^{p - 1}$ 							&
$p^{-\frac{1}{p - 1}} \, \sgn(\theta) {|\theta|}^{\frac{1}{p - 1}}$	&
$\frac{p^{-\frac{1}{p - 1}}}{p - 1} {|\theta|}^{\frac{1}{p - 1} - 1}$ 	\\
\textit{Euclidean norm}								&
												&
												&
												\\ \vspace{0.25cm}
$\frac{1}{2} \pi^2$									&
$\pi$ 											&
$\theta$ 											&
$1$												\\
\textit{Hellinger distance}								&
												&
												&
												\\ \vspace{0.25cm}
$-{(1 - \pi^2)}^{\frac{1}{2}}$ 							&
$\pi {(1 - \pi^2)}^{-\frac{1}{2}}$ 							&
$\theta {(1 + \theta^2)}^{-\frac{1}{2}}$ 					&
${(1 + \theta^2)}^{-\frac{3}{2}}$ 							\\
\bottomrule
\end{tabular}
\caption{Separable regularizers and related terms based on derivatives.}
\label{tab:deriv}
\end{table}

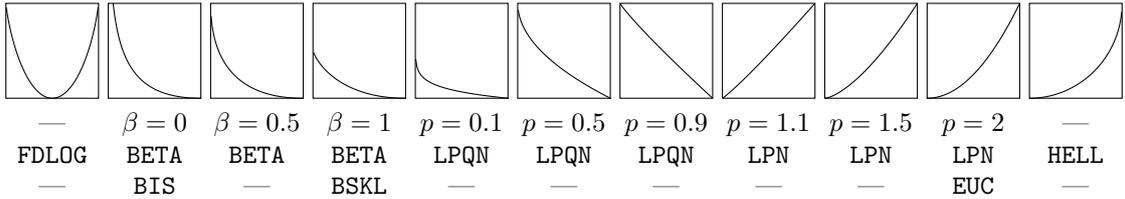
\begin{figure}[t!]
\centering
\begin{tikzpicture}
    \begin{axis}[xmin=0,xmax=1,ymin={ln(0.5)},ymax=0,ticks=none,samples=1000]
    \addplot[black](x,{x*ln(x)+(1-x)*ln(1-x)});
\end{axis}
\end{tikzpicture}
\begin{tikzpicture}
    \begin{axis}[xmin=0,xmax=1,ymin=0,ymax=2,ticks=none,samples=1000]
    \addplot[black](x,{x-ln(x)-1});
\end{axis}
\end{tikzpicture}
\begin{tikzpicture}
    \begin{axis}[xmin=0,xmax=1,ymin=0,ymax=2,ticks=none,samples=1000]
    \addplot[black](x,{(x^0.5-0.5*x+0.5-1)/(0.5*(0.5-1))});
\end{axis}
\end{tikzpicture}
\begin{tikzpicture}
    \begin{axis}[xmin=0,xmax=1,ymin=0,ymax=2,ticks=none,samples=1000]
    \addplot[black](x,{x*ln(x)-x+1});
\end{axis}
\end{tikzpicture}
\begin{tikzpicture}
    \begin{axis}[xmin=0,xmax=1,ymin=-1,ymax=0,ticks=none,samples=1000]
    \addplot[black](x,{-x^0.1});
\end{axis}
\end{tikzpicture}
\begin{tikzpicture}
    \begin{axis}[xmin=0,xmax=1,ymin=-1,ymax=0,ticks=none,samples=1000]
    \addplot[black](x,{-x^0.5});
\end{axis}
\end{tikzpicture}
\begin{tikzpicture}
    \begin{axis}[xmin=0,xmax=1,ymin=-1,ymax=0,ticks=none,samples=1000]
    \addplot[black](x,{-x^0.9});
\end{axis}
\end{tikzpicture}
\begin{tikzpicture}
    \begin{axis}[xmin=0,xmax=1,ymin=0,ymax=1,ticks=none,samples=1000]
    \addplot[black](x,{x^1.1});
\end{axis}
\end{tikzpicture}
\begin{tikzpicture}
    \begin{axis}[xmin=0,xmax=1,ymin=0,ymax=1,ticks=none,samples=1000]
    \addplot[black](x,{x^1.5});
\end{axis}
\end{tikzpicture}
\begin{tikzpicture}
    \begin{axis}[xmin=0,xmax=1,ymin=0,ymax=1,ticks=none,samples=1000]
    \addplot[black](x,{x^2});
\end{axis}
\end{tikzpicture}
\begin{tikzpicture}
    \begin{axis}[xmin=0,xmax=1,ymin=-1,ymax=0,ticks=none,samples=1000]
    \addplot[black](x,{-(1-x^2)^0.5});
\end{axis}
\end{tikzpicture}
\\
\makebox[0.084\textwidth]{---}
\makebox[0.084\textwidth]{$\beta = 0$}
\makebox[0.084\textwidth]{$\beta = 0.5$}
\makebox[0.084\textwidth]{$\beta = 1$}
\makebox[0.084\textwidth]{$p = 0.1$}
\makebox[0.084\textwidth]{$p = 0.5$}
\makebox[0.084\textwidth]{$p = 0.9$}
\makebox[0.084\textwidth]{$p = 1.1$}
\makebox[0.084\textwidth]{$p = 1.5$}
\makebox[0.084\textwidth]{$p = 2$}
\makebox[0.084\textwidth]{---}\\
\makebox[0.084\textwidth]{\texttt{FDLOG}}
\makebox[0.084\textwidth]{\texttt{BETA}}
\makebox[0.084\textwidth]{\texttt{BETA}}
\makebox[0.084\textwidth]{\texttt{BETA}}
\makebox[0.084\textwidth]{\texttt{LPQN}}
\makebox[0.084\textwidth]{\texttt{LPQN}}
\makebox[0.084\textwidth]{\texttt{LPQN}}
\makebox[0.084\textwidth]{\texttt{LPN}}
\makebox[0.084\textwidth]{\texttt{LPN}}
\makebox[0.084\textwidth]{\texttt{LPN}}
\makebox[0.084\textwidth]{\texttt{HELL}}\\
\makebox[0.084\textwidth]{---}
\makebox[0.084\textwidth]{\texttt{BIS}}
\makebox[0.084\textwidth]{---}
\makebox[0.084\textwidth]{\texttt{BSKL}}
\makebox[0.084\textwidth]{---}
\makebox[0.084\textwidth]{---}
\makebox[0.084\textwidth]{---}
\makebox[0.084\textwidth]{---}
\makebox[0.084\textwidth]{---}
\makebox[0.084\textwidth]{\texttt{EUC}}
\makebox[0.084\textwidth]{---}\\
\caption{Separable regularizers on $(0, 1)$.}
\label{fig:reg}
\end{figure}

\subsection{Boltzmann-Shannon Entropy and Kullback-Leibler Divergence}
\label{subsec:kl}

Assumptions (A) hold for minus the Boltzmann-Shannon entropy $\pi \log \pi - \pi + 1$ associated to the Kullback-Leibler divergence. Hence, the ROT problem can be solved with the ASA scheme. In addition, the updates in the POCS technique can be written analytically, leading to the Sinkhorn-Knopp algorithm. Specifically, the two projections amount to normalizing in turn the rows and columns of $\boldsymbol\pi^\star$ so that they sum up to $\p$ and $\q$ respectively:
\begin{align}
\boldsymbol\pi^\star & \leftarrow \diag\left(\frac{\p}{\boldsymbol\pi^\star \ones}\right) \boldsymbol\pi^\star \enspace,\\
\boldsymbol\pi^\star & \leftarrow \boldsymbol\pi^\star \diag\left(\frac{\q}{{\boldsymbol\pi^\star}^\top \ones}\right) \enspace.
\end{align}

This can be optimized by remarking that the iterates ${\boldsymbol\pi^\star}^{(k)}$ after each couple of projections verify:
\begin{equation}
{\boldsymbol\pi^\star}^{(k)} = \diag(\u^{(k)}) \boldsymbol\xi \diag(\v^{(k)}) \enspace,
\end{equation}
where $\boldsymbol\xi = \exp(-\boldsymbol\gamma / \lambda)$, and vectors $\u^{(k)}, \v^{(k)}$ satisfy the following recursion:
\begin{align}
\u^{(k)} & = \frac{\p}{\boldsymbol\xi \v^{(k - 1)}} \enspace,\\
\v^{(k)} & = \frac{\q}{\boldsymbol\xi^\top \u^{(k)}} \enspace,
\end{align}
with convention $\v^{(0)} = \ones$. This allows a fast implementation by performing only matrix-vector multiplications using a fixed matrix $\boldsymbol\xi = \exp(-\boldsymbol\gamma / \lambda)$. We can further save one element-wise vector multiplication per update:
\begin{align}
\u & \leftarrow \frac{\ones}{\diag\left(\frac{\ones}{\p}\right) \boldsymbol\xi \, \v} \enspace,\\
\v & \leftarrow \frac{\ones}{\diag\left(\frac{\ones}{\q}\right) \boldsymbol\xi^\top \, \u} \enspace,
\end{align}
where the matrices $\diag\left(\frac{\ones}{\p}\right) \boldsymbol\xi$ and $\diag\left(\frac{\ones}{\q}\right) \boldsymbol\xi^\top$ are precomputed and stored.

\subsection{Burg Entropy and Itakura-Saito Divergence}
\label{subsec:is}

Assumptions (A) also hold for minus the Burg entropy $\pi - \log \pi - 1$ associated to the Itakura-Saito divergence, so the ROT problem can be solved with the ASA scheme. Eventually, the Newton-Raphson steps to update the alternate projections in POCS technique can be written as follows:
\begin{align}
\boldsymbol\theta^\star & \leftarrow \boldsymbol\theta^\star - \frac{{(1 - \boldsymbol\theta^\star)}^{-1} \, \ones - \p}{{(1 - \boldsymbol\theta^\star)}^{-2} \, \ones} \, \ones^\top \enspace,\\
\boldsymbol\theta^\star & \leftarrow \boldsymbol\theta^\star - \ones \, \frac{\ones^\top \, {(1 - \boldsymbol\theta^\star)}^{-1} - \q^\top}{\ones^\top \, {(1 - \boldsymbol\theta^\star)}^{-2}} \enspace.
\end{align}

Each step can be optimized by computing first an element-wise matrix inverse ${(1 - \boldsymbol\theta^\star)}^{-1}$ for the numerator, and then performing an element-wise matrix multiplication of this matrix by itself to obtain a matrix for the denominator instead of applying an additional element-wise matrix power. Since $\psi'$ is convex and strictly increasing with $\psi''$ positive everywhere, the convergence of the updates is guaranteed.

\subsection{Fermi-Dirac Entropy and Logistic Loss Function}
\label{subsec:log}

Assumptions (A) again hold for minus the Fermi-Dirac entropy $\pi \log \pi + (1 - \pi) \log (1 - \pi)$, also known as bit entropy, associated to a logistic loss function. The ROT problem can thus be solved with the ASA scheme, and the Newton-Raphson steps to update the alternate projections in the POCS technique can be written as follows:
\begin{align}
\boldsymbol\theta^\star & \leftarrow \boldsymbol\theta^\star - \frac{\frac{\exp \boldsymbol\theta^\star}{1 + \exp \boldsymbol\theta^\star} \, \ones - \p}{\frac{\exp \boldsymbol\theta^\star}{{(1 + \exp \boldsymbol\theta^\star)}^2} \, \ones} \, \ones^\top \enspace,\\
\boldsymbol\theta^\star & \leftarrow \boldsymbol\theta^\star - \ones \, \frac{\frac{\exp \boldsymbol\theta^\star}{1 + \exp \boldsymbol\theta^\star} - \q^\top}{\ones^\top \, \frac{\exp \boldsymbol\theta^\star}{{(1 + \exp \boldsymbol\theta^\star)}^2}} \enspace.
\end{align}

Each step can be optimized by storing first the element-wise matrix exponential $\exp \boldsymbol\theta^\star$, then applying an element-wise matrix division by the temporary matrix $1 + \exp \boldsymbol\theta^\star$ to obtain a matrix for the numerator, and lastly performing an element-wise matrix division of these two matrices to obtain a matrix for the denominator and thus save an additional element-wise matrix power as well as several element-wise matrix exponentials. However, even if $\psi'$ is strictly increasing with $\psi''$ positive everywhere, $\psi'$ is neither convex nor concave and does not verify the necessary and sufficient condition~\eqref{eq:nasc} for global convergence of the Newton-Raphson method.

Nevertheless, $\psi'$ is convex on $\R_-$ and concave on $\R_+$. It thus divides for a given $1 \leq i \leq d$, respectively $1 \leq j \leq d$, the real line into at most $d + 1$ intervals $-\infty < \hat{\theta}_i^{(1)} \leq \hat{\theta}_i^{(2)} \leq \dotsb \leq \hat{\theta}_i^{(d - 1)} \leq \hat{\theta}_i^{(d)} < +\infty$, respectively $-\infty < \check{\theta}_j^{(1)} \leq \check{\theta}_j^{(2)} \leq \dotsb \leq \check{\theta}_j^{(d - 1)} \leq \check{\theta}_j^{(d)} < +\infty$, with the values ${(\hat{\theta}_i^{(k)})}_{1 \leq k \leq d}$ from row $i$ of $\overline{\boldsymbol\theta}$, respectively ${(\check{\theta}_j^{(k)})}_{1 \leq k \leq d}$ from column $j$ of $\overline{\boldsymbol\theta}$, sorted in increasing order. On each of these intervals, the necessary and sufficient condition~\eqref{eq:nasc} is verified since we can decompose $f(\mu_i)$, respectively $g(\nu_j)$, as the sum of an increasing convex and an increasing concave function. Hence, we have global convergence on the interval that contains the solution. It is further possible to restrict the search to the two last intervals only. Indeed, we have $\sum_{j = 1}^d \psi'(\overline{\theta}_{ij} - \hat{\theta}_i^{(d - 1)}) \geq \psi'(\hat{\theta}_i^{(d - 1)} - \hat{\theta}_i^{(d - 1)}) + \psi'(\hat{\theta}_i^{(d)} - \hat{\theta}_i^{(d - 1)}) \geq 2 \psi'(0) = 1$, so that $\hat{\theta}_i^{(d - 1)} < \mu_i < +\infty$. Similarly, we have $\sum_{i = 1}^d \psi'(\overline{\theta}_{ij} - \check{\theta}_j^{(d - 1)}) \geq \psi'(\check{\theta}_j^{(d - 1)} - \check{\theta}_j^{(d - 1)}) + \psi'(\check{\theta}_j^{(d)} - \check{\theta}_j^{(d - 1)}) \geq 2 \psi'(0) = 1$, so that $\check{\theta}_j^{(d - 1)} < \nu_j < +\infty$. As a result, it suffices to initialize $\mu_i$ with $\hat{\theta}_i = \hat{\theta}_i^{(d)} = \max{\{\overline{\theta}_{ij}\}}_{1 \leq j \leq d}$, respectively $\nu_j$ with $\check{\theta}_j = \check{\theta}_j^{(d)} = \max{\{\overline{\theta}_{ij}\}}_{1 \leq i \leq d}$, to guarantee convergence of the updates.

\subsection[Beta-potentials and beta-divergences]{$\beta$-potentials and $\beta$-divergences}
\label{subsec:beta}

Assumptions (A) hold for the $\beta$-potentials $(\pi^\beta - \beta \pi + \beta - 1) / (\beta (\beta - 1))$ with $0 < \beta < 1$, associated to the so-called $\beta$-divergences. Hence, the ROT problem can be solved with the ASA scheme, and the Newton-Raphson steps to update the alternate projections in the POCS technique can be written as follows:
\begin{align}
\boldsymbol\theta^\star & \leftarrow \boldsymbol\theta^\star - \frac{{((\beta - 1) \boldsymbol\theta^\star + 1)}^{\frac{1}{\beta - 1}} \, \ones - \p}{{((\beta - 1) \boldsymbol\theta^\star + 1)}^{\frac{1}{\beta - 1} - 1} \, \ones} \, \ones^\top \enspace,\\
\boldsymbol\theta^\star & \leftarrow \boldsymbol\theta^\star - \ones \, \frac{\ones^\top \, {((\beta - 1) \boldsymbol\theta^\star + 1)}^{\frac{1}{\beta - 1}} - \q^\top}{\ones^\top \, {((\beta - 1) \boldsymbol\theta^\star + 1)}^{\frac{1}{\beta - 1} - 1}} \enspace.
\end{align}

Each step can be optimized by computing first the temporary matrix $(\beta - 1) \boldsymbol\theta^\star + 1$, then applying an element-wise matrix power of $1 / (\beta - 1) - 1$ to this temporary matrix to obtain a matrix for the denominator, and lastly performing an element-wise matrix multiplication of these two matrices to obtain a matrix for the numerator and thus save one element-wise matrix power. Since $\psi'$ is convex and strictly increasing with $\psi''$ positive, the convergence of the updates is guaranteed.

Interestingly, the regularizer tends to minus the Burg and Boltzmann-Shannon entropies in the limit $\beta = 0$ and $\beta = 1$, respectively. Therefore, the $\beta$-divergences interpolate between the Itakura-Saito and Kullback-Leibler divergences. We finally remark that the regularizer can also be defined for other values of the parameter $\beta$ using the same formula, but do not verify assumptions (A) for these values.

\subsection[Lp quasi-norms]{$\ell_p$ quasi-norms}
\label{subsec:lpqn}

Considering regularizers $-\pi^p$ with $0 < p < 1$, all assumptions (A) are verified except from (A5) since $\R_-^{d \times d} \not\subset \dom \psi = \R_{--}^{d \times d}$. Hence, our primal formulation does not hold here because $\zeros \notin \dom \nabla\psi$. However, it is straightforward to check that our dual formulation for ROT problems with the ASA scheme can still be applied as long as the cost matrix $\boldsymbol\gamma$ does not have null entries so that $-\boldsymbol\gamma / \lambda \in \dom \nabla\psi$. Eventually, the Newton-Raphson steps to update the alternate projections in the POCS technique can be written as follows:
\begin{align}
\boldsymbol\theta^\star & \leftarrow \boldsymbol\theta^\star + \frac{{(-\boldsymbol\theta^\star)}^{\frac{1}{p - 1}} \, \ones - p^{\frac{1}{p - 1}} \, \p}{\frac{1}{p - 1} {(-\boldsymbol\theta^\star)}^{\frac{1}{p - 1} - 1} \, \ones} \, \ones^\top \enspace,\\
\boldsymbol\theta^\star & \leftarrow \boldsymbol\theta^\star + \ones \, \frac{\ones^\top \, {(-\boldsymbol\theta^\star)}^{\frac{1}{p - 1}} - p^{\frac{1}{p - 1}} \, \q^\top}{\frac{1}{p - 1} \, \ones^\top \, {(-\boldsymbol\theta^\star)}^{\frac{1}{p - 1} - 1}} \enspace.
\end{align}

Each step can be optimized by computing first the temporary matrix $-\boldsymbol\theta^\star$, then applying an element-wise matrix power of $1 / (p - 1) - 1$ to obtain a matrix for the denominator, and lastly performing an element-wise matrix multiplication of these two matrices to obtain a matrix for the numerator and thus save one element-wise matrix power. Since $\psi'$ is convex and strictly increasing with $\psi''$ positive everywhere, the convergence of the updates is guaranteed.

\subsection[Lp norms]{$\ell_p$ norms}
\label{subsec:lpn}

Assumptions (B) hold for the $\ell_p$ norms $|\pi|^p$ with $1 < p < +\infty$, so the ROT problem can be solved with the NASA scheme. For $p \neq 2$, the Newton-Raphson steps to update the alternate projections in Dykstra's algorithm can be written as follows:
\begin{align}
\boldsymbol\tau & \leftarrow \boldsymbol\tau + \frac{\left\{\sgn(\boldsymbol\theta^\star - \boldsymbol\tau \ones^\top) \odot {|\boldsymbol\theta^\star - \boldsymbol\tau \ones^\top|}^{\frac{1}{p - 1}}\right\} \ones - p^{\frac{1}{p - 1}} \, \p}{\frac{1}{p - 1} {|\boldsymbol\theta^\star - \boldsymbol\tau \ones^\top|}^{\frac{1}{p - 1} - 1} \, \ones} \enspace,\\
\boldsymbol\sigma & \leftarrow \boldsymbol\sigma + \frac{\ones^\top \left\{\sgn(\boldsymbol\theta^\star - \boldsymbol\tau \ones^\top) \odot {|\boldsymbol\theta^\star - \boldsymbol\tau \ones^\top|}^{\frac{1}{p - 1}}\right\} - p^{\frac{1}{p - 1}} \, \q^\top}{\frac{1}{p - 1} \, \ones^\top \, {|\boldsymbol\theta^\star - \boldsymbol\tau \ones^\top|}^{\frac{1}{p - 1} - 1}} \enspace.
\end{align}

Denoting $\overline{\boldsymbol\theta} = \boldsymbol\theta^\star - \boldsymbol\tau \ones^\top$ or $\overline{\boldsymbol\theta} = \boldsymbol\theta^\star - \ones \boldsymbol\sigma^\top$ in the respective updates, each step can be optimized by computing first the temporary matrix $|\overline{\boldsymbol\theta}|$, then applying an element-wise matrix power of $1 / (p - 1) - 1$ to obtain a matrix for the denominator, and lastly performing an element-wise matrix multiplication of these two matrices and of $\sgn{\overline{\boldsymbol\theta}}$ to obtain a matrix for the numerator and thus save one element-wise matrix power as well as several vector replications and matrix subtractions. However, even if $\psi'$ is strictly increasing with $\psi'' > 0$ on $\R^*$, $\psi'$ is neither convex nor concave and does not verify the necessary and sufficient condition~\eqref{eq:nasc} for global convergence of the Newton-Raphson method. Moreover, $\psi''$ vanishes at $0$ for $p < 2$, and $\psi'$ is not differentiable at $0$ for $p > 2$.

Nevertheless, $\psi'$ is concave on $\R_-$ and convex on $\R_+$ for $p < 2$, as well as convex on $\R_-$ and concave on $\R_+$ for $p > 2$. It thus divides for a given $1 \leq i \leq d$, respectively $1 \leq j \leq d$, the real line into at most $d + 1$ intervals $-\infty < \hat{\theta}_i^{(1)} \leq \hat{\theta}_i^{(2)} \leq \dotsb \leq \hat{\theta}_i^{(d - 1)} \leq \hat{\theta}_i^{(d)} < +\infty$, respectively $-\infty < \check{\theta}_j^{(1)} \leq \check{\theta}_j^{(2)} \leq \dotsb \leq \check{\theta}_j^{(d - 1)} \leq \check{\theta}_j^{(d)} < +\infty$, with the values ${(\hat{\theta}_i^{(k)})}_{1 \leq k \leq d}$ from row $i$ of $\overline{\boldsymbol\theta}$, respectively ${(\check{\theta}_j^{(k)})}_{1 \leq k \leq d}$ from column $j$ of $\overline{\boldsymbol\theta}$, sorted in increasing order. The necessary and sufficient condition~\eqref{eq:nasc} is verified on the interior of each of these intervals since we can decompose $f(\mu_i)$, respectively $g(\nu_j)$, as the sum of an increasing convex and an increasing concave function. Hence, we have global convergence on the interior of the interval that contains the solution. In both cases, we must remove the finite endpoints to ensure differentiability of $\psi'$ and positivity of $\psi''$. It is also further possible to prune the last interval from the search. Indeed, we have $\sum_{j = 1}^d \psi'(\overline{\theta}_{ij} - \hat{\theta}_i^{(d)}) \leq \sum_{j = 1}^d \psi'(0) = 0$, so that $\mu_i < \hat{\theta}_i = \hat{\theta}_i^{(d)} = \max{\{\overline{\theta}_{ij}\}}_{1 \leq j \leq d}$. Similarly, we have $\sum_{i = 1}^d \psi'(\overline{\theta}_{ij} - \check{\theta}_j^{(d)}) \leq \sum_{i = 1}^d \psi'(0) = 0$, so that $\nu_j < \check{\theta}_j = \check{\theta}_j^{(d)} = \max{\{\overline{\theta}_{ij}\}}_{1 \leq i \leq d}$. Lastly, we can restrict the first interval with a finite lower bound instead. Indeed, we have $\sum_{j = 1}^d \psi'(\overline{\theta}_{ij} - \hat{\theta}_i^{(1)} + \phi'(p_i / d)) \geq \sum_{j = 1}^d \psi'(\phi'(p_i / d)) = p_i$, so that $\mu_i \geq \hat{\theta}_i^{(1)} - \phi'(p_i / d)$. Similarly, we have $\sum_{i = 1}^d \psi'(\overline{\theta}_{ij} - \check{\theta}_j^{(1)} + \phi'(q_j / d)) \geq \sum_{i = 1}^d \psi'(\phi'(q_j / d)) = q_j$, so that $\nu_j \geq \check{\theta}_j^{(1)} - \phi'(q_j / d)$. As a result, we can perform at most $d$ binary searches in parallel to determine within which of the remaining bounded intervals the solutions $\mu_i$, respectively $\nu_j$, lie. Initialization is then done with the midpoint to guarantee convergence of the updates. A given search thus requires a worst-case logarithmic number of tests, each of which requires a linear number of operations, for a total complexity in $O(d^2 \log d)$ instead of $O(d^2)$ if no such binary search were needed.

Now for $p = 2$, the regularizer specializes to the Euclidean norm, leading to the squared Euclidean distance as the associated divergence. In addition, the formula for $\psi''$ still holds with the convention $0^0 = 1$, and $\psi''$ is actually constant equal to $1/2$. Eventually, the projections can be written in closed form, and we can resort to the analytical algorithm derived in the next example specifically for the Euclidean distance, after doubling the penalty $\lambda$ to account for the regularizer being halved.

\subsection{Euclidean Norm and Euclidean Distance}
\label{subsec:euc}

Assumptions (B) hold for half the Euclidean norm $\pi^2 / 2$ associated to half the squared Euclidean distance. Therefore, the ROT problem can be solved with the NASA scheme, where Dykstra's algorithm can actually be written in closed form. Specifically, the non-negative projection reduces to:
\begin{equation}
\boldsymbol\pi^\star \leftarrow \max\{\zeros, \widetilde{\boldsymbol\pi}\} \enspace,
\end{equation}
and is interleaved with the scaling projections which amount to offsetting the rows and columns of $\widetilde{\boldsymbol\pi}$ by an amount such that the rows and columns of $\boldsymbol\pi^\star$ sum up to $\p$ and $\q$ respectively:
\begin{align}
\widetilde{\boldsymbol\pi} & \leftarrow \widetilde{\boldsymbol\pi} - \frac{1}{d} (\boldsymbol\pi^\star \ones - \p) \, \ones^\top \enspace,\\
\widetilde{\boldsymbol\pi} & \leftarrow \widetilde{\boldsymbol\pi} - \frac{1}{d} \, \ones \, (\ones^\top \boldsymbol\pi^\star - \q^\top) \enspace.
\end{align}

As a remark, we notice that half the squared Euclidean distance can be seen as a $\beta$\nobreakdash-divergence using the provided formula for $\beta = 2$. However, the $\beta$\nobreakdash-divergence generated is not of Legendre type because the domain is restricted to $\R_+$, whereas it could actually be extended to $\R$ so that the regularizer would then be of Legendre type. This is why we fall under assumptions (B) rather than assumptions (A) in this case.

\subsection{Hellinger Distance}
\label{subsec:hell}

Assumptions (B) hold for the regularizer $-{(1 - \pi^2)}^{\frac{1}{2}}$ akin to a Hellinger distance. Hence, the ROT problem can be solved with the NASA scheme, and the Newton-Raphson steps to update the alternate projections in Dykstra's algorithm can be written as follows:
\begin{align}
\boldsymbol\tau & \leftarrow \boldsymbol\tau + \frac{\left\{(\boldsymbol\theta^\star - \boldsymbol\tau \ones^\top) \odot {\left(1 + {(\boldsymbol\theta^\star - \boldsymbol\tau \ones^\top)}^2\right)}^{-\frac{1}{2}}\right\} \ones - \p}{{\left(1 + {(\boldsymbol\theta^\star - \boldsymbol\tau \ones^\top)}^2\right)}^{-\frac{3}{2}} \, \ones} \enspace,\\
\boldsymbol\sigma & \leftarrow \boldsymbol\sigma + \frac{\ones^\top \left\{(\boldsymbol\theta^\star - \ones \boldsymbol\sigma^\top) \odot {\left(1 + {(\boldsymbol\theta^\star - \ones \boldsymbol\sigma^\top)}^2\right)}^{-\frac{1}{2}}\right\} - \q^\top}{\ones^\top \, {\left(1 + {(\boldsymbol\theta^\star - \ones \boldsymbol\sigma^\top)}^2\right)}^{-\frac{3}{2}}} \enspace.
\end{align}

Denoting $\overline{\boldsymbol\theta} = \boldsymbol\theta^\star - \boldsymbol\tau \ones^\top$ or $\overline{\boldsymbol\theta} = \boldsymbol\theta^\star - \ones \boldsymbol\sigma^\top$ in the respective updates, each step can be optimized by computing first the temporary matrix $1 / (1 + \overline{\boldsymbol\theta}\vphantom{\boldsymbol\theta}^2) $, then applying an element-wise matrix square root to this temporary matrix, performing an element-wise matrix multiplication of these two matrices to obtain a matrix for the denominator, and lastly an element-wise matrix multiplication of the temporary matrix with $\overline{\boldsymbol\theta}$ to obtain a matrix for the numerator and thus save one element-wise matrix power as well as several vector replications and matrix subtractions. However, even if $\psi'$ is strictly increasing with $\psi''$ positive everywhere, $\psi'$ is neither convex nor concave and does not verify the necessary and sufficient condition~\eqref{eq:nasc} for global convergence of the Newton-Raphson method.

Nevertheless, $\psi'$ is convex on $\R_-$ and concave on $\R_+$. It thus divides for a given $1 \leq i \leq d$, respectively $1 \leq j \leq d$, the real line into at most $d + 1$ intervals $-\infty < \hat{\theta}_i^{(1)} \leq \hat{\theta}_i^{(2)} \leq \dotsb \leq \hat{\theta}_i^{(d - 1)} \leq \hat{\theta}_i^{(d)} < +\infty$, respectively $-\infty < \check{\theta}_j^{(1)} \leq \check{\theta}_j^{(2)} \leq \dotsb \leq \check{\theta}_j^{(d - 1)} \leq \check{\theta}_j^{(d)} < +\infty$, with the values ${(\hat{\theta}_i^{(k)})}_{1 \leq k \leq d}$ from row $i$ of $\overline{\boldsymbol\theta}$, respectively ${(\check{\theta}_j^{(k)})}_{1 \leq k \leq d}$ from column $j$ of $\overline{\boldsymbol\theta}$, sorted in increasing order. On each of these intervals, the necessary and sufficient condition~\eqref{eq:nasc} is verified since we can decompose $f(\mu_i)$, respectively $g(\nu_j)$, as the sum of an increasing convex and an increasing concave function. Hence, we have global convergence on the interval that contains the solution. It is further possible to prune the last interval from the search. Indeed, we have $\sum_{j = 1}^d \psi'(\overline{\theta}_{ij} - \hat{\theta}_i^{(d)}) \leq \sum_{j = 1}^d \psi'(0) = 0$, so that $\mu_i < \hat{\theta}_i = \hat{\theta}_i^{(d)} = \max{\{\overline{\theta}_{ij}\}}_{1 \leq j \leq d}$. Similarly, we have $\sum_{i = 1}^d \psi'(\overline{\theta}_{ij} - \check{\theta}_j^{(d)}) \leq \sum_{i = 1}^d \psi'(0) = 0$, so that $\nu_j < \check{\theta}_j = \check{\theta}_j^{(d)} = \max{\{\overline{\theta}_{ij}\}}_{1 \leq i \leq d}$. Lastly, we can restrict the first interval with a finite lower bound instead. Indeed, we have $\sum_{j = 1}^d \psi'(\overline{\theta}_{ij} - \hat{\theta}_i^{(1)} + \phi'(p_i / d)) \geq \sum_{j = 1}^d \psi'(\phi'(p_i / d)) = p_i$, so that $\mu_i \geq \hat{\theta}_i^{(1)} - \phi'(p_i / d)$. Similarly, we have $\sum_{i = 1}^d \psi'(\overline{\theta}_{ij} - \check{\theta}_j^{(1)} + \phi'(q_j / d)) \geq \sum_{i = 1}^d \psi'(\phi'(q_j / d)) = q_j$, so that $\nu_j \geq \check{\theta}_j^{(1)} - \phi'(q_j / d)$. As a result, we can perform $d$ binary searches in parallel to determine within which of the remaining intervals the solutions $\mu_i$, respectively $\nu_j$, lie. Initialization is then done with the midpoint to guarantee convergence of the updates. A given search requires a worst-case logarithmic number of tests, each of which requires a linear number of operations, for a total complexity in $O(d^2 \log d)$ instead of $O(d^2)$ if no such binary search were needed.

\subsection{Quadratic Forms and Mahalanobis Distances}
\label{subsec:mahal}

Assumptions (B) hold for the quadratic forms $(1 / 2) \, {\vect(\boldsymbol\pi)}^\top \P \vect(\boldsymbol\pi)$ with positive-definite matrix $\P \in \R^{d^2 \times d^2}$, associated to the Mahalanobis distances, so the ROT problem can be solved with the NASA scheme. For a diagonal matrix $\P$, the regularizer is separable and the Newton-Raphson steps to update the alternate projections in Dykstra's algorithm are similar to that for the Euclidean distance with appropriate weights. For a non-diagonal matrix $\P$, however, the regularizer is not separable anymore and we must resort to the generic NASA scheme.

In this general case, the scaling projections amount to convex quadratic programs with linear equality constraints. They can be solved using classical techniques such as the range-space and null-space approaches, Krylov subspace methods or active set strategies. The non-negative projection reduces to a convex quadratic program with a linear inequality constraint. It can be solved elegantly with an iterative algorithm for non-negative quadratic programming proposed by \cite{Sha2007} using multiplicative updates with a complexity in $O(d^4)$. All in all, we recommend using a sparse matrix $\P$ with a block-diagonal structure and an order of magnitude of $d^2$ non-null entries, so as to obtain a quadratic instead of quartic empirical complexity.

\section{Experimental Results}
\label{sec:experiments}

In this section, we present the results of our methods on different experiments. We first design an synthetic test to showcase the behavior of different regularizers and penalties on the output solutions or computational times (Section~\ref{subsec:synth}). We then consider a pattern recognition application to audio scene classification on a real-world dataset (Section~\ref{subsec:audio}).

\subsection{Synthetic Data}
\label{subsec:synth}

We start by visualizing the effects of different regularizers $\phi$ and varying penalties $\lambda$ on synthetic data. For the input distributions, we discretize and normalize continuous densities on a uniform grid ${(x_i)}_{1 \leq i \leq d}$ of $[0, 1]$ with dimension $d = 256$. We use for $\p$ a univariate normal with mean $0.5$ and variance $0.2$, and for $\q$ a mixture of two normals with equal weights, respective means $0.25$ and $0.75$, and same variance $0.1$. We set the cost matrix $\boldsymbol\gamma$ as the squared Euclidean distance $\gamma_{ij} = {(x_i - x_j)}^2$ on the grid. The input distributions (bottom left and top right), cost matrix (top left) and unique earth mover's plan (bottom right) computed for classical OT using the solver of~\cite{Rubner2000} with standard settings, are shown in Figure~\ref{fig:data}.

\begin{figure}[t!]
\centering
\includegraphics[width=4cm, height=2cm]{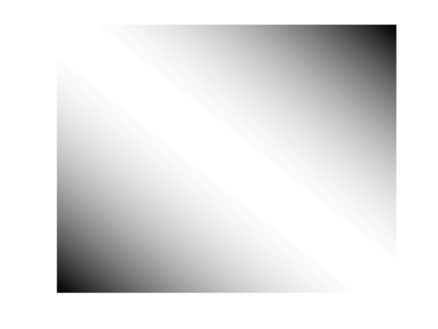}
\includegraphics[width=4cm, height=2cm]{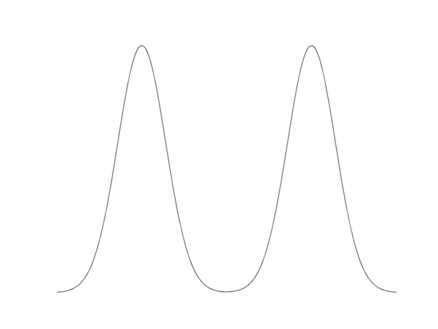}\\
\includegraphics[width=2cm, height=4cm, angle=90]{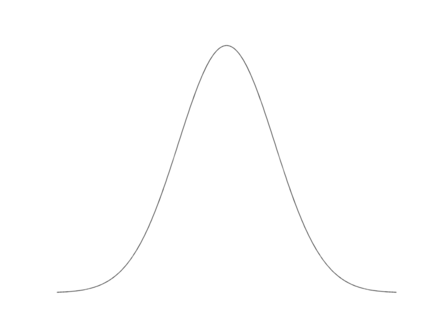}
\includegraphics[width=4cm, height=2cm]{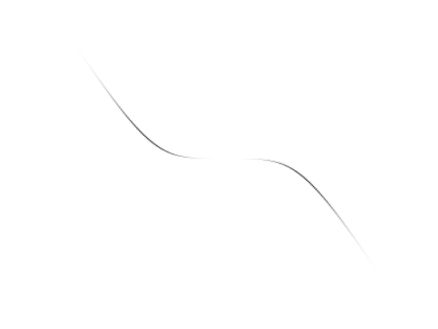}
\caption{Earth mover's plan $\boldsymbol\pi^\star$ for the cost matrix $\boldsymbol\gamma$ and input distributions $\p, \q$.}
\label{fig:data}
\end{figure}

We test all separable regularizers $\phi$ introduced in Section~\ref{sec:examples}. Because these regularizers have different ranges in the sensible values of the rot mover's plans $\boldsymbol\pi^\star$, we manually tune the penalties $\lambda$ so that they feature similar amounts of regularization. For ease of comparison, we set $\lambda = \overline{\lambda} \, \lambda'$, with $\overline{\lambda}$ constant for each $\phi$, and $\lambda'$ varying similarly for all $\phi$. The limit case when $\lambda$ tends to infinity is simply obtained by setting $\boldsymbol\gamma / \lambda = \zeros$ in the algorithms, except from $\ell_p$ quasi\nobreakdash-norms for which we use $\lambda = 10^{10}$. The null values of $\boldsymbol\gamma$ are also fixed to $10^{-12}$ for $\ell_p$ quasi\nobreakdash-norms. We do not limit the number of iterations in the different algorithms, and use a small tolerance of $10^{-8}$ for convergence with the $\ell_\infty$ norm on the marginal difference checked after each iteration as a termination criterion.

The rot mover's plans obtained for ROT for $d = 256$ with the different regularizers and penalties are visualized in Figure~\ref{fig:gamma}. We first observe that all rot mover's plans converge to the earth mover's plan for low values of the penalty as shown theoretically in Property~\ref{property:convergencedown}. Nevertheless, the rot mover's plans exhibit different shapes depending on the regularizers for intermediary and large values of the penalty. In the limit when the penalty grows to infinity, we obtain the transport plan with minimal Bregman information as shown theoretically in Property~\ref{property:convergenceup}. In particular, this leads to $\p\q^\top$ with an ellipsoidal shape for {\texttt{BSKL}} (Boltzmann-Shannon entropy and Kullback-Leibler divergence), meaning that the mass is relatively spread among neighbor bins. The same pattern is observed for {\texttt{FDLOG}} (Fermi-Dirac entropy and logistic loss function), which can be explained in this synthetic example by the rot mover's plans having low values and the two regularizers being equivalent up to a constant in the neighborhood of zero. The profile gets more rectangular for {\texttt{BIS}} (Burg entropy and Itakura-Saito divergence), implying that the mass is even more spread across the different bins. Using an intermediary value $\beta = 0.5$ in {\texttt{BETA}} ($\beta$\nobreakdash-potentials and $\beta$\nobreakdash-divergences) allows the interpolation between these two limits of a rectangle for $\beta = 1$ and an ellipsoid for $\beta = 0$, so that the parameter $\beta$ actually helps to control the spread of mass in the regularization. We observe similar results for {\texttt{LPQN}} ($\ell_p$ quasi-norms) with an ellipsoid for $p = 0.9$, a rectangle for $p = 0.1$, and a shape in between for $p = 0.5$. When the power parameter further increases in {\texttt{LPN}} ($\ell_p$ norms), we obtain new shapes that feature less spread of mass. These shapes for $p = 1.1$ and $p = 1.5$ now interpolate up to a lozenge for $p = 2$ in {\texttt{EUC}} (Euclidean norm and Euclidean distance), so that the parameter $p$ also provides control on the spread of mass. A similar diamond profile is obtained for {\texttt{HELL}} (Hellinger distance), which is due again to the rot mover's plans having low values and the two regularizers being equivalent up to a constant in the neighborhood of zero. Lastly, we remark that varying the penalty between the two extremes allows a smooth interpolation of the earth mover's plan and optimal plan with minimal Bregman information, while keeping similar shapes and effects in terms of spreading of mass.

\begin{figure}[t!]
\centering
\makebox[0.5cm]{}
\makebox[0.5cm]{\rotatebox{90}{$\phi$}}
\makebox[0.5cm]{}
\makebox[0.5cm]{\rotatebox{90}{$\overline{\lambda}$}\rotatebox{45}{/}$\lambda'$}
\makebox[2.4cm]{$10^{-2}$}
\makebox[2.4cm]{$10^{-1}$}
\makebox[2.4cm]{$10^{+0}$}
\makebox[2.4cm]{$10^{+1}$}
\makebox[2.4cm]{$+\infty$}\\
\vspace{0.5cm}
\makebox[0.5cm]{\rotatebox{90}{\makebox[1.1cm]{---}}}
\makebox[0.5cm]{\rotatebox{90}{\makebox[1.1cm]{\texttt{FDLOG}}}}
\makebox[0.5cm]{\rotatebox{90}{\makebox[1.1cm]{---}}}
\makebox[0.5cm]{\rotatebox{90}{\makebox[1.1cm]{$10^{-2}$}}}
\includegraphics[width=2.2cm,height=1.1cm]{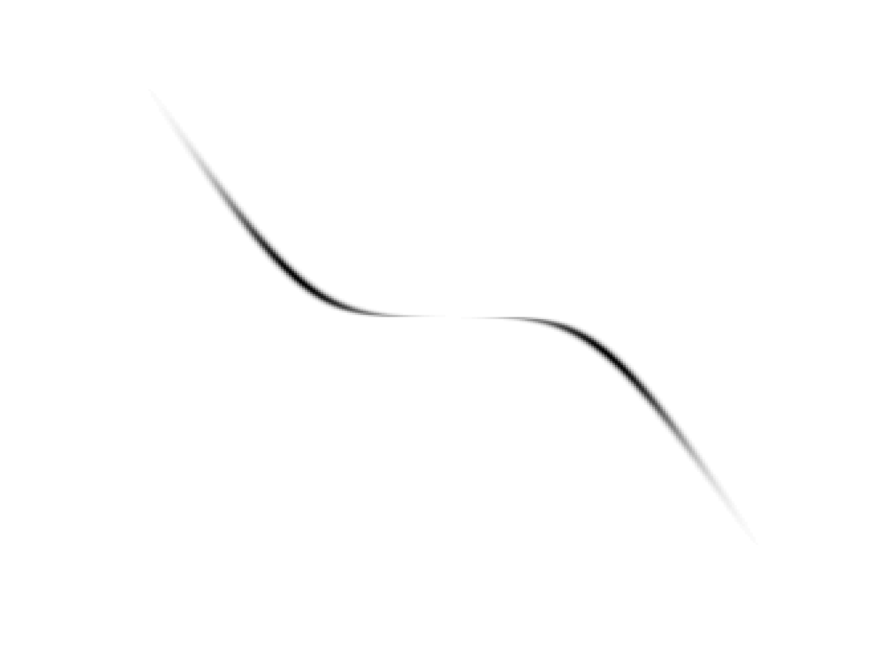}
\includegraphics[width=2.2cm,height=1.1cm]{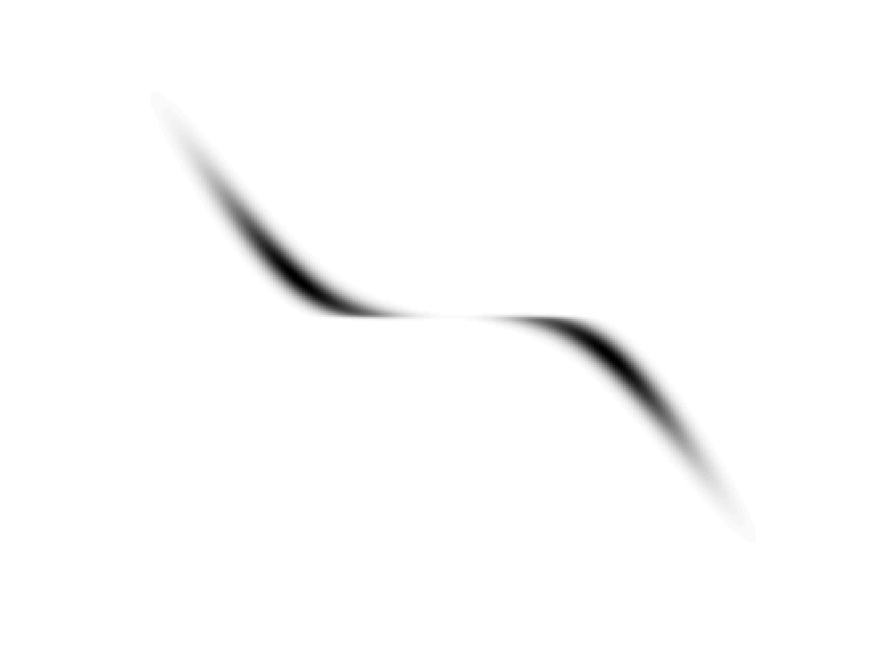}
\includegraphics[width=2.2cm,height=1.1cm]{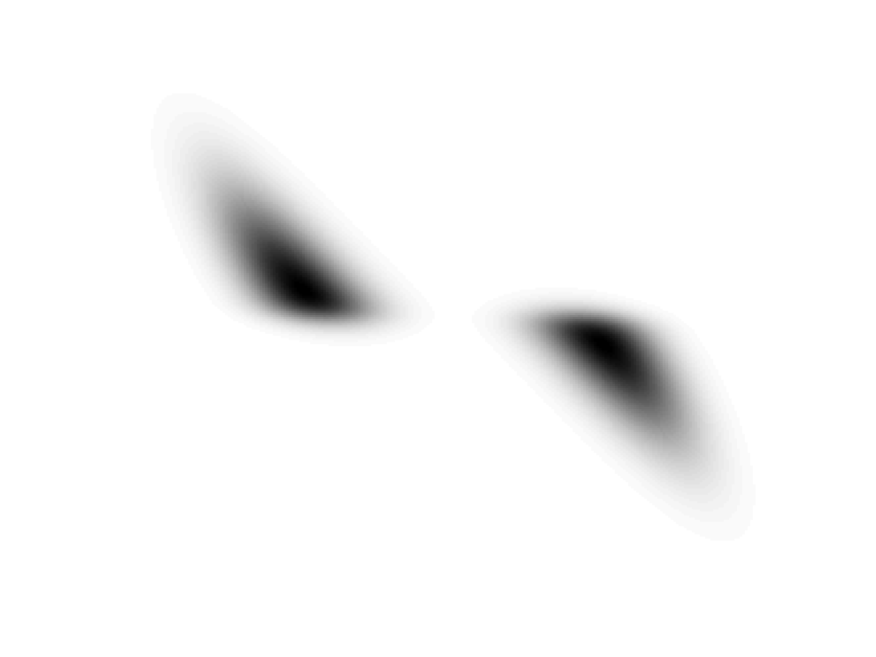}
\includegraphics[width=2.2cm,height=1.1cm]{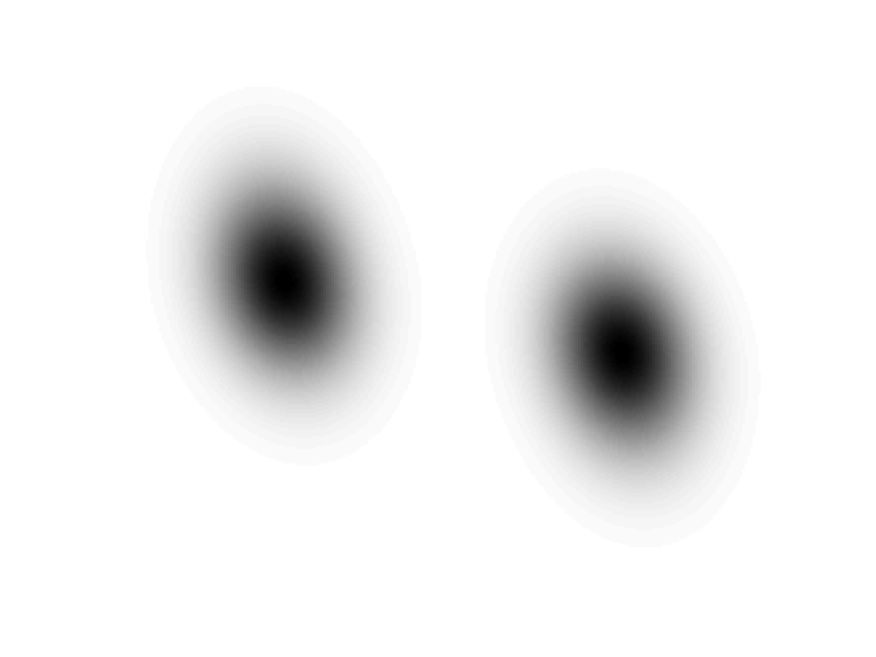}
\includegraphics[width=2.2cm,height=1.1cm]{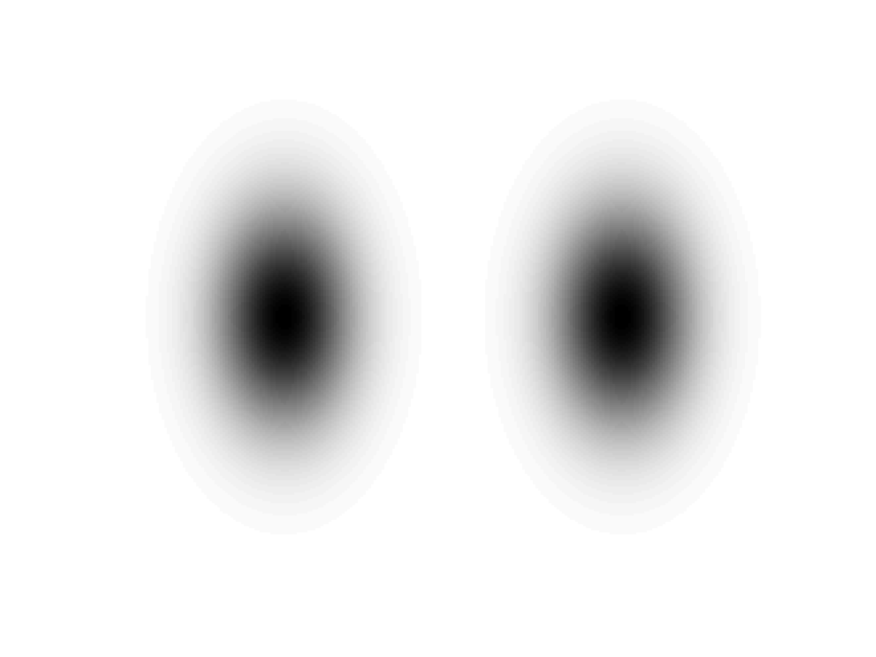}\\
\vspace{0.5cm}
\makebox[0.5cm]{\rotatebox{90}{\makebox[1.1cm]{\texttt{BSKL}}}}
\makebox[0.5cm]{\rotatebox{90}{\makebox[1.1cm]{\texttt{BETA}}}}
\makebox[0.5cm]{\rotatebox{90}{\makebox[1.1cm]{$\beta=1$}}}
\makebox[0.5cm]{\rotatebox{90}{\makebox[1.1cm]{$10^{-2}$}}}
\includegraphics[width=2.2cm,height=1.1cm]{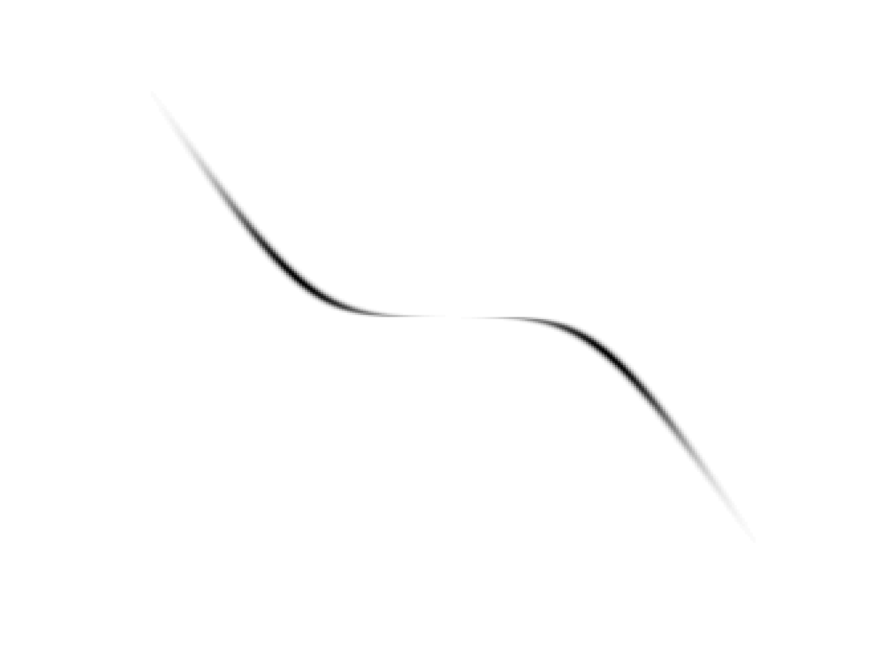}
\includegraphics[width=2.2cm,height=1.1cm]{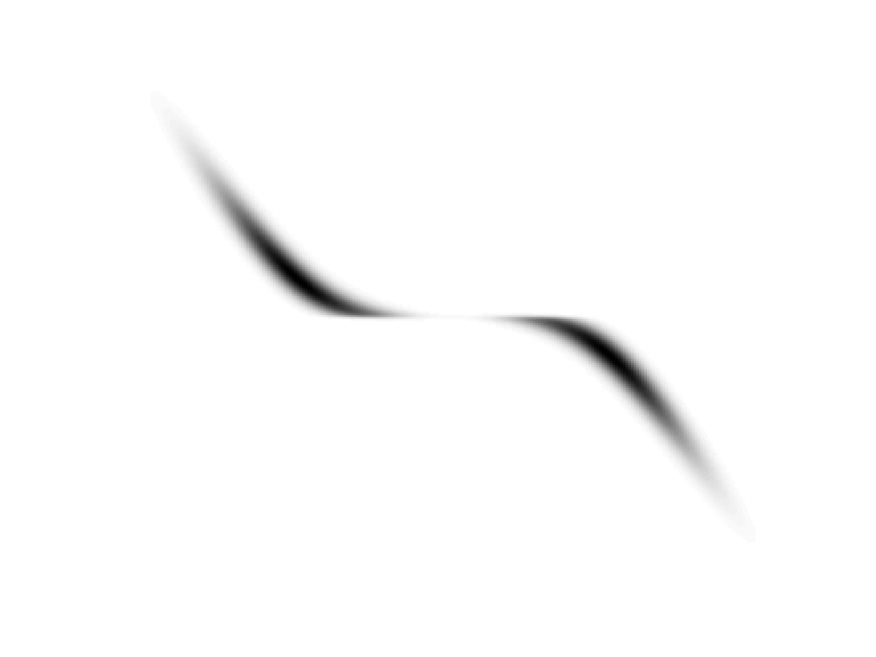}
\includegraphics[width=2.2cm,height=1.1cm]{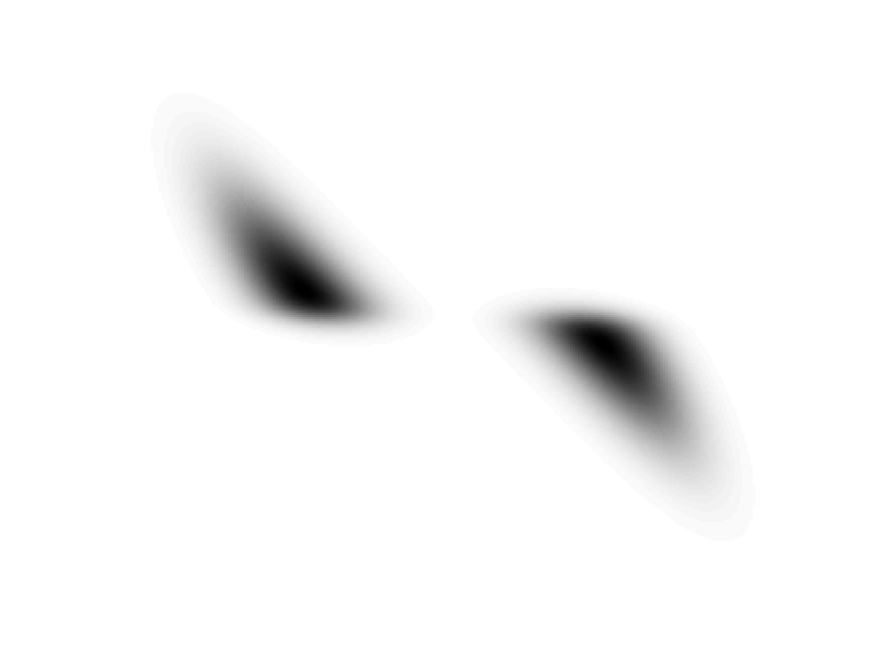}
\includegraphics[width=2.2cm,height=1.1cm]{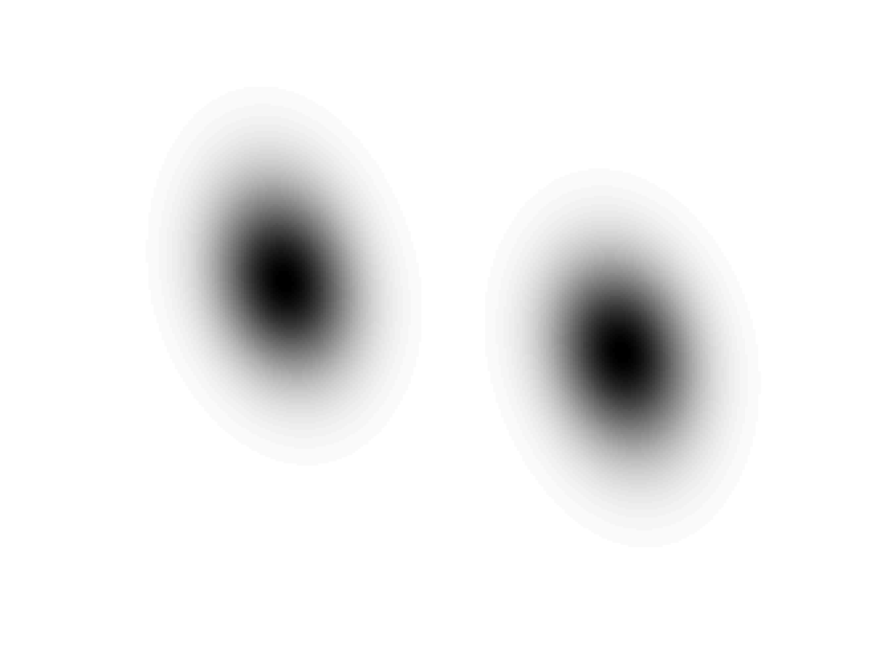}
\includegraphics[width=2.2cm,height=1.1cm]{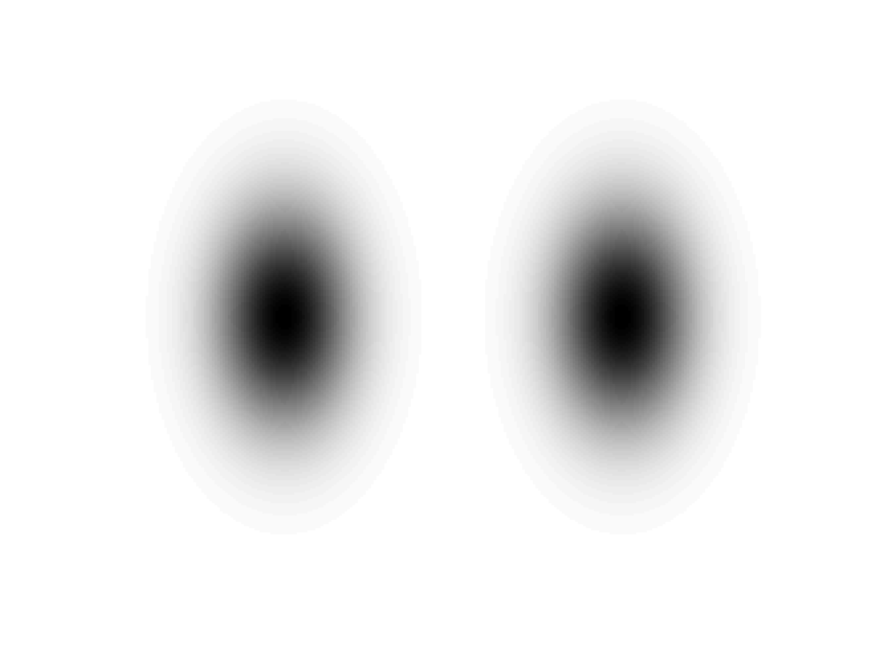}\\
\vspace{0.5cm}
\makebox[0.5cm]{\rotatebox{90}{\makebox[1.1cm]{---}}}
\makebox[0.5cm]{\rotatebox{90}{\makebox[1.1cm]{\texttt{BETA}}}}
\makebox[0.5cm]{\rotatebox{90}{\makebox[1.1cm]{$\beta=0.5$}}}
\makebox[0.5cm]{\rotatebox{90}{\makebox[1.1cm]{$10^{-4}$}}}
\includegraphics[width=2.2cm,height=1.1cm]{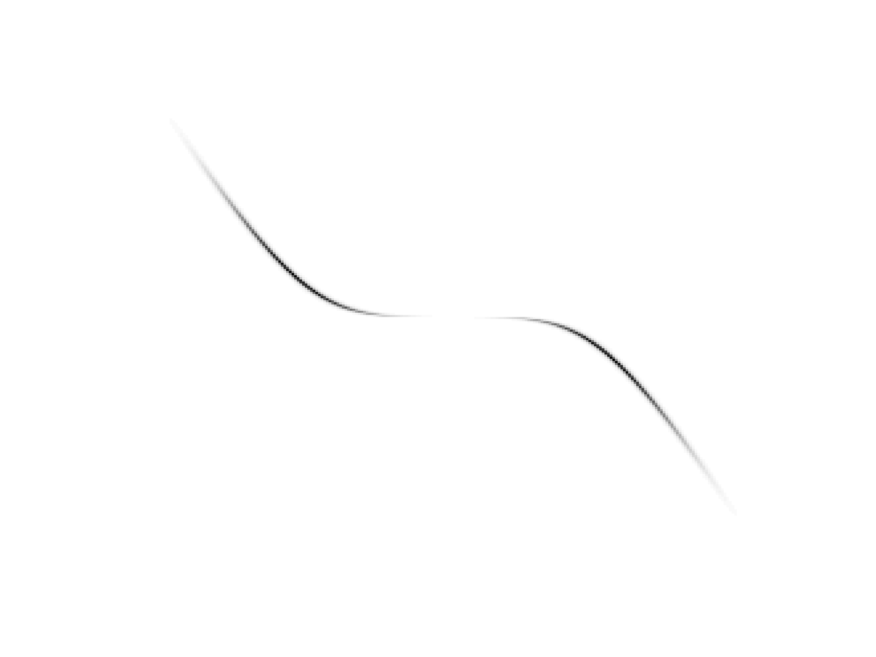}
\includegraphics[width=2.2cm,height=1.1cm]{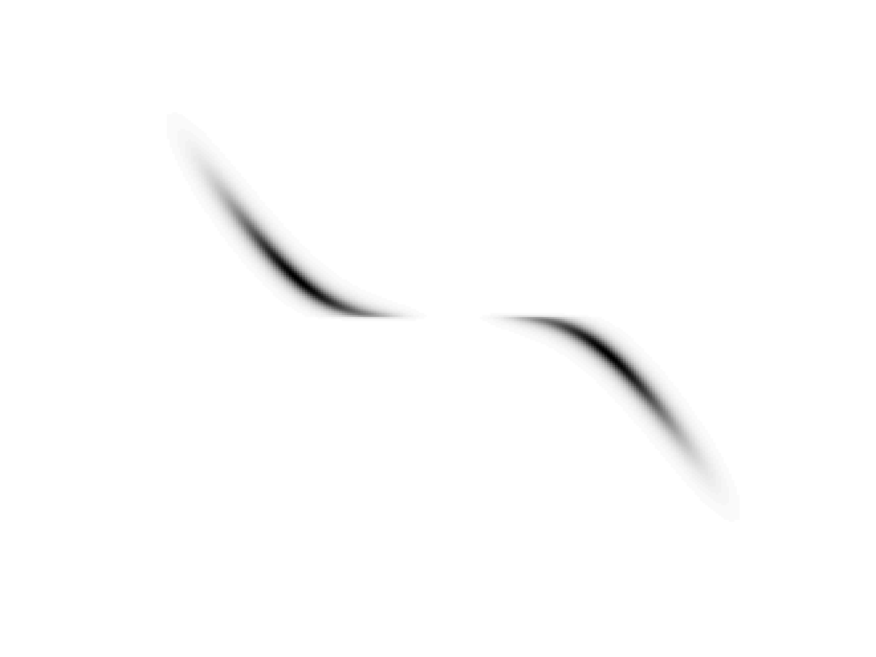}
\includegraphics[width=2.2cm,height=1.1cm]{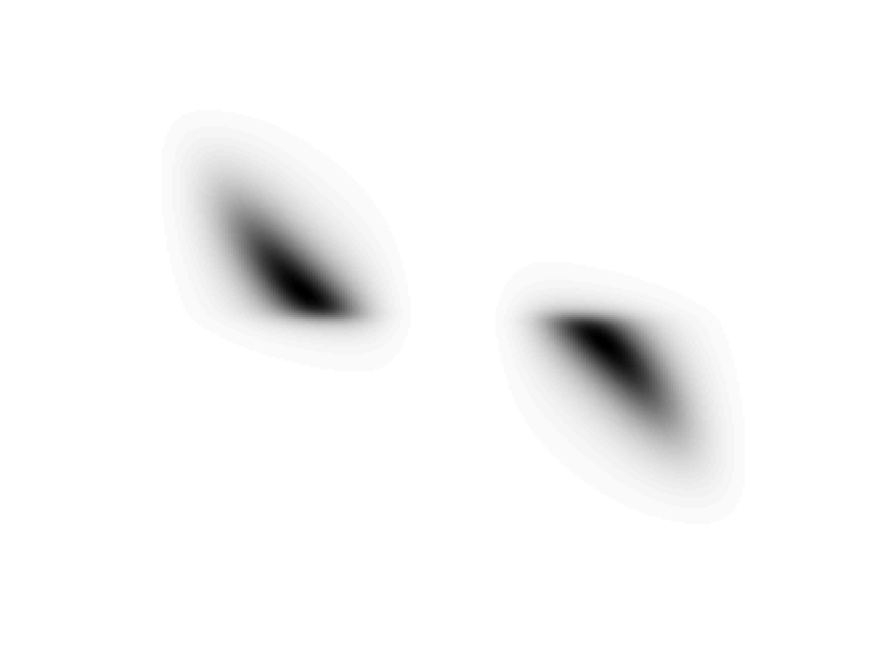}
\includegraphics[width=2.2cm,height=1.1cm]{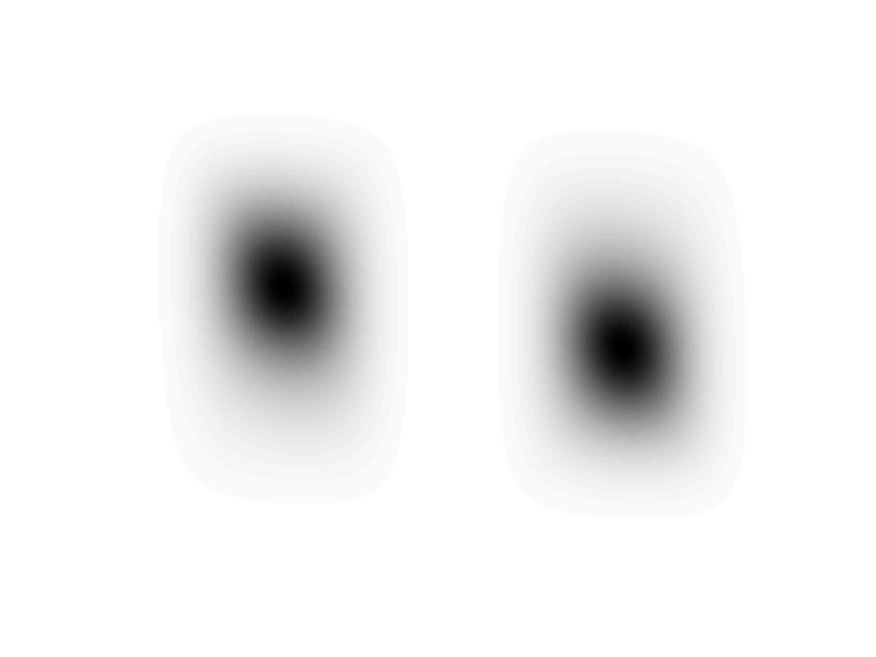}
\includegraphics[width=2.2cm,height=1.1cm]{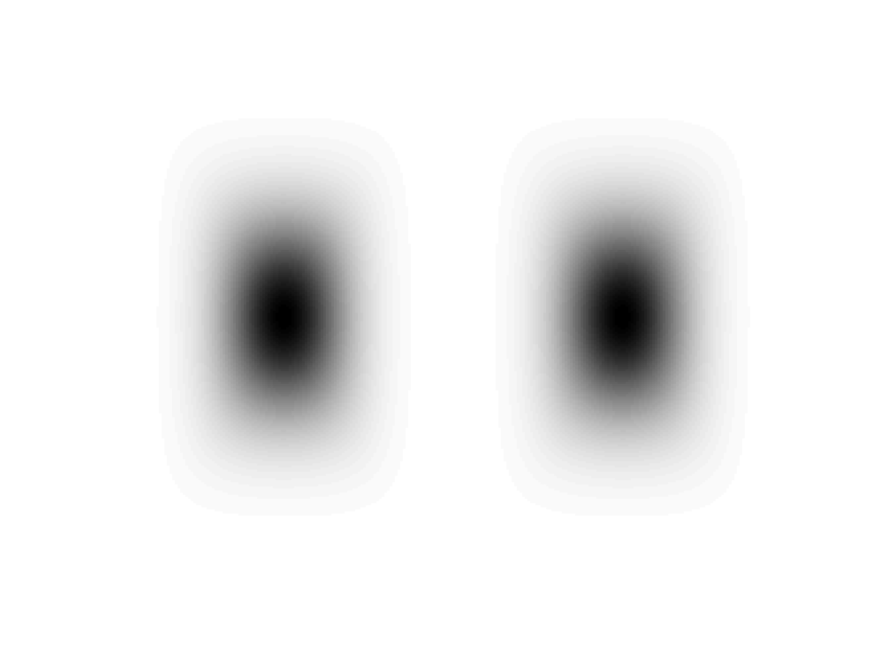}\\
\vspace{0.5cm}
\makebox[0.5cm]{\rotatebox{90}{\makebox[1.1cm]{\texttt{BIS}}}}
\makebox[0.5cm]{\rotatebox{90}{\makebox[1.1cm]{\texttt{BETA}}}}
\makebox[0.5cm]{\rotatebox{90}{\makebox[1.1cm]{$\beta=0$}}}
\makebox[0.5cm]{\rotatebox{90}{\makebox[1.1cm]{$10^{-6}$}}}
\includegraphics[width=2.2cm,height=1.1cm]{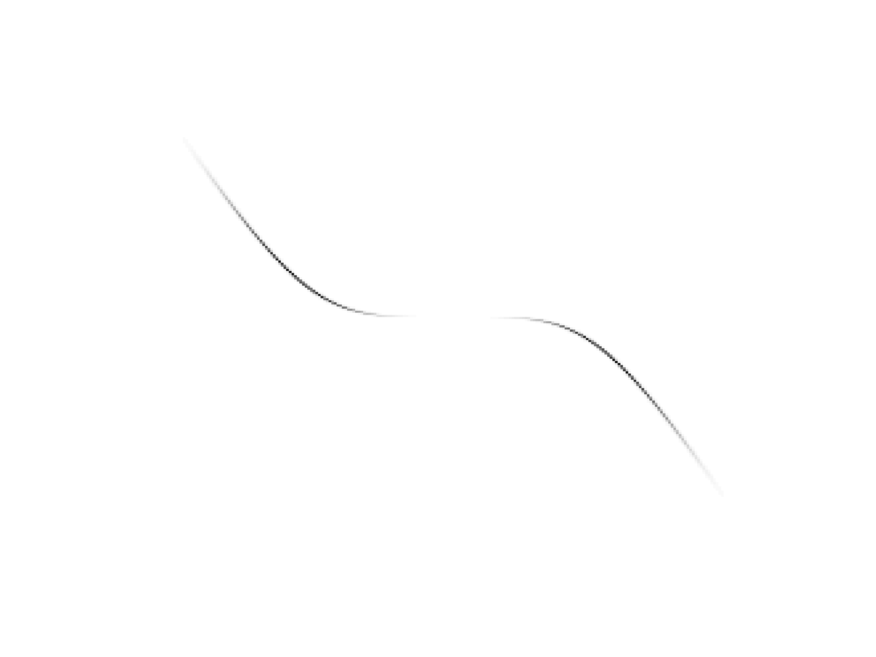}
\includegraphics[width=2.2cm,height=1.1cm]{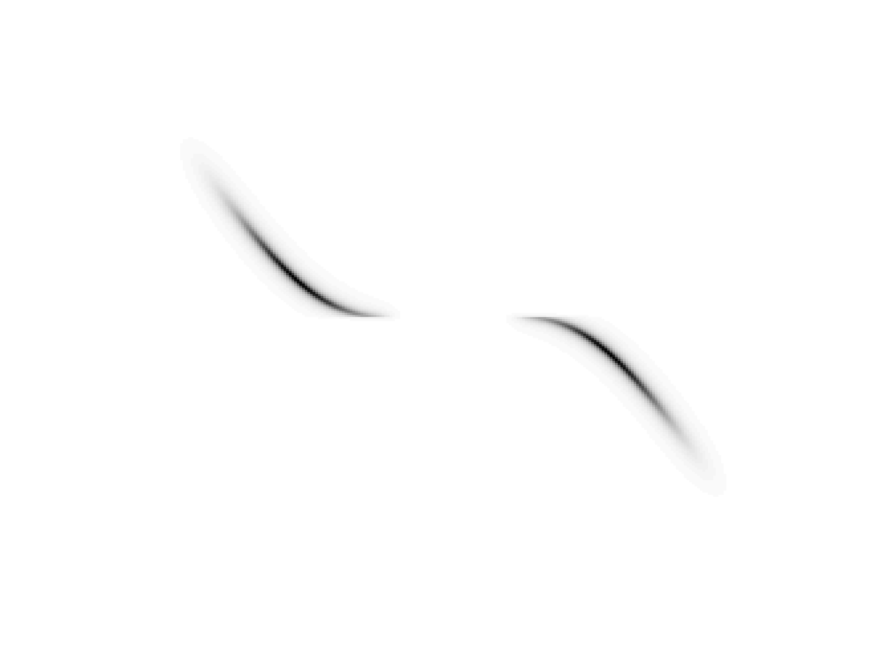}
\includegraphics[width=2.2cm,height=1.1cm]{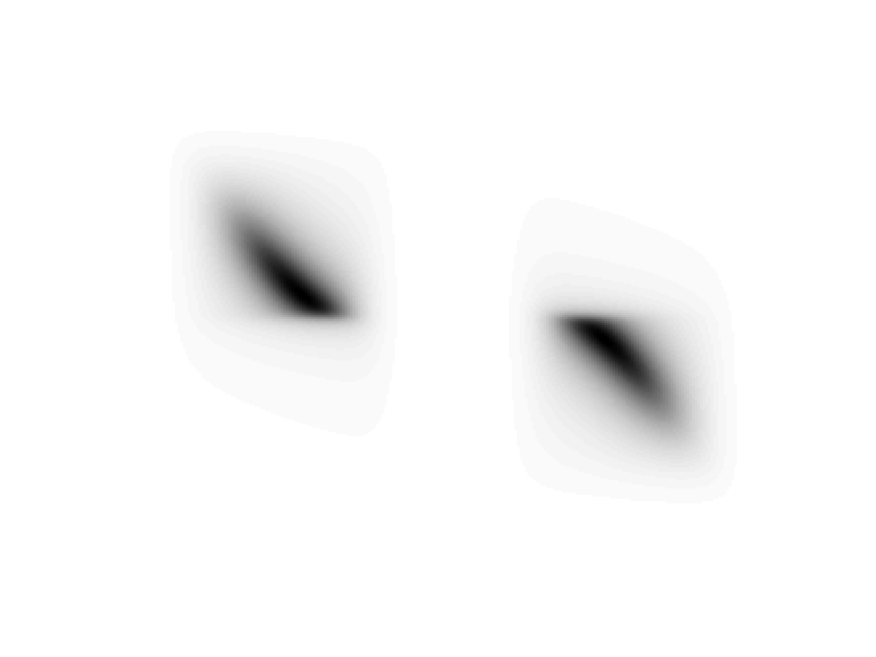}
\includegraphics[width=2.2cm,height=1.1cm]{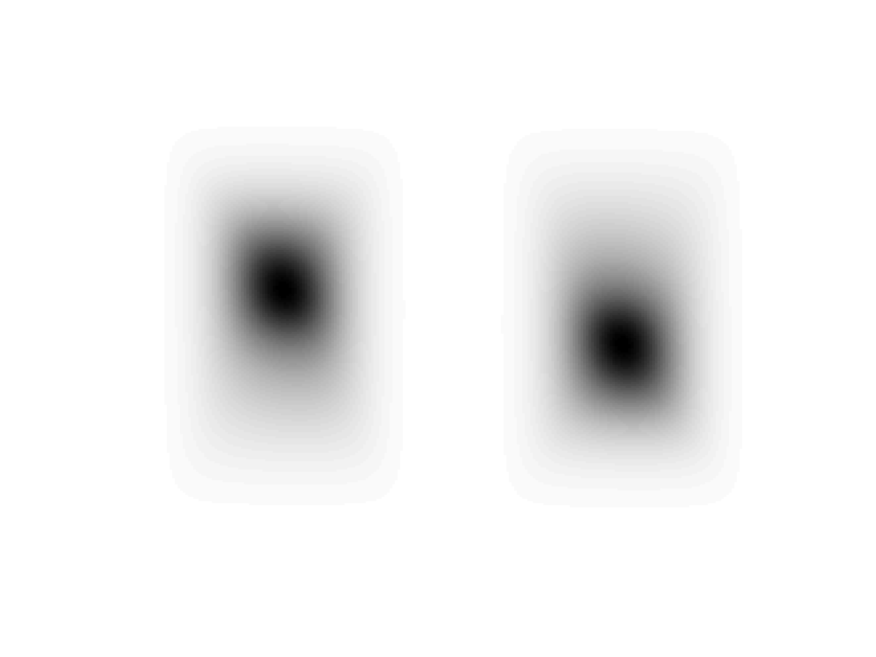}
\includegraphics[width=2.2cm,height=1.1cm]{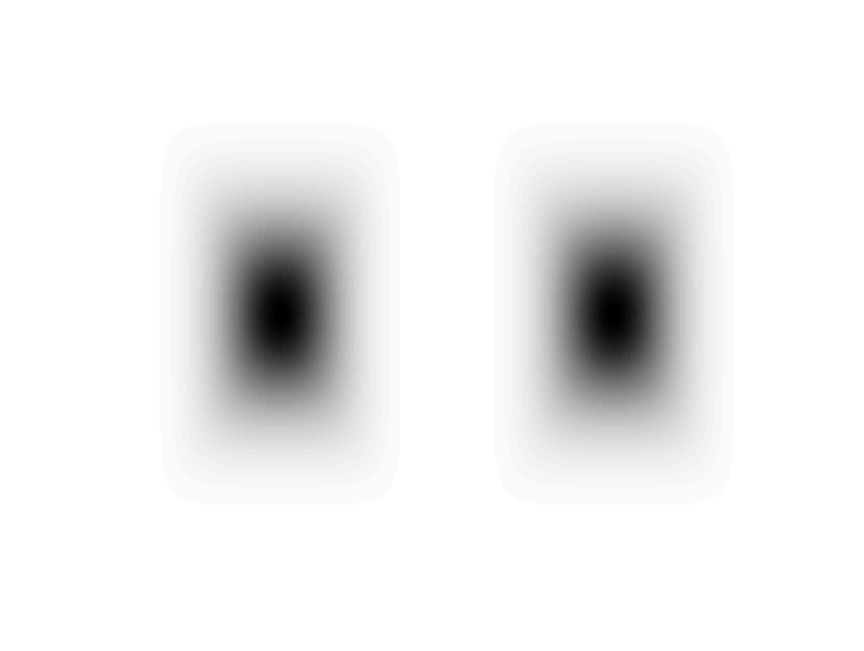}\\
\vspace{0.5cm}
\makebox[0.5cm]{\rotatebox{90}{\makebox[1.1cm]{---}}}
\makebox[0.5cm]{\rotatebox{90}{\makebox[1.1cm]{\texttt{LPQN}}}}
\makebox[0.5cm]{\rotatebox{90}{\makebox[1.1cm]{$p=0.1$}}}
\makebox[0.5cm]{\rotatebox{90}{\makebox[1.1cm]{$10^{-4}$}}}
\includegraphics[width=2.2cm,height=1.1cm]{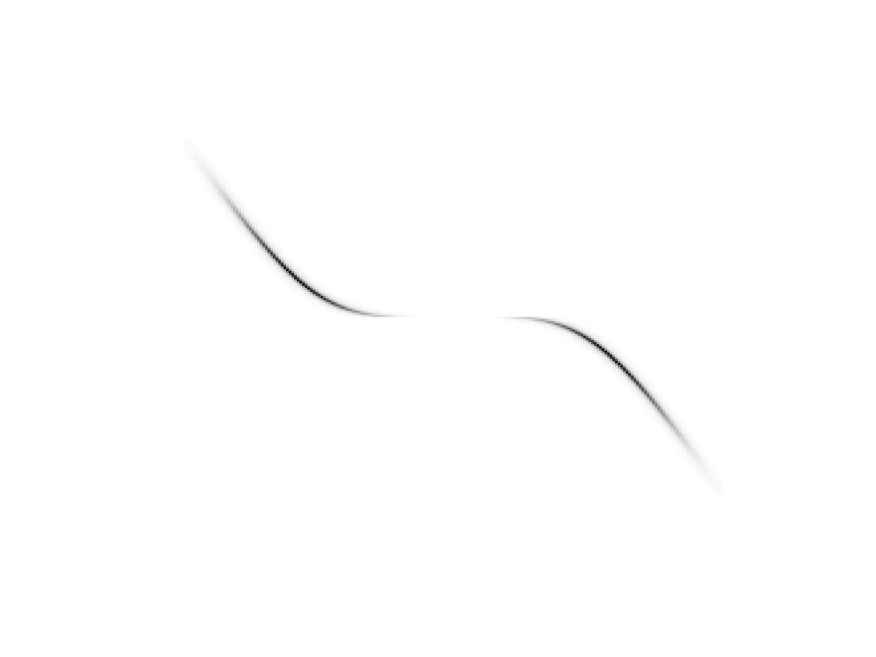}
\includegraphics[width=2.2cm,height=1.1cm]{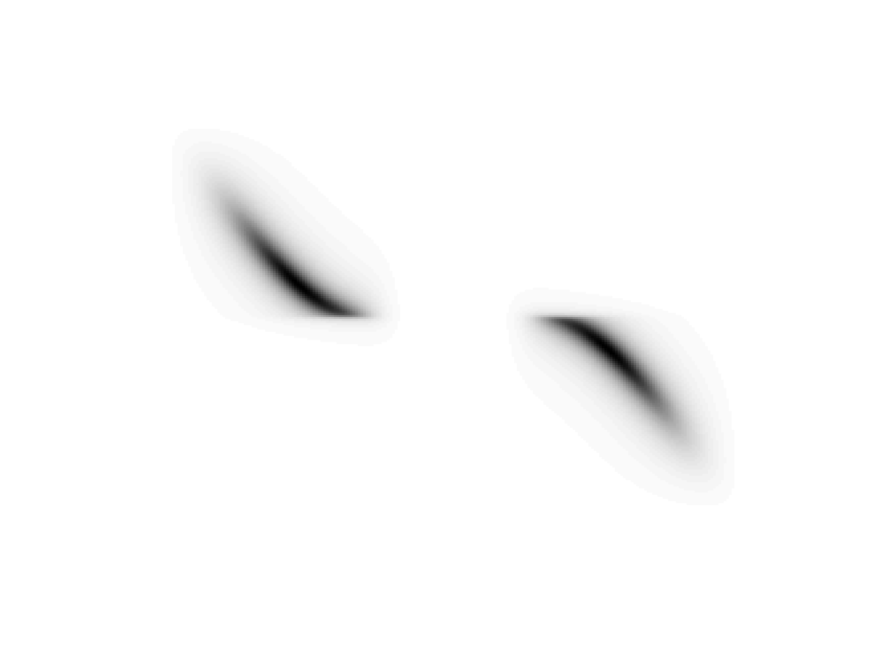}
\includegraphics[width=2.2cm,height=1.1cm]{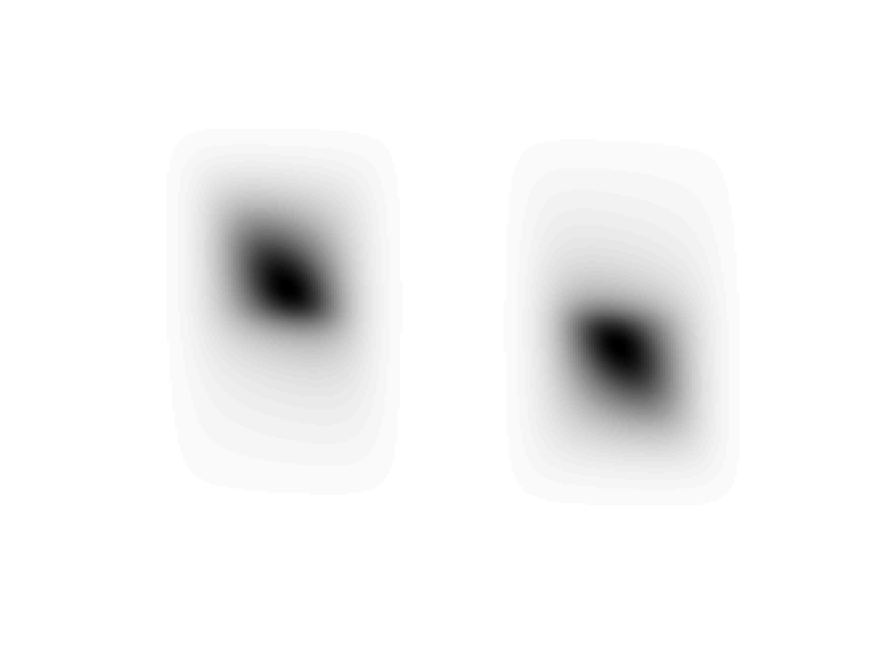}
\includegraphics[width=2.2cm,height=1.1cm]{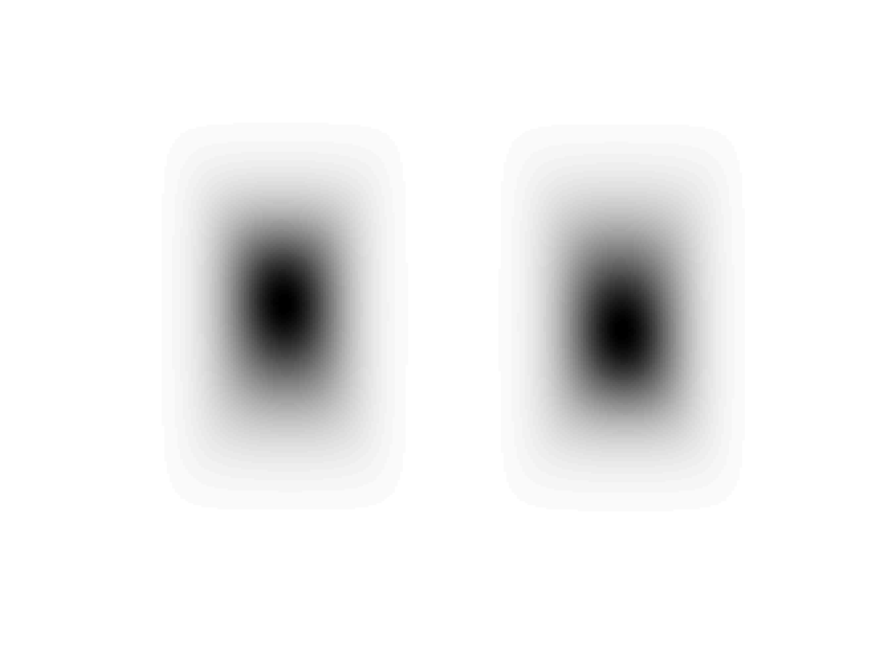}
\includegraphics[width=2.2cm,height=1.1cm]{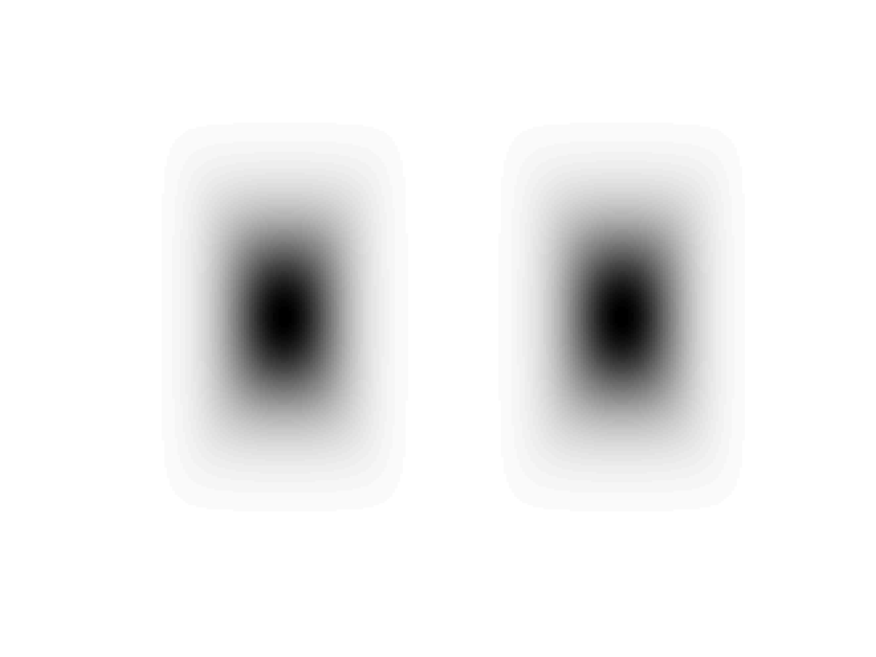}\\
\vspace{0.5cm}
\makebox[0.5cm]{\rotatebox{90}{\makebox[1.1cm]{---}}}
\makebox[0.5cm]{\rotatebox{90}{\makebox[1.1cm]{\texttt{LPQN}}}}
\makebox[0.5cm]{\rotatebox{90}{\makebox[1.1cm]{$p=0.5$}}}
\makebox[0.5cm]{\rotatebox{90}{\makebox[1.1cm]{$10^{-3}$}}}
\includegraphics[width=2.2cm,height=1.1cm]{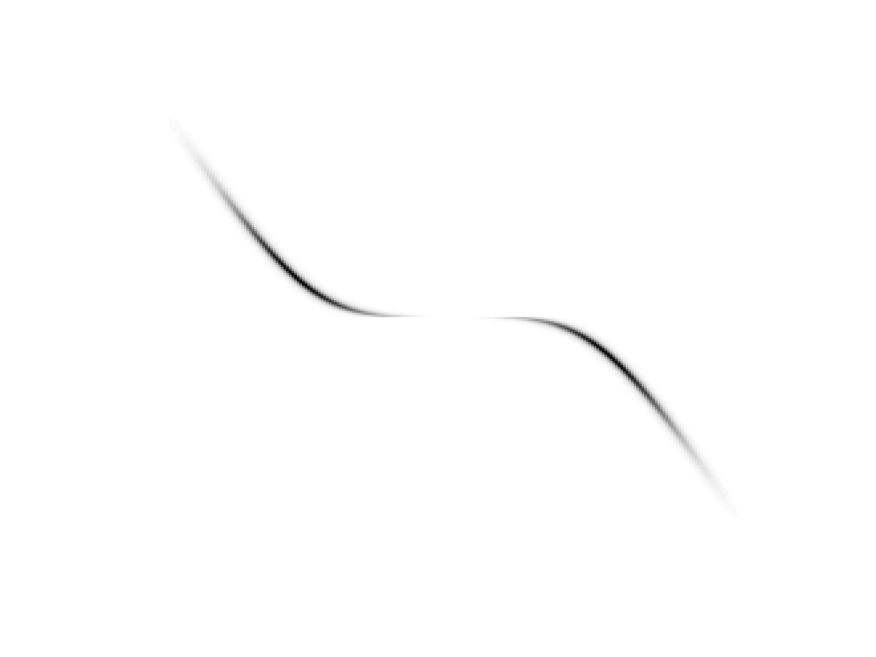}
\includegraphics[width=2.2cm,height=1.1cm]{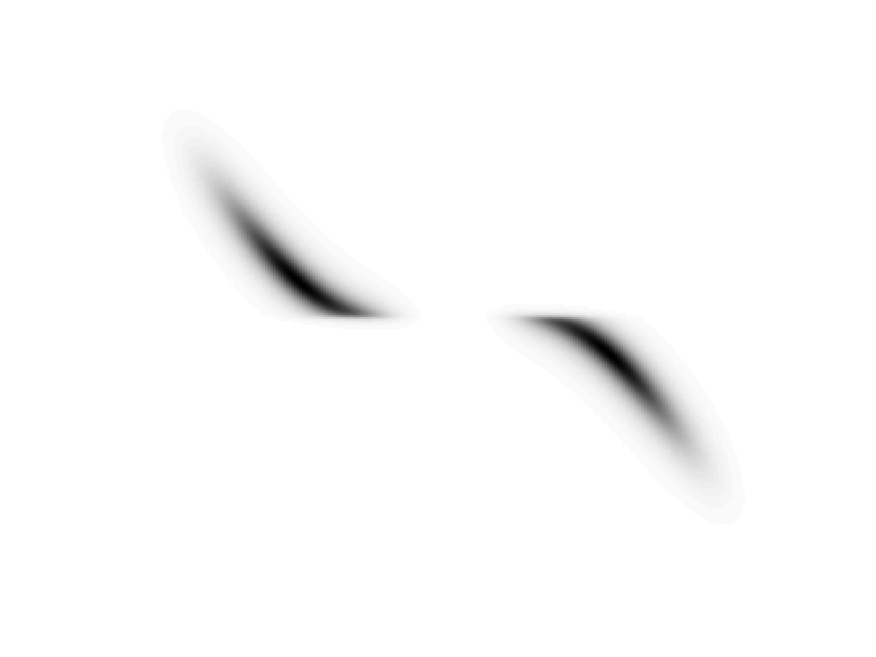}
\includegraphics[width=2.2cm,height=1.1cm]{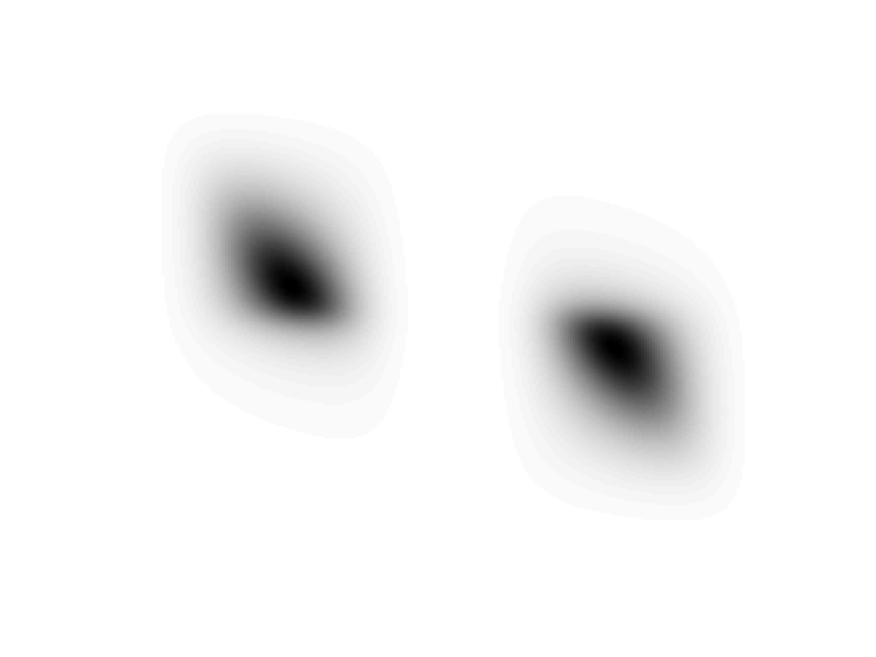}
\includegraphics[width=2.2cm,height=1.1cm]{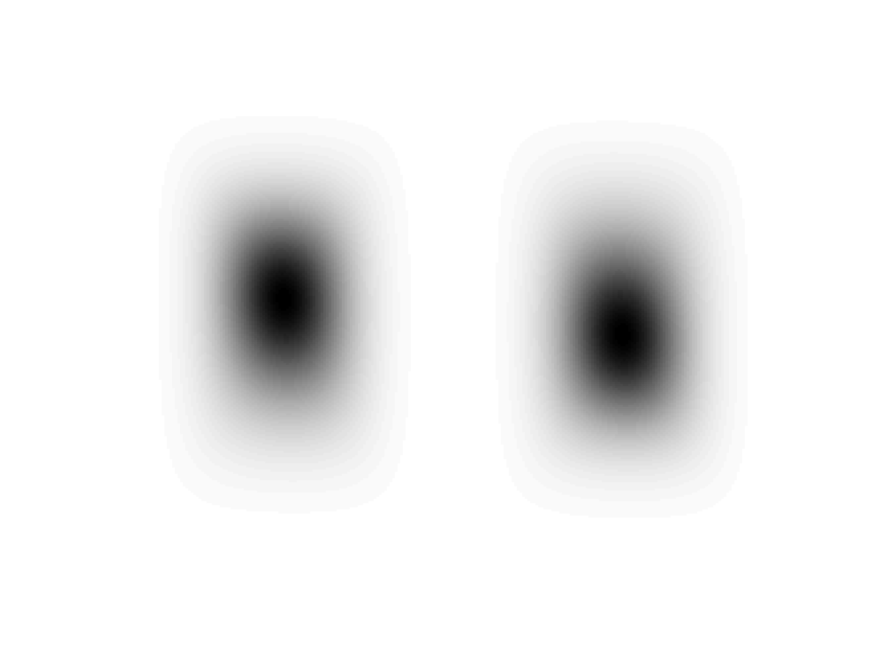}
\includegraphics[width=2.2cm,height=1.1cm]{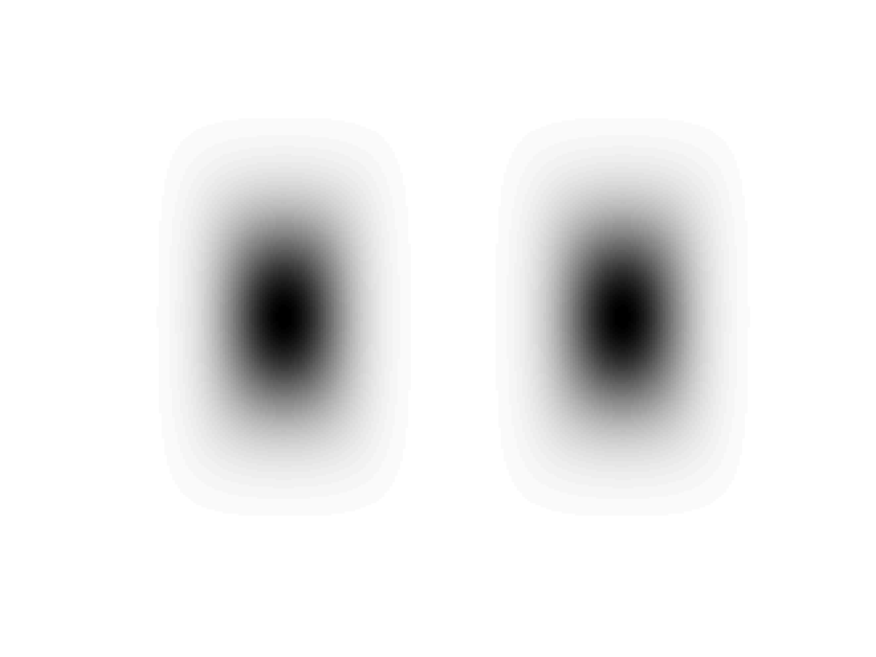}\\
\vspace{0.5cm}
\makebox[0.5cm]{\rotatebox{90}{\makebox[1.1cm]{---}}}
\makebox[0.5cm]{\rotatebox{90}{\makebox[1.1cm]{\texttt{LPQN}}}}
\makebox[0.5cm]{\rotatebox{90}{\makebox[1.1cm]{$p=0.9$}}}
\makebox[0.5cm]{\rotatebox{90}{\makebox[1.1cm]{$10^{-1}$}}}
\includegraphics[width=2.2cm,height=1.1cm]{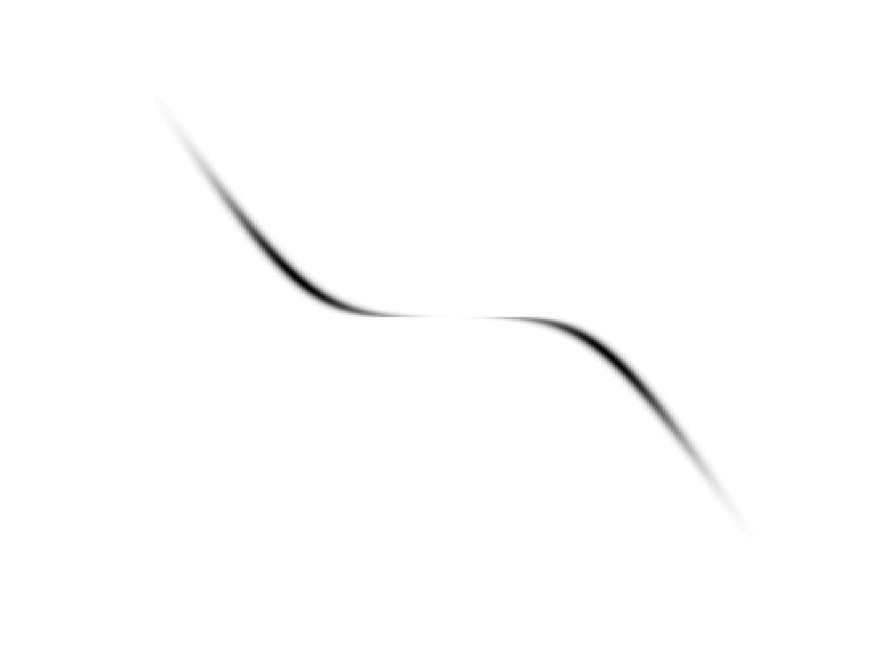}
\includegraphics[width=2.2cm,height=1.1cm]{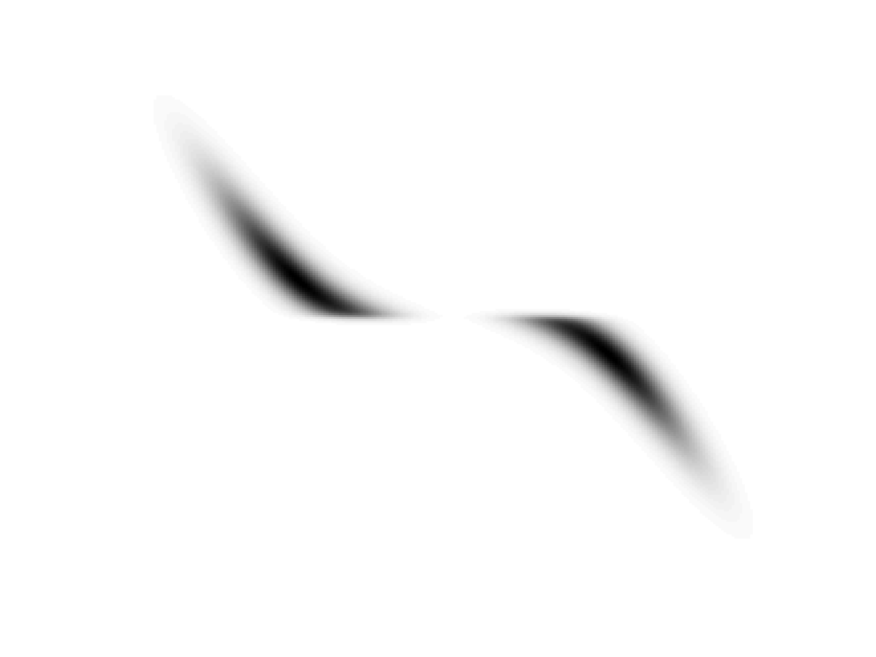}
\includegraphics[width=2.2cm,height=1.1cm]{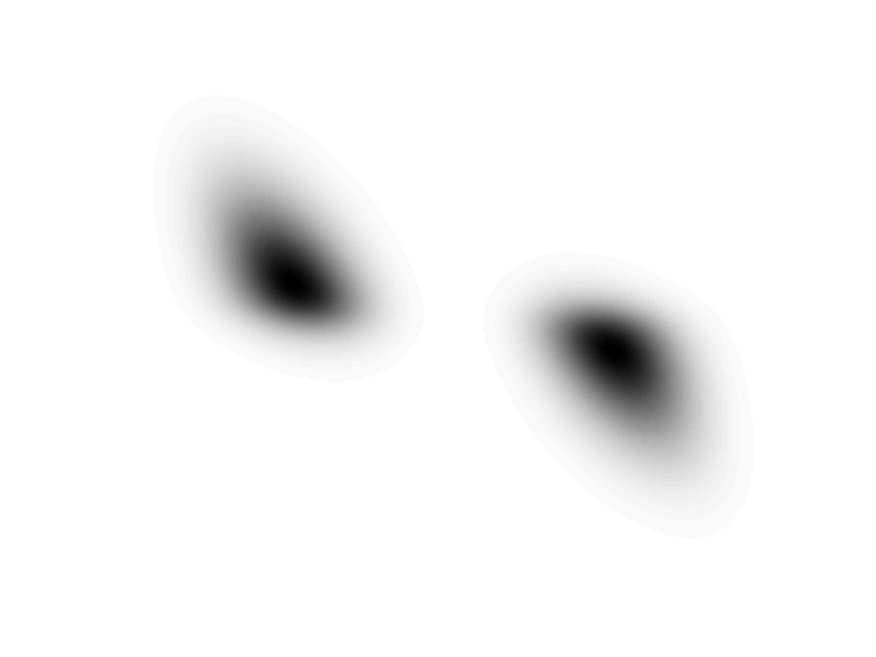}
\includegraphics[width=2.2cm,height=1.1cm]{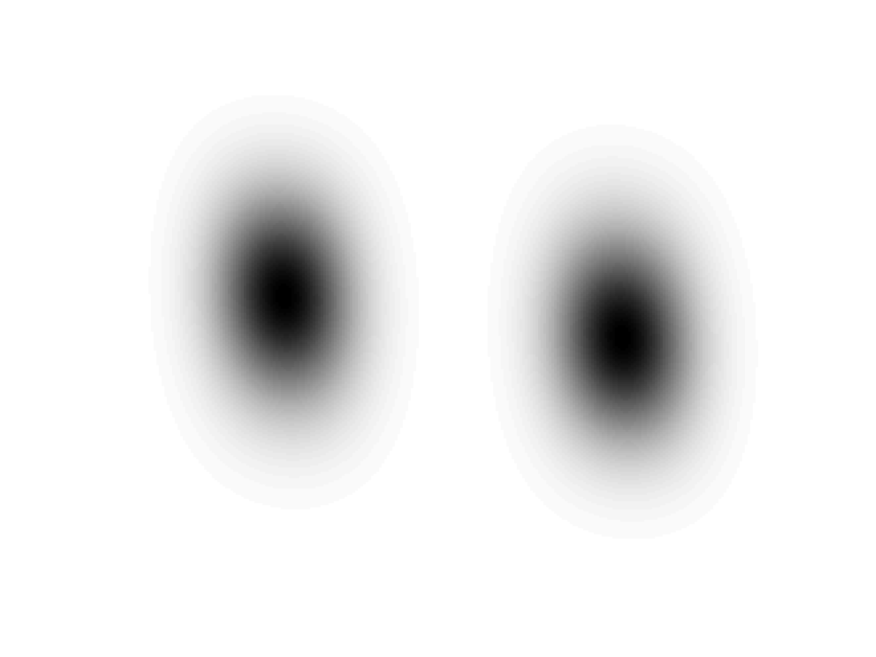}
\includegraphics[width=2.2cm,height=1.1cm]{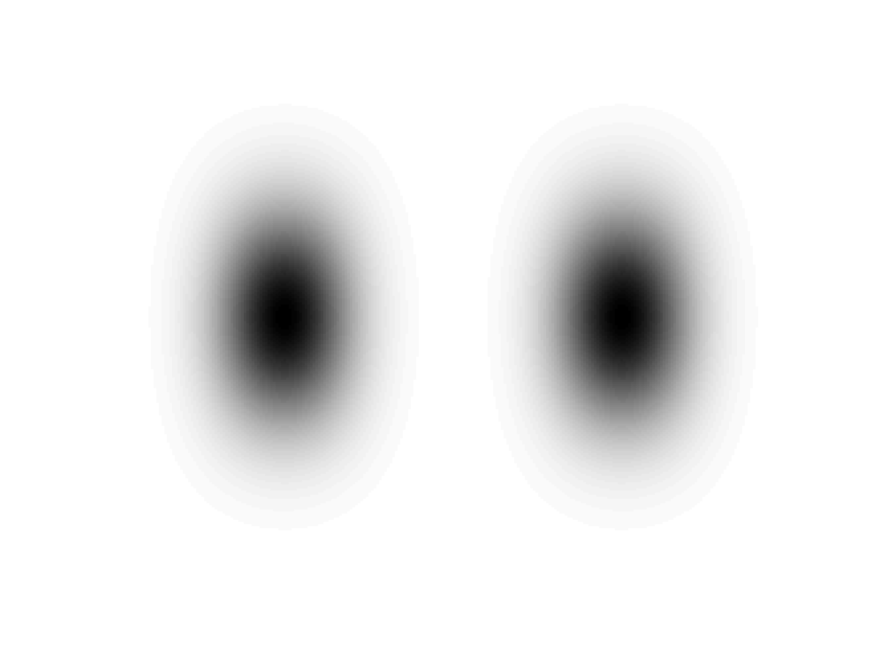}\\
\vspace{0.5cm}
\makebox[0.5cm]{\rotatebox{90}{\makebox[1.1cm]{---}}}
\makebox[0.5cm]{\rotatebox{90}{\makebox[1.1cm]{\texttt{LPN}}}}
\makebox[0.5cm]{\rotatebox{90}{\makebox[1.1cm]{$p=1.1$}}}
\makebox[0.5cm]{\rotatebox{90}{\makebox[1.1cm]{$10^{+0}$}}}
\includegraphics[width=2.2cm,height=1.1cm]{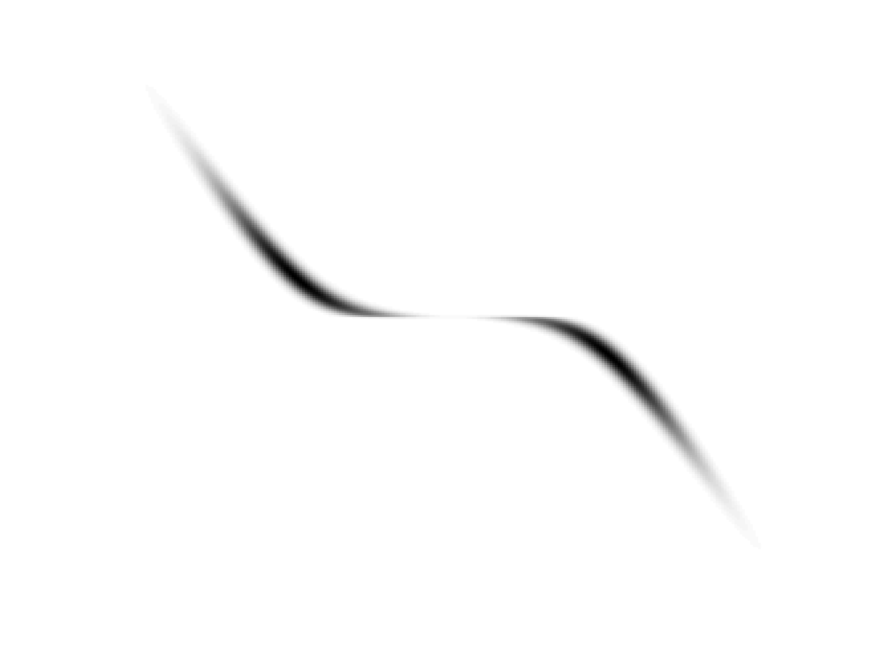}
\includegraphics[width=2.2cm,height=1.1cm]{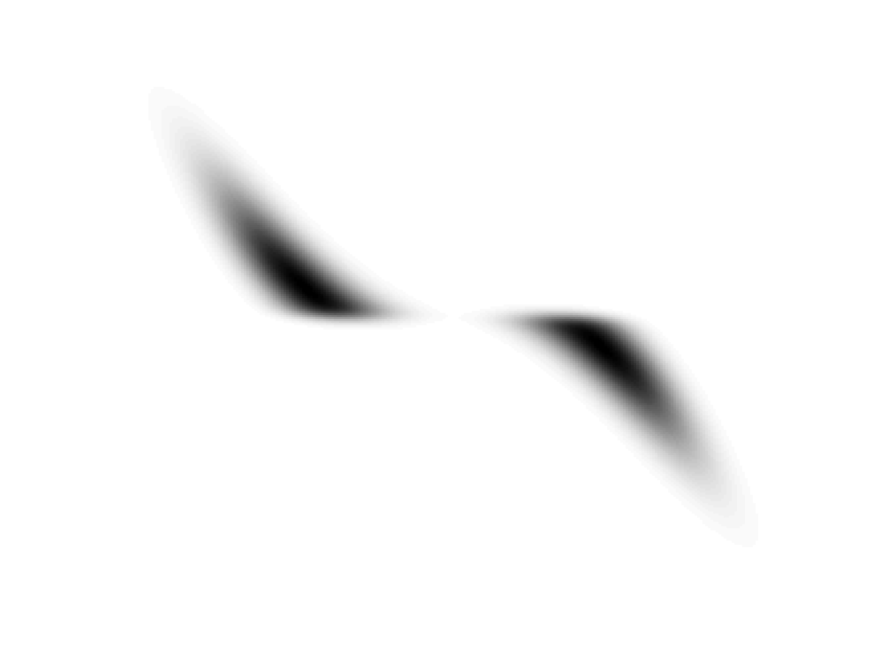}
\includegraphics[width=2.2cm,height=1.1cm]{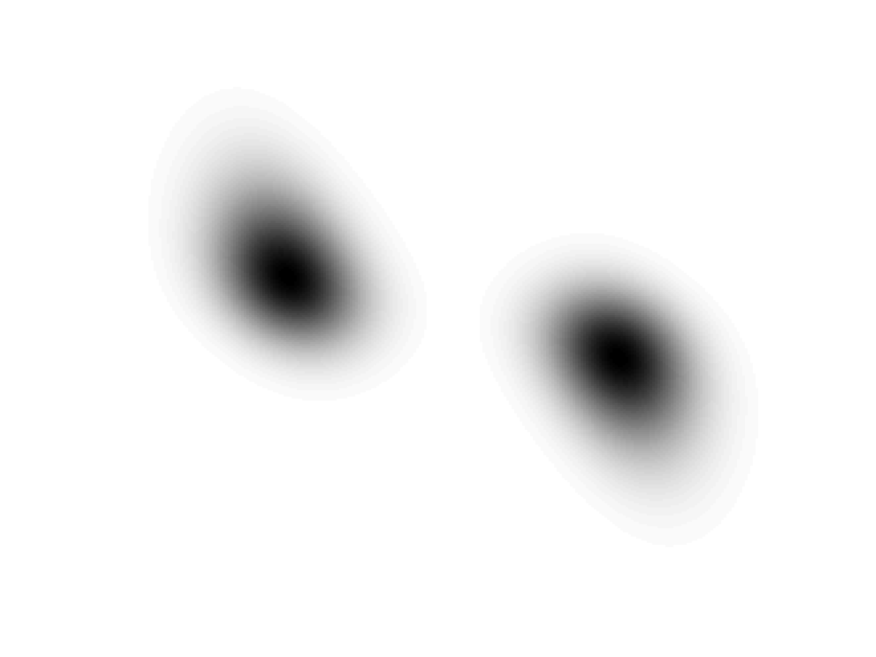}
\includegraphics[width=2.2cm,height=1.1cm]{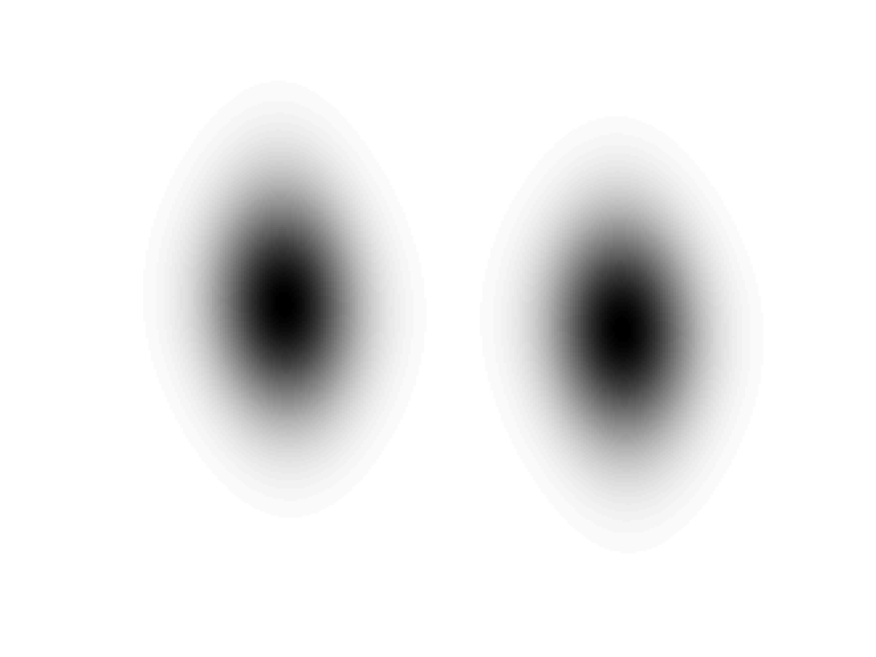}
\includegraphics[width=2.2cm,height=1.1cm]{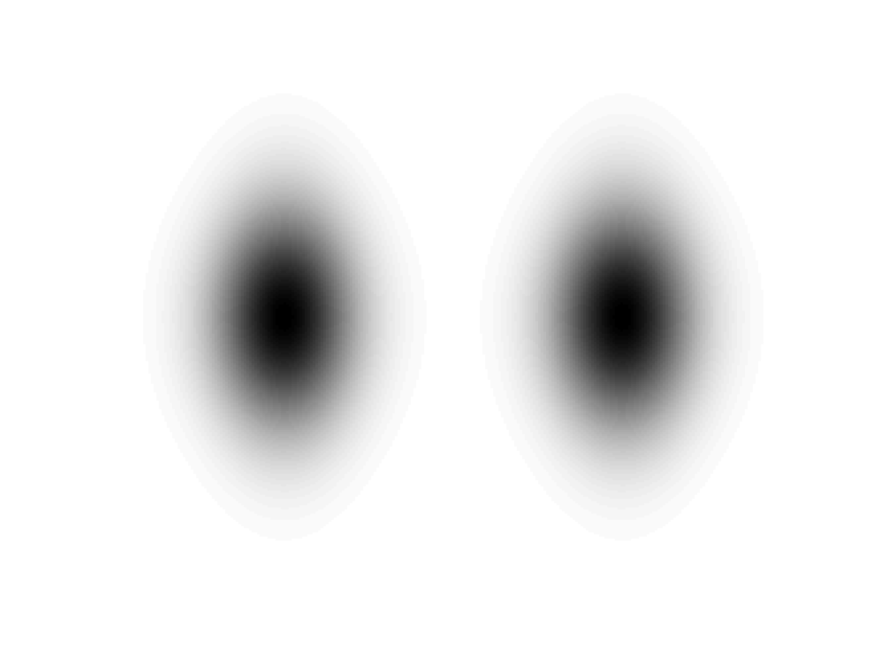}\\
\vspace{0.5cm}
\makebox[0.5cm]{\rotatebox{90}{\makebox[1.1cm]{---}}}
\makebox[0.5cm]{\rotatebox{90}{\makebox[1.1cm]{\texttt{LPN}}}}
\makebox[0.5cm]{\rotatebox{90}{\makebox[1.1cm]{$p=1.5$}}}
\makebox[0.5cm]{\rotatebox{90}{\makebox[1.1cm]{$10^{+1}$}}}
\includegraphics[width=2.2cm,height=1.1cm]{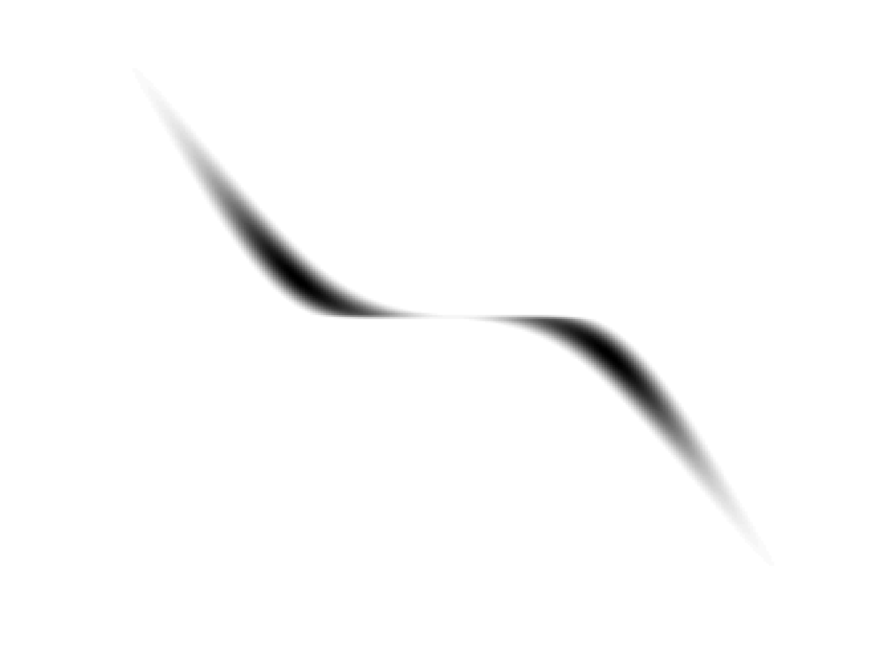}
\includegraphics[width=2.2cm,height=1.1cm]{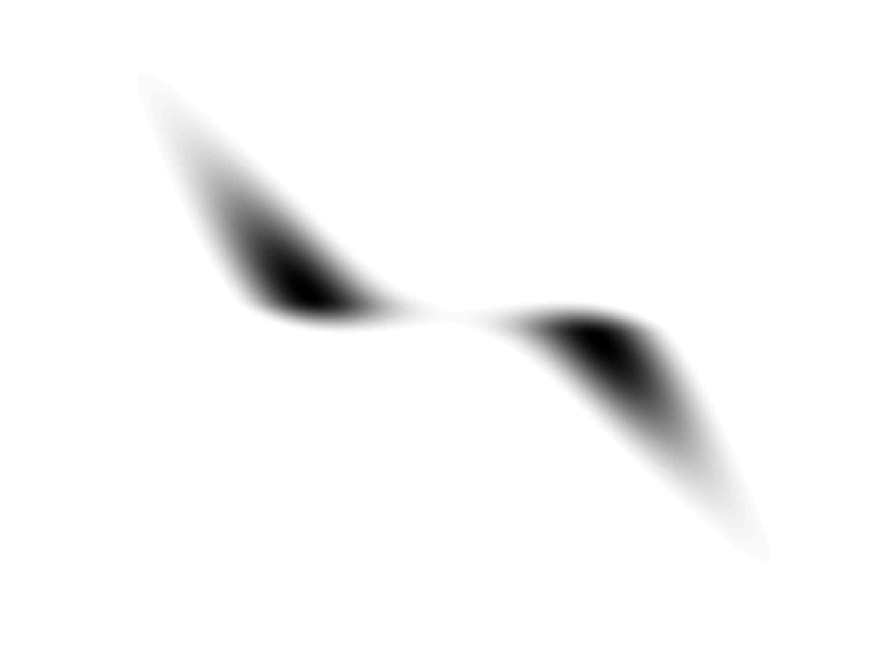}
\includegraphics[width=2.2cm,height=1.1cm]{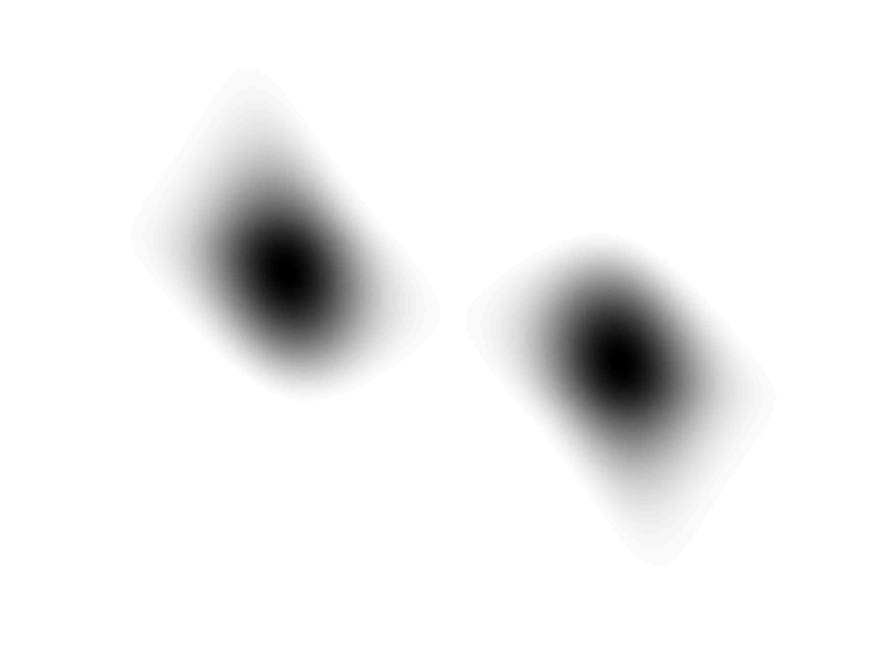}
\includegraphics[width=2.2cm,height=1.1cm]{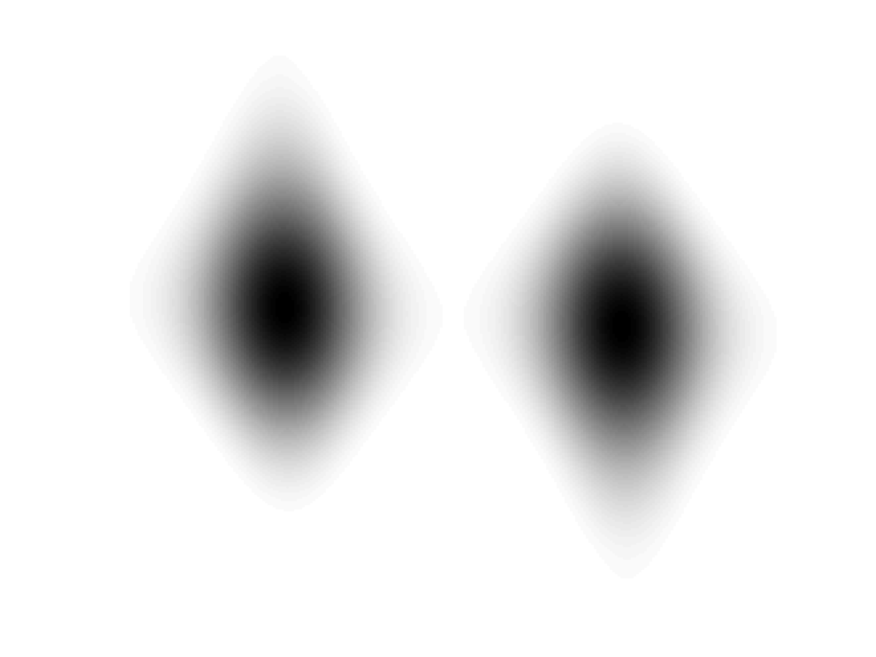}
\includegraphics[width=2.2cm,height=1.1cm]{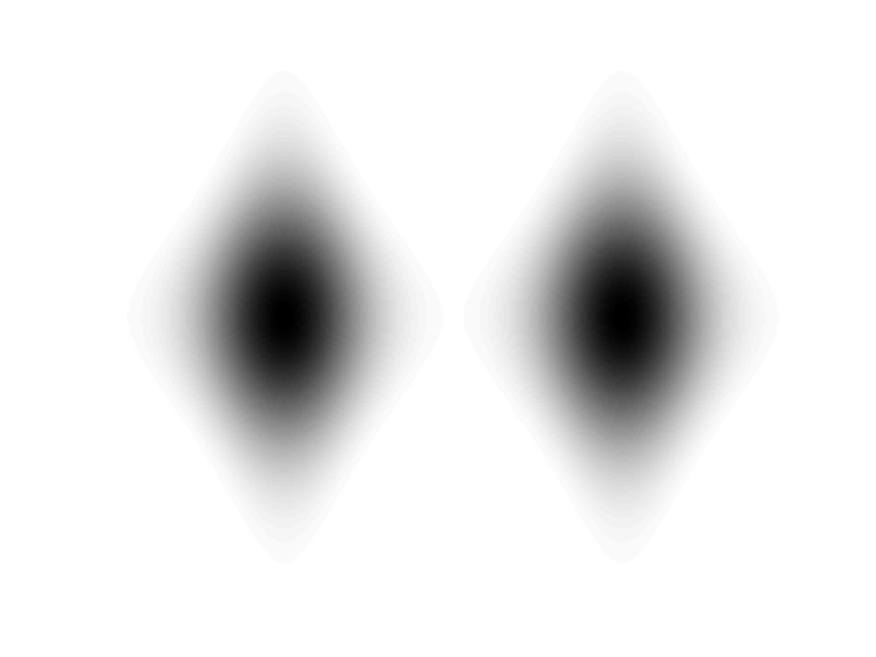}\\
\vspace{0.5cm}
\makebox[0.5cm]{\rotatebox{90}{\makebox[1.1cm]{\texttt{EUC}}}}
\makebox[0.5cm]{\rotatebox{90}{\makebox[1.1cm]{\texttt{LPN}}}}
\makebox[0.5cm]{\rotatebox{90}{\makebox[1.1cm]{$p=2$}}}
\makebox[0.5cm]{\rotatebox{90}{\makebox[1.1cm]{$10^{+2}$}}}
\includegraphics[width=2.2cm,height=1.1cm]{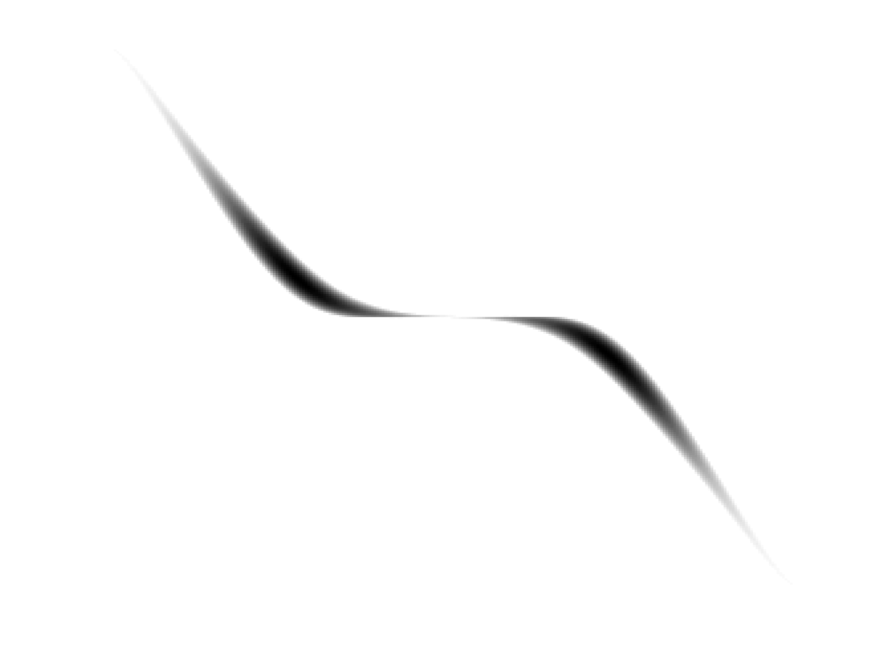}
\includegraphics[width=2.2cm,height=1.1cm]{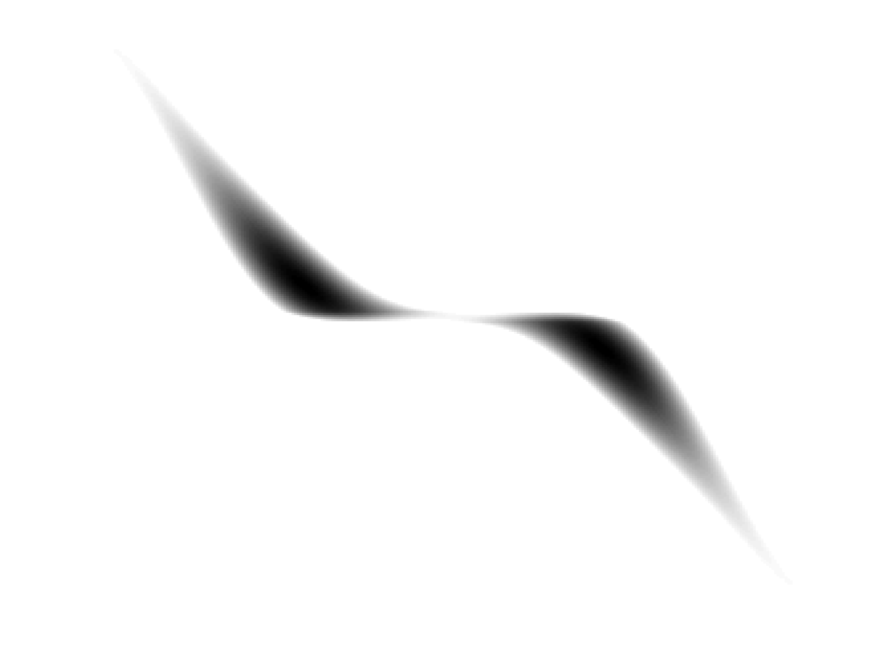}
\includegraphics[width=2.2cm,height=1.1cm]{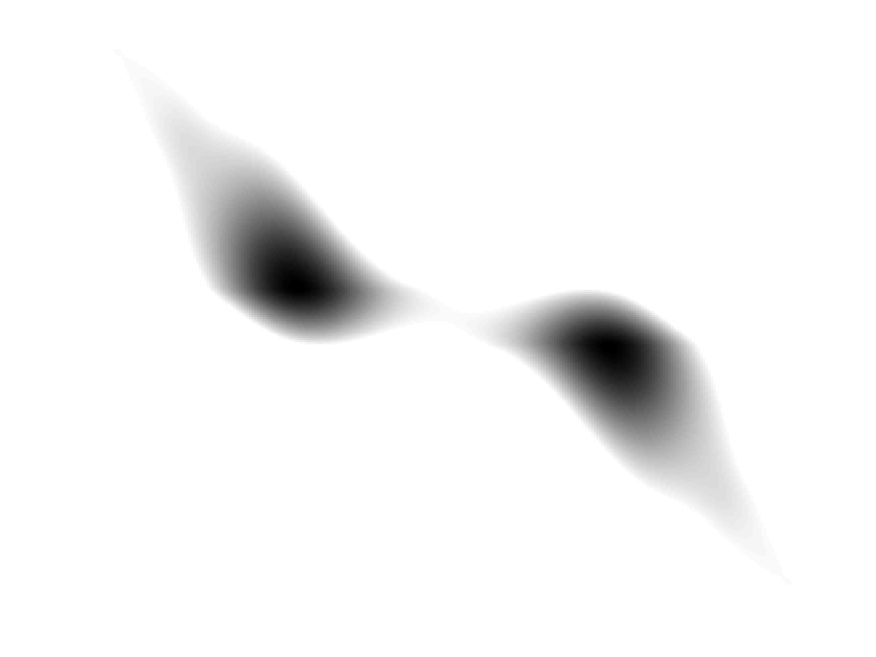}
\includegraphics[width=2.2cm,height=1.1cm]{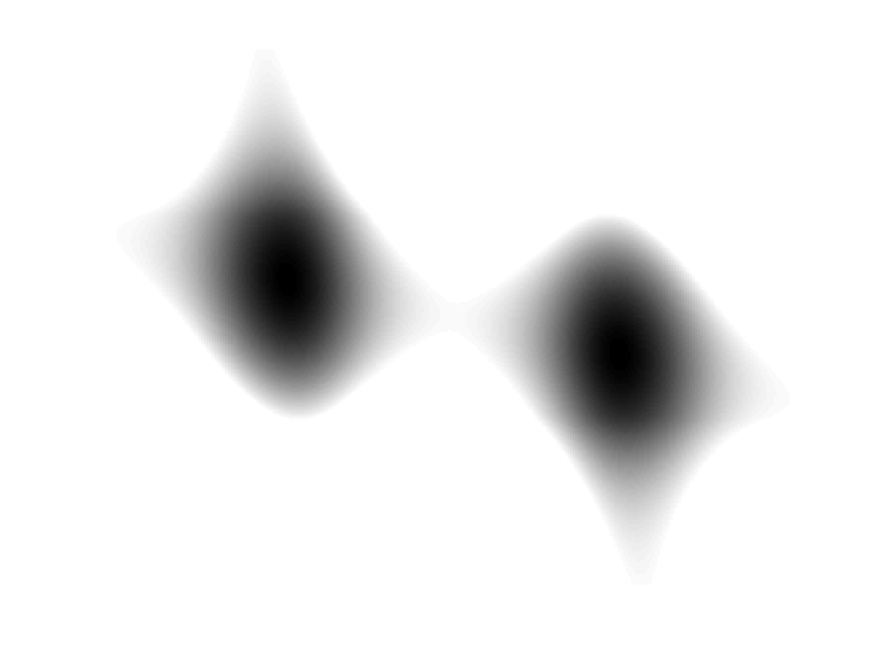}
\includegraphics[width=2.2cm,height=1.1cm]{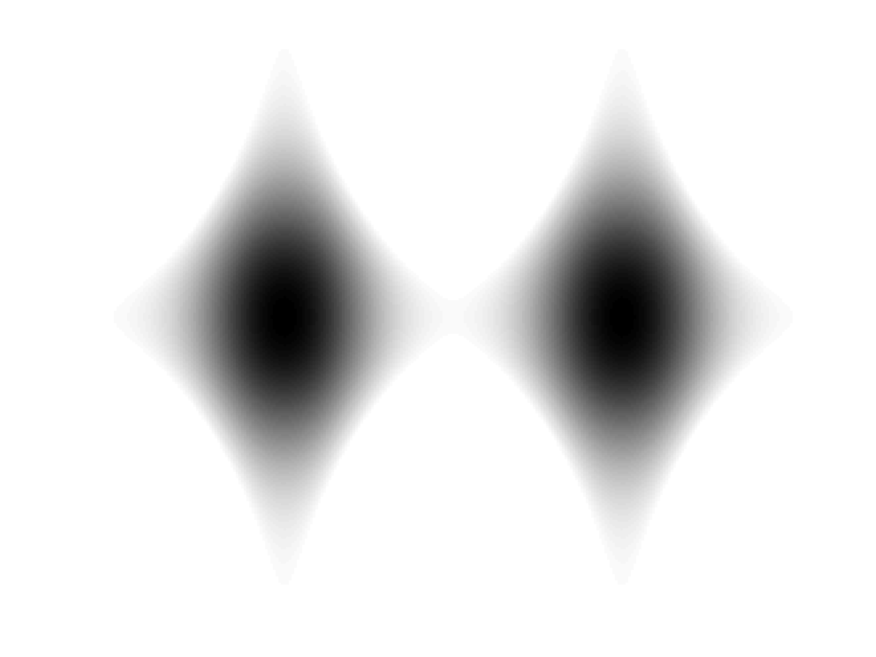}\\
\vspace{0.5cm}
\makebox[0.5cm]{\rotatebox{90}{\makebox[1.1cm]{---}}}
\makebox[0.5cm]{\rotatebox{90}{\makebox[1.1cm]{\texttt{HELL}}}}
\makebox[0.5cm]{\rotatebox{90}{\makebox[1.1cm]{---}}}
\makebox[0.5cm]{\rotatebox{90}{\makebox[1.1cm]{$10^{+2}$}}}
\includegraphics[width=2.2cm,height=1.1cm]{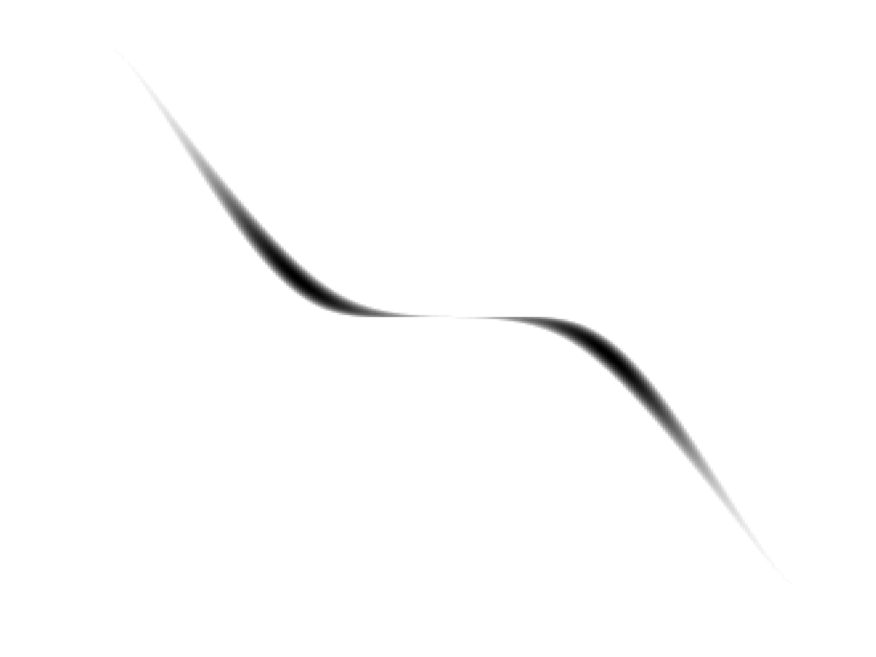}
\includegraphics[width=2.2cm,height=1.1cm]{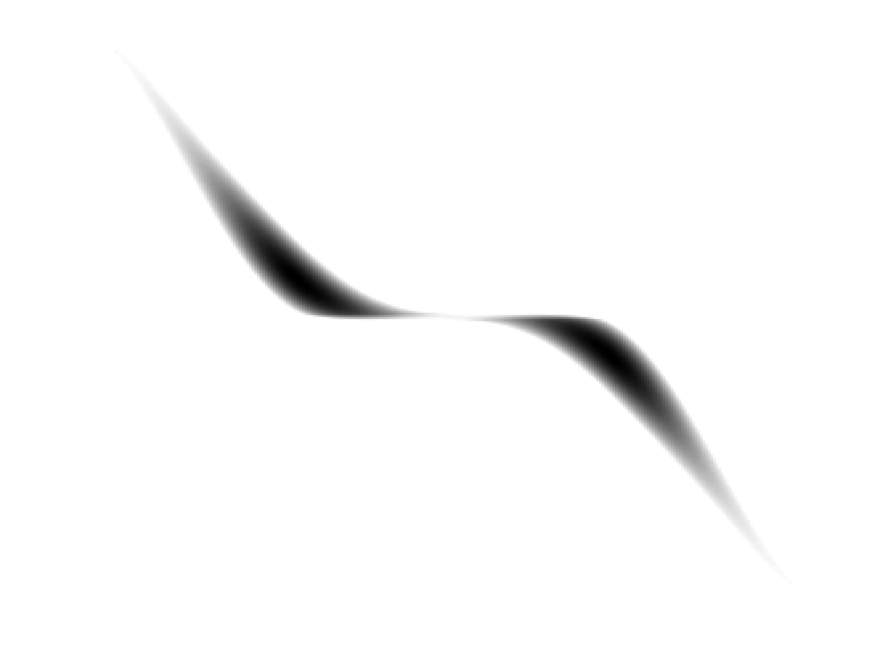}
\includegraphics[width=2.2cm,height=1.1cm]{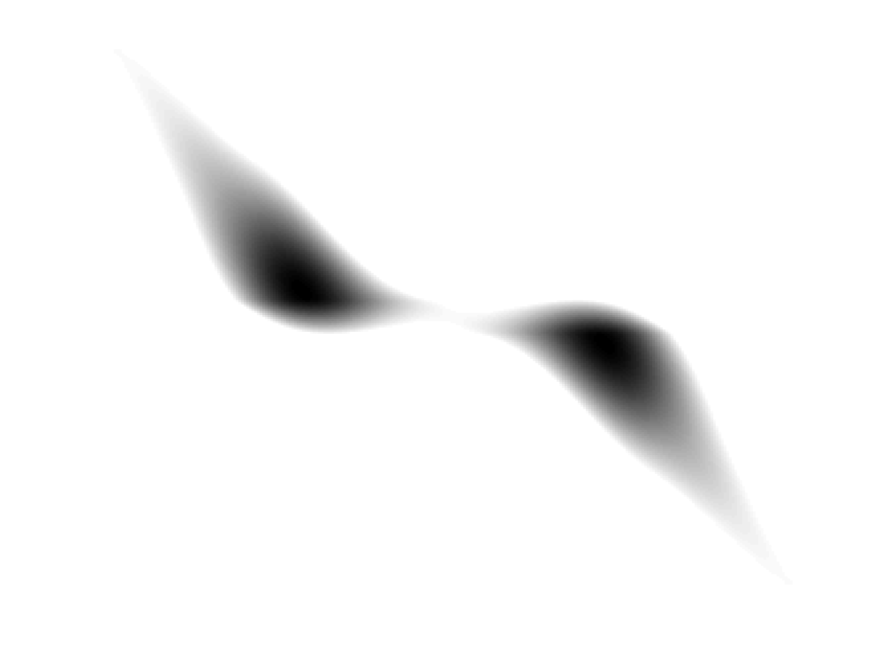}
\includegraphics[width=2.2cm,height=1.1cm]{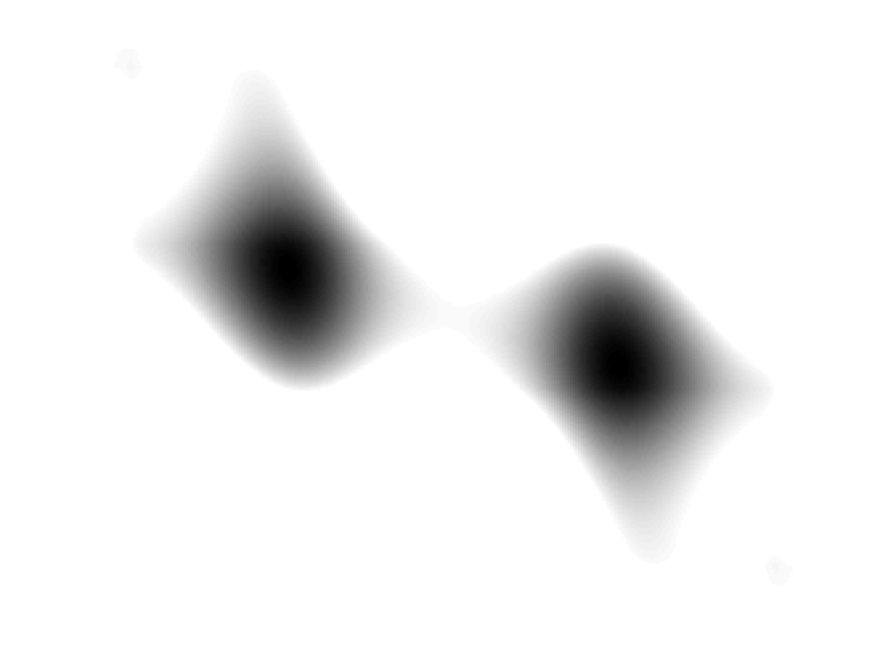}
\includegraphics[width=2.2cm,height=1.1cm]{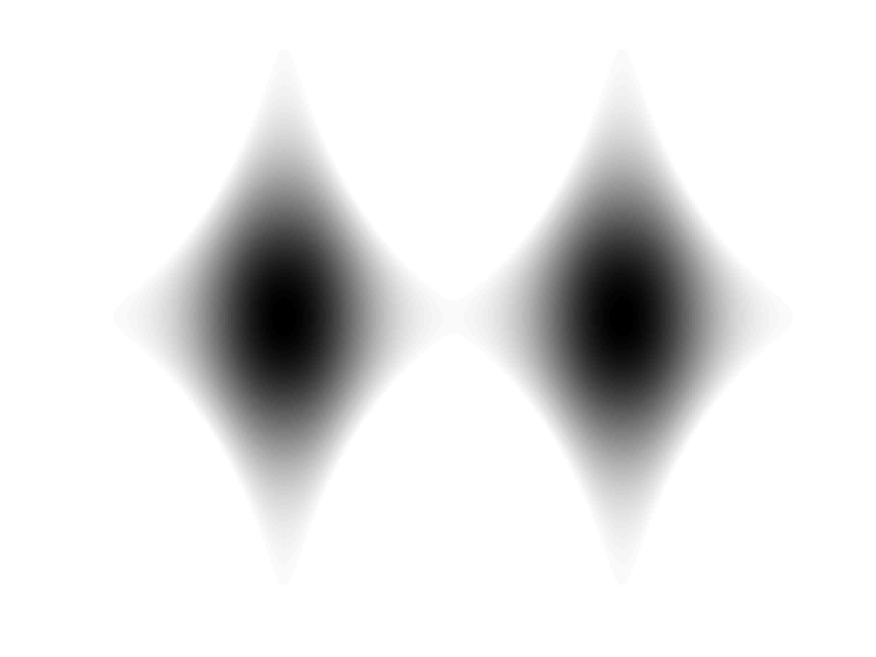}
\caption{Rot mover's plans $\boldsymbol\pi^\star$ for different regularizers $\phi$ and penalties $\lambda = \overline{\lambda} \, \lambda'$.}
\label{fig:gamma}
\end{figure}

We next report in Table~\ref{table:time} the computational times required to reach convergence for the different regularizers and penalties. As a stopping criterion, we use the relative variation with tolerance $10^{-2}$ in $\ell_2$ norm for the main loop of alternate Bregman projections, and the absolute variation with tolerance $10^{-5}$ in $\ell_2$ norm for the auxiliary loops of the Newton-Raphson method. We use the same synthetic data as above but also vary the dimension $d$ to assess its influence on speed. As already observed specifically for Sinkhorn distances~\cite{Cuturi2013}, computing ROT distances is faster for important regularization with larger values of $\lambda$. The regularizers under assumptions (A) do not require the extra projections onto the non-negative orthant, and thus intuitively require less computational effort than the ones that verify assumptions (B). In addition, we notice that when the projections have closed-form expressions, the algorithms are also faster. The results further illustrate the influence of the data dimension $d$ and the difference between ROT and classical OT performances. For a low dimension $d$, the RMD is competitive with EMD in his historical implementation {\oemd}~\cite{Rubner2000}. The super-cubic complexity of the EMD with {\oemd} becomes prohibitive as the data dimension increases in contrast to the RMD which scales better. It should nevertheless be underlined that for reasonable dimensions, fast computation of the EMD can be obtained with a more recent, optimized implementation of the network simplex solver {\nemd}~\cite{Bonneel2011}. For higher dimensions, the super-cubic complexity makes {\nemd} less attractive, though it stays competitive with the RMD under a dimension $d = 4096$.

\begin{table}[t!]
\centering
\begin{tabular}{@{\hspace{2pt}}l@{\hspace{8pt}}l@{\hspace{4pt}}l@{\hspace{4pt}}c@{\hspace{4pt}}c@{\hspace{4pt}}|@{\hspace{2pt}}c@{\hspace{2pt}}|@{\hspace{2pt}}c@{\hspace{2pt}}|@{\hspace{2pt}}c@{\hspace{2pt}}|@{\hspace{2pt}}c@{\hspace{2pt}}|@{\hspace{2pt}}c@{\hspace{2pt}}|@{\hspace{2pt}}c@{\hspace{2pt}}|@{\hspace{2pt}}c@{\hspace{2pt}}|@{\hspace{2pt}}c@{\hspace{2pt}}|@{\hspace{2pt}}c@{\hspace{2pt}}|}
\toprule
		&			&				&			& $d$						& \COLS{$128$}						& \COLS{$256$} 						& \COLS{$512$}\hspace{-2pt}\\
Algorithm	& \PHI 						& $\beta$/$p$	& $\overline{\lambda}$/$\lambda'$ 	& $10^{-2}$	& $10^{-1}$	& $10^{+0}$	& $10^{-2}$	& $10^{-1}$	& $10^{+0}$	& $10^{-2}$	& $10^{-1}$	& $10^{+0}$\\
\midrule
\rmd		& --- 			& \texttt{FDLOG} 	& --- 			& $10^{-2}$					& $0.366$		& $0.079$	 	& $0.044$ 	& $1.091$ 	& $0.311$ 	& $0.116$ 	& $1.865$ 	& $0.571$ 	& $0.273$\\
\hline
\rmd		& \texttt{BSKL} 	& \texttt{BETA}		& $1.00$		& $10^{-2}$					& $0.105$		& $0.013$ 	& $0.008$ 	& $0.259$ 	& $0.055$ 	& $0.017$ 	& $0.680$ 	& $0.100$ 	& $0.038$\\
\hline
\rmd		& --- 			& \texttt{BETA}		& $0.50$		& $10^{-4}$					& $0.971$ 	& $0.102$ 	& $0.044$ 	& $1.922$ 	& $0.251$ 	& $0.147$ 	& $3.526$ 	& $1.339$ 	& $0.281$\\
\hline
\rmd		& \texttt{BIS} 	& \texttt{BETA}		& $0.00$		& $10^{-6}$					& $0.916$		& $0.106$ 	& $0.019$ 	& $1.466$ 	& $0.108$ 	& $0.053$ 	& $2.598$ 	& $0.398$ 	& $0.096$\\
\hline
\rmd		& --- 			& \texttt{LPQN}		& $0.10$		& $10^{-4}$					& $0.968$ 	& $0.068$ 	& $0.055$ 	& $0.732$ 	& $0.173$ 	& $0.152$ 	& $0.416$ 	& $0.309$ 	& $0.305$\\
\hline
\rmd		& --- 			& \texttt{LPQN}		& $0.50$		& $10^{-3}$					& $0.404$ 	& $0.057$ 	& $0.042$ 	& $0.778$ 	& $0.163$ 	& $0.160$ 	& $0.780$ 	& $0.305$ 	& $0.304$\\
\hline
\rmd		& --- 			& \texttt{LPQN}		& $0.90$ 		& $10^{-1}$					& $0.226$		& $0.047$ 	& $0.040$ 	& $0.751$ 	& $0.178$ 	& $0.131$ 	& $1.110$ 	& $2.492$ 	& $0.214$\\
\hline
\rmd		& --- 			& \texttt{LPN} 		& $1.10$		& $10^{+0}$					& $1.570$ 	& $0.349$ 	& $0.148$ 	& $5.941$ 	& $1.557$ 	& $0.492$ 	& $6.357$ 	& $0.293$ 	& $0.926$\\
\hline
\rmd		& --- 			& \texttt{LPN}		& $1.50$		& $10^{+1}$					& $0.399$ 	& $0.099$ 	& $0.053$ 	& $1.170$ 	& $0.474$ 	& $0.166$ 	& $6.688$ 	& $2.163$ 	& $0.532$\\
\hline
\rmd		& \texttt{EUC} 	& \texttt{LPN}		& $2.00$		& $10^{+2}$					& $0.074$ 	& $0.043$ 	& $0.043$ 	& $0.253$ 	& $0.240$ 	& $0.237$ 	& $7.308$ 	& $3.190$ 	& $0.966$\\
\hline
\rmd		& --- 			& \texttt{HELL} 		& ---			& $10^{+2}$					& $0.197$ 	& $0.097$		& $0.087$		& $0.429$ 	& $0.316$ 	& $0.299$ 	& $5.570$ 	& $1.826$ 	& $0.823$\\
\hline
\oemd	& --- 			& --- 				& --- 			& --- 							& \COLS{$0.231$}						& \COLS{$1.912$}						& \COLS{$10.95$}\hspace{-2pt}\\
\hline
\nemd	& --- 			& --- 				& --- 			& --- 							& \COLS{$ 0.003$}						& \COLS{$ 0.011$}						& \COLS{$0.076$}\hspace{-2pt}\\
\bottomrule
\toprule
      		&			&				&			& $d$						& \COLS{$1024$} 						& \COLS{$2048$}						& \COLS{$4096$}\hspace{-2pt}\\
Algorithm	& \PHI 						& $\beta$/$p$	& $\overline{\lambda}$/$\lambda'$ 	& $10^{-2}$	& $10^{-1}$	& $10^{+0}$	& $10^{-2}$	& $10^{-1}$	& $10^{+0}$	& $10^{-2}$	& $10^{-1}$	& $10^{+0}$\\
\midrule
\rmd		& --- 			& \texttt{FDLOG} 	& --- 			& $10^{-2}$					& $4.156$		& $3.517$		& $1.410$		& $15.85$		& $9.663$		& $5.109$		& $54.19$		& $33.87$		& $17.24$\\
\hline
\rmd		&\texttt{BSKL} 	& \texttt{BETA}		& $1.00$		& $10^{-2}$					& $2.992$		& $0.705$		& $0.192$		& $13.42$		& $1.923$		& $0.630$		& $49.95$		& $7.074$		& $2.548$\\
\hline
\rmd		& --- 			& \texttt{BETA}		& $0.50$		& $10^{-4}$		        			& $ 8.015$	& $2.769$		& $0.888$		& $42.95$		& $7.538$		& $3.557$		& $101.3$		& $21.13$		& $10.86$\\
\hline 
\rmd		&\texttt{BIS} 	& \texttt{BETA}		& $0.00$		& $10^{-6}$					& $4.439$		& $0.777$		& $ 0.550$	& $6.590$		& $3.262$		& $2.218$		& $41.80$		& $12.96$		& $6.742$\\
\hline
\rmd		& --- 			& \texttt{LPQN}		& $0.10$		& $10^{-4}$					& $4.068$		& $2.174$		& $1.291$		& $6.962$		& $4.890$		& $4.334$		& $51.15$		& $15.86$		& $14.02$\\
\hline
\rmd		& --- 			& \texttt{LPQN}		& $0.50$		& $10^{-3}$					& $7.819$		& $4.198$		& $1.314$		& $26.34$		& $6.129$		& $4.301$		& $53.74$		& $15.65$		& $11.98$\\
\hline
\rmd		& --- 			& \texttt{LPQN}		& $0.90$ 		& $10^{-1}$					& $3.584$		& $2.264$		& $1.054$		& $13.80$		& $4.571$		& $3.285$		& $43.83$		& $14.22$		& $11.51$\\
\hline
\rmd		& --- 			& \texttt{LPN} 		& $1.10$		& $10^{+0}$					& $9.110$		& $4.924$		& $1.956$		& $38.98$		& $16.47$		& $8.400$		& $145.6$		& $65.95$		& $32.82$\\
\hline
\rmd		& --- 			& \texttt{LPN}		& $1.50$		& $10^{+1}$					& $18.97$		& $9.509$		& $2.539$		& $61.92$		& $20.41$		& $9.314$		& $236.6$		& $77.87$		& $45.94$\\
\hline
\rmd		&\texttt{EUC} 	& \texttt{LPN}		& $2.00$		& $10^{+2}$					& $11.90$		& $5.805$		& $2.161$		& $31.43$		& $14.22$		& $4.906$		& $117.1$		& $50.88$		& $27.67$\\
\hline
\rmd		& --- 			& \texttt{HELL} 		& ---			& $10^{+2}$					& $18.22$		& $6.629$		& $3.456$		& $35.45 $	& $20.03$		& $7.199$		& $205.0$		& $48.42$		& $31.75$\\
\hline
\oemd	& --- 			& --- 				& --- 			& --- 							& \COLS{$85.56$}						& \COLS{$482.2$}						& \COLS{$+\infty$}\hspace{-2pt}\\
\hline
\nemd	& --- 			& --- 				& --- 			& --- 							& \COLS{$0.482$}						& \COLS{$2.760$}						& \COLS{$13.23$}\hspace{-2pt}\\
\bottomrule
\end{tabular}
\caption{Computational times in seconds required to reach convergence for different regularizers $\phi$ and penalties $\lambda = \overline{\lambda} \, \lambda'$, with varying dimensions $d$.}
\label{table:time}
\end{table}

As a consequence, a numerical alternative to our algorithms for solving ROT problems with reasonable dimensions is to rely on conditional gradient methods similar to~\cite{Ferradans2014}. Indeed, such methods imply the iterative resolution of linearized ROT problems, that can be reformulated as EMD problems and therefore be solved with the fast network simplex approach~\cite{Bonneel2011}. Lastly, for a fair interpretation of the above timing results, we must mention that the two EMD schemes tested were run under MATLAB from native C/C++ implementations\footnote{\url{http://robotics.stanford.edu/~rubner/emd/default.htm}}\footnote{\url{http://liris.cnrs.fr/~nbonneel/FastTransport/}}  via compiled MEX files\footnote{\url{https://github.com/francopestilli/life/tree/master/external/emd}}\footnote{\url{https://arolet.github.io/code/}}. Hence, these EMD codes are quite optimized in comparison to our pure MATLAB prototype codes\footnote{\url{https://www.math.u-bordeaux.fr/~npapadak/GOTMI/codes.php}} for the RMD. It is thus plausible that optimized C/C++ implementations of our algorithms would be even more competitive in this context.

\subsection{Audio Classification}
\label{subsec:audio}

We now assess our methods in the context of audio classification, and specifically address the task of acoustic scene classification where the goal is to assign a test recording to one of predefined classes that characterizes the environment in which it was captured. We consider the framework of the DCASE 2016 IEEE AASP challenge with the TUT Acoustic Scenes 2016 database~\cite{Mesaros2016}. The data set consists of audio recordings at 44.1$\,$kHz sampling rate and 24-bit resolution. The metadata contains ground-truth annotations on the type of acoustic scene for all files, with a total of 15 classes: home, office, library, caf{\'{e}}/restaurant, grocery store, city center, residential area, park, forest path, beach, car, train, bus, tram, metro station. The audio material is cut into 30-second segments, and is split into two subsets of 75\%--25\% containing respectively 78--26 segments per class for development and evaluation, resulting in a total of 1170--390 files for training and testing. A 4-fold cross-validation setup is given with the training set. The classification accuracy, that is, the number of correctly classified segments among the total number of segments, is used as a score to evaluate systems.

A baseline system is also provided with the database for comparison. This system is based on Mel-frequency cepstral coefficient (MFCC) timbral features with Gaussian mixture model (GMM) classification. One GMM with diagonal covariance matrix is learned per class by expectation-maximization (EM), after concatenating and normalizing in mean and variance the extracted MFCCs from the training segments in that class. A test file is assigned to the class whose trained GMM leads to maximum likelihood for the extracted MFCCs for that file, where the MFCCs are considered as independent samples and normalized with the learned mean and variance for the respective classes. The baseline system is ran with its default parameters: 40$\,$ms frame size, 20$\,$ms hop size, 60-dimensional MFCCs comprising 20 static (including energy) plus 20 delta and 20 acceleration coefficients extracted with standard settings in RASTAMAT, 16 GMM components learned with standard settings in VOICEBOX.

Since MFCCs potentially take negative values, OT tools cannot be applied directly to this kind of features. Therefore, the common approach is to compute OT appropriately on GMMs estimated from MFCCs instead. Our proposed system follows this principle, and is implemented in the very same pipeline as the baseline for a fair comparison, with the following differences. One GMM is learned by EM for each training segment instead of class. Any normalization on the MFCCs per class is thus removed. Since less components are typically required to model one segment compared to one class, the spurious GMM components are further discarded as post-processing by keeping only those with weight and variances all greater than $10^{-2}$. Instead of applying a GMM classifier, all individual models are exploited to train a support vector machine (SVM) classifier. An exponential kernel for the SVM is designed by introducing a distance between two mixtures $P, Q$ based on the RMD as follows:
\begin{equation}
\kappa(P, Q) = \exp(-d_{\boldsymbol\gamma, \lambda, \phi}(\boldsymbol\omega, \boldsymbol\upsilon) / \tau) \enspace,
\end{equation}
where the exponential decay rate $\tau > 0$ is a kernel parameter, and $\boldsymbol\omega, \boldsymbol\upsilon \in \Sigma_d$ are the respective weights of the $d = 16$ (or less) components for the two GMMs $P, Q$. The cost matrix $\boldsymbol\gamma \in \R_+^{d \times d}$ depends on $P, Q$ and is the square root of a symmetrized Kullback-Leibler divergence, called the Jeffrey divergence, between the pairwise Gaussian components:
\begin{equation}
\gamma_{ij} = \sqrt{\frac{1}{4} \sum_{k = 1}^{l} \frac{{(\sigma_{ik}^2 - \varsigma_{jk}^2)}^2 + (\sigma_{ik}^2 + \varsigma_{jk}^2) {(\mu_{ik} - \nu_{jk})}^2}{\sigma_{ik}^2 \varsigma_{jk}^2}} \enspace,
\end{equation}
where $\boldsymbol\mu_i$ and $\boldsymbol\sigma_i^2$, respectively $\boldsymbol\nu_j$ and $\boldsymbol\varsigma_j^2$, are the means and variances of the $l = 60$ MFCC features for component $i$ in the first mixture $P$, respectively component $j$ in the second mixture $Q$. The SVM classifier is implemented with standard settings in LIBSVM, and requires an additional soft-margin parameter $C > 0$ to be tuned. Notice that, even if the kernel is not positive-definite, LIBSVM is still able to provide a relevant classification by guaranteeing convergence to a stationary point~\cite{Lin2003,Haasdonk2005,Alabdulmohsin2014}. All separable regularizers $\phi$ from Section~\ref{sec:examples} with different penalties $\lambda > 0$ are tested for the RMD in comparison to the EMD. The two distances between $\p, \q$ and $\q, \p$ with cost matrix $\boldsymbol\gamma$ transposed are computed and averaged, so as to remove any asymmetry due to practical issues. The number of iterations is limited to $100$ for the main loop of the algorithm and to $10$ for the auxiliary loops of the Newton-Raphson method, and the tolerance is set to $10^{-6}$ in all loops for convergence with the $\ell_\infty$ norm on the marginal difference checked after each iteration as a termination criterion. The parameters $\tau, \, C \in 10^{\{-1,+0,+1,+2\}}$ and penalty $\lambda \in \Lambda$, where $\Lambda$ is a manually chosen set of four successive powers of ten depending on the range of the regularizer $\phi$, are tuned automatically by cross-validation.

The obtained results on this experiment in terms of accuracy per system are reported in Table~\ref{tab:audio}. The optimal penalties $\lambda \in \Lambda$ selected by cross-validation for each regularizer $\phi$ are also included, while the optimal parameters $\tau, \, C$ are not displayed since they actually all equal $10^{+1}$ independently of the kernel used. We first notice that the proposed system {\texttt{SVM}} (support vector machine classifier) consistently outperforms the baseline system {\texttt{GMM}} (Gaussian mixture model classifier). This proves the benefits of incorporating individual information per sound via an SVM rather than exploiting global information per class with a GMM. This further demonstrates the relevance of OT and more general ROT problems for the design of kernels between GMMs in the SVM pipeline. We also notice that {\texttt{RMD}} (rot mover's distance kernel) is at least competitive with {\texttt{EMD}} (earth mover's distance kernel) for all proposed regularizers, except from {\texttt{EUC}} which does not perform as well. This might be a consequence of the regularization profile for {\texttt{EUC}}, or equivalently {\texttt{LPN}} with $p = 2$, which does not spread enough mass across similar bins, implying a lack of robustness to slight variations in the means and variances of the GMM components. Reducing the power parameter in {\texttt{LPN}} brings back to a competitive system with {\texttt{EMD}} for $p = 1.1$, and even a better trade-off with improved accuracy for $p = 1.5$. We obtain similar results for {\texttt{LPQN}} with $p = 0.9$ and $p = 0.5$, with now the best compromise for the lowest power value $p = 0.1$ which clearly outperforms {\texttt{EMD}}. As a remark, the accuracy for {\texttt{LPN}} and {\texttt{LPQN}} is not unimodal with respect to $p$ which controls the spread of mass in the regularization. We suspect this is because the performance is a function of both the spread of mass and the amount of regularization, whose coupling allows for similar compromises in terms of results within different regimes of use. Concerning {\texttt{BETA}} now, we observe that the existing Sinkhorn-Knopp algorithm {\texttt{BSKL}} for $\beta = 1$ does not improve the accuracy compared to {\texttt{EMD}}. Increasing the spread of mass with $\beta = 0$ in {\texttt{BIS}} is even worse. The best performance is obtained with a range in between for $\beta = 0.5$, which slightly improves results over {\texttt {EMD}}. Using {\texttt{LOG}} here slightly degrades the performance compared to {\texttt{EMD}} and {\texttt{BSKL}}. Interestingly, the overall best accuracy on this application is obtained for {\texttt{HELL}} which beats all other systems, including {\texttt{EUC}}, by a safe margin. In contrast to the experiment on synthetic data with dimension 256 presented in Section~\ref{subsec:synth}, where both {\texttt{BSKL}} and {\texttt{LOG}}, respectively {\texttt{EUC}} and {\texttt{HELL}}, behave similarly due to equivalence up to a constant for low values in the transport plans, the range of the transport plans here is much higher since the dimension of the input distributions is at most 16 (typically less than 10). This raises the importance of choosing a good regularizer depending on the actual task and its inherent design criteria such as the data dimension.

\begin{table}[t!]
\centering
\begin{tabular}{llllcccc}
\toprule
\CLASS \hspace{0.1cm}		& \PHI \hspace{0.1cm}			& $\beta$/$p$ 	& $\Lambda$ 			& $\lambda$	& Accuracy\\
\midrule
\texttt{GMM} 	& --- 			& --- 			& --- 				& ---			& ---					& --- 			& $77.2\%$\\
\SVM		& \texttt{EMD} 	& --- 			& --- 				& --- 			& ---					& ---			& $81.3\%$\\
			& \RMD		& --- 			& \texttt{FDLOG}	& --- 			& $10^{\{-2,-1,+0,+1\}}$	& $10^{-1}$	& $81.0\%$\\
			& 			& \texttt{BSKL} & \texttt{BETA}		& $1.00$		& $10^{\{-2,-1,+0,+1\}}$	& $10^{+0}$	& $81.3\%$\\
			& 			& --- 			& \texttt{BETA}		& $0.50$ 		& $10^{\{-3,-2,-1,+0\}}$	& $10^{-2}$	& $81.5\%$\\
			& 			& \texttt{BIS} 	& \texttt{BETA}		& $0.00$		& $10^{\{-4,-3,-2,-1\}}$	& $10^{-2}$	& $81.3\%$\\
			& 			& --- 			& \texttt{LPQN}		& $0.10$ 		& $10^{\{-2,-1,+0,+1\}}$	& $10^{-1}$	& $82.1\%$\\
			& 			& --- 			& \texttt{LPQN}		& $0.50$ 		& $10^{\{-2,-1,+0,+1\}}$	& $10^{-1}$	& $81.3\%$\\
			& 			& --- 			& \texttt{LPQN}		& $0.90$ 		& $10^{\{-2,-1,+0,+1\}}$	& $10^{+0}$	& $81.0\%$\\
			& 			& --- 			& \texttt{LPN}		& $1.10$ 		& $10^{\{-1,+0,+1,+2\}}$	& $10^{+0}$	& $81.0\%$\\
			& 			& --- 			& \texttt{LPN}		& $1.50$ 		& $10^{\{+0,+1,+2,+3\}}$	& $10^{+1}$	& $81.8\%$\\
			& 			& \texttt{EUC} 	& \texttt{LPN}		& $2.00$		& $10^{\{+1,+2,+3,+4\}}$	& $10^{+3}$	& $77.4\%$\\
			& 			& --- 			& \texttt{HELL} 		& --- 			& $10^{\{+1,+2,+3,+4\}}$	& $10^{+2}$	& $82.8\%$\\
\bottomrule
\end{tabular}
\caption{Results of the experiment on audio classification.}
\label{tab:audio}
\end{table}

\section{Conclusion}
\label{sec:conclusion}

In this paper, we formulated a unified framework for smooth convex regularization of discrete OT problems. We also derived some algorithmic methods to solve such ROT problems, and detailed their specificities for classical regularizers and associated divergences from the literature. We finally designed a synthetic experiment to illustrate our proposed methods, and proved the relevance of ROT problems and the RMD on a real-world application to audio scene classification. The obtained results are encouraging for further development of the present work, and we now discuss some interesting perspectives for future investigation.

Firstly, we want to assess the effect of other regularizers on the solutions, notably when adding an affine term. From a geometrical viewpoint, such a transformation is equivalent to simply translating the cost matrix, with no effect on the Bregman divergence itself. For a given regularizer, we could therefore parametrize a whole family of interpolating regularizers, and tune the translation parameter according to the application. In particular, a recent work developed independently of ours makes use of Tsallis entropies to regularize OT problems with ad hoc solvers~\cite{Muzellec2016}. These regularizers could be integrated readily to our more general framework based on alternate Bregman projections, since Tsallis entropies are equivalent to $\beta$\nobreakdash-potentials and $\ell_p$ (quasi)\nobreakdash-norms up to an affine term.

In another direction, we would like to extend some theoretical results that hold for the Boltzmann-Shannon entropy and associated Kullback-Leibler divergence. Specifically, it is known that the related rot mover's plan converges in norm to the earth mover's plan with an exponential rate as the penalty decreases~\cite{Cominetti1994}. It is not straightforward, however, to generalize this to other regularizers and divergences. In addition, it would be worth elucidating some technical restrictions under which metric properties such as the triangular inequality can be proved similarly to Sinkhorn distances.

We also plan to study other pattern recognition tasks in text, image and audio signal processing. Intuitive possibilities include retrieval and classification for various kinds of data modeled via histograms of features or GMMs. Among potential approaches, this can be addressed by exploiting the RMD either directly in a nearest-neighbor search, or in the design of kernels for an SVM as done here for acoustic scenes. For such tasks, it would be relevant to provide insight into the choice of a good regularizer for the actual problem, or develop methods for automatic tuning of regularization parameters, and for learning the cost matrix from the data as can be done for the EMD~\cite{Cuturi2014}. Even if we mostly focused on separable regularizers, it would be relevant to further use the quadratic forms associated to Mahalanobis distances in certain applications, and maybe propose a parametric learning scheme for the quadratic regularizer from the data.

Lastly, a more prospective idea is to use the RMD instead of Sinkhorn distances in the recent works built on the entropic regularization mentioned in Section~\ref{sec:intro}. We also think that variational ROT problems could be formulated for statistical inference, notably parameter estimation in finite mixture models by minimizing loss functions based on the RMD~\cite{Dessein2017}. This would leverage new applications of our ROT framework for more general machine learning problems. Such developments are yet involved and require some theoretical effort before reaching enough maturity to address practical setups.

\paragraph{Acknowledgments}
This study has been carried out with financial support from the French State, managed by the French National Research Agency (ANR) in the frame of the ``Investments for the future'' Program IdEx Bordeaux (ANR-10-IDEX-03-02), Cluster of excellence CPU and the GOTMI project (ANR-16-CE33-0010-01). The authors would like to thank Annamaria Mesaros for her kind help with the evaluation on the DCASE 2016 IEEE AASP challenge, Charles-Alban Deledalle for his valuable advice on the use of the computing platform PlaFRIM, and Marco Cuturi for the insightful discussions about this work.

\bibliographystyle{abbrv}

\begin{thebibliography}{10}

\bibitem{Ahuja1993}
R.~K. Ahuja, T.~L. Magnanti, and J.~B. Orlin.
\newblock {\em Network Flows: Theory, Algorithms and Applications}.
\newblock Prentice-Hall, Inc., Upper Saddle River, NJ, USA, 1993.

\bibitem{Alabdulmohsin2014}
I.~Alabdulmohsin, X.~Gao, and X.~Zhang.
\newblock Support vector machines with indefinite kernels.
\newblock In {\em Asian Conference on Machine Learning (ACML)}, pages 32--47,
  2014.

\bibitem{Amari2000}
S.-i. Amari and H.~Nagaoka.
\newblock {\em Methods of Information Geometry}, volume 191 of {\em
  Translations of Mathematical Monographs}.
\newblock American Mathematical Society, Providence, RI, USA, 2000.

\bibitem{Arjovsky2017}
M.~Arjovsky, S.~Chintala, and L.~Bottou.
\newblock Wasserstein {GAN}.
\newblock Technical report, arXiv:1701.07875, 2017.

\bibitem{Bauschke2000}
H.~H. Bauschke and A.~S. Lewis.
\newblock Dykstra's algorithm with {Bregman} projections: A convergence proof.
\newblock {\em Optimization}, 48(4):409--427, 2000.

\bibitem{Benamou2015}
J.-D. Benamou, G.~Carlier, M.~Cuturi, L.~Nenna, and G.~Peyr{\'{e}}.
\newblock Iterative {Bregman} projections for regularized transportation
  problems.
\newblock {\em SIAM Journal on Scientific Computing}, 37(2):A1111--A1138, 2015.

\bibitem{Bernton2017}
E.~Bernton, P.~E. Jacob, M.~Gerber, and C.~P. Robert.
\newblock Inference in generative models using the {Wasserstein} distance.
\newblock Technical report, arXiv:1701.05146, 2017.

\bibitem{Bigot2013}
J.~Bigot, R.~{Gouet}, T.~Klein, and A.~L{\'{o}}pez.
\newblock Geodesic {PCA} in the {Wasserstein} space.
\newblock Technical report, arXiv:1307.7721, 2013.

\bibitem{Blondel2017}
M.~Blondel, V.~Seguy, and A.~Rolet.
\newblock Smooth and sparse optimal transport.
\newblock Technical report, arXiv:1710.06276, 2017.

\bibitem{Bonneel2011}
N.~Bonneel, M.~van~de Panne, S.~Paris, and W.~Heidrich.
\newblock Displacement interpolation using lagrangian mass transport.
\newblock {\em ACM Transactions on Graphics}, 30(6):158:1--158:12, 2011.

\bibitem{Bousquet2017}
O.~Bousquet, S.~Gelly, I.~Tolstikhin, C.-J. Simon-Gabriel, and
  B.~Sch{\"{o}}lkopf.
\newblock From optimal transport to generative modeling: the {VEGAN} cookbook.
\newblock Technical report, arXiv:1705.07642, 2017.

\bibitem{Cazelles2017}
E.~Cazelles, V.~Seguy, J.~Bigot, M.~Cuturi, and N.~Papadakis.
\newblock {Log-PCA} versus geodesic {PCA} of histograms in the {Wasserstein}
  space.
\newblock Technical report, arXiv:1708.08143, 2017.

\bibitem{Cominetti1994}
R.~Cominetti and J.~San~Mart{\'{i}}n.
\newblock Asymptotic analysis of the exponential penalty trajectory in linear
  programming.
\newblock {\em Mathematical Programming}, 67(1--3):169--187, 1994.

\bibitem{Courty2015}
N.~Courty, R.~Flamary, D.~Tuia, and A.~Rakotomamonjy.
\newblock Optimal transport for domain adaptation.
\newblock {\em IEEE Transactions on Pattern Analysis and Machine Intelligence},
  39(9):1853--1865, 2015.

\bibitem{Cuturi2013}
M.~Cuturi.
\newblock Sinkhorn distances: Lightspeed computation of optimal transport.
\newblock In {\em International Conference on Neural Information Processing
  Systems (NIPS)}, pages 2292--2300, 2013.

\bibitem{Cuturi2014}
M.~Cuturi and D.~Avis.
\newblock Ground metric learning.
\newblock {\em Journal of Machine Learning Research}, 15(1):533--564, 2014.

\bibitem{Cuturi2016}
M.~Cuturi and G.~Peyr{\'{e}}.
\newblock A smoothed dual approach for variational {Wasserstein} problems.
\newblock {\em SIAM Journal on Imaging Sciences}, 9(1):320--343, 2016.

\bibitem{Dessein2017}
A.~Dessein, N.~Papadakis, and C.-A. Deledalle.
\newblock Parameter estimation in finite mixture models by regularized optimal
  transport: A unified framework for hard and soft clustering.
\newblock Technical report, arXiv:1711.04366, 2017.

\bibitem{Dhillon2007}
I.~S. Dhillon and J.~A. Tropp.
\newblock Matrix nearness problems with {Bregman} divergences.
\newblock {\em SIAM Journal on Matrix Analysis and Applications},
  29(4):1120--1146, 2007.

\bibitem{Ferradans2014}
S.~Ferradans, N.~Papadakis, G.~Peyr{\'{e}}, and J.-F. Aujol.
\newblock Regularized discrete optimal transport.
\newblock {\em SIAM Journal on Imaging Sciences}, 7(3):1853--1882, 2014.

\bibitem{Frogner2015}
C.~Frogner, C.~Zhang, H.~Mobahi, M.~Araya-Polo, and T.~Poggio.
\newblock Learning with a {Wasserstein} loss.
\newblock In {\em International Conference on Neural Information Processing
  Systems (NIPS)}, pages 2053--2061, 2015.

\bibitem{Galichon2015}
A.~Galichon and B.~Salani{\'{e}}.
\newblock Cupid's invisible hand: Social surplus and identification in matching
  models.
\newblock Technical report, SSRN:1804623, 2015.

\bibitem{Genevay2017}
A.~Genevay, G.~Peyr{\'{e}}, and M.~Cuturi.
\newblock {GAN} and {VAE} from an optimal transport point of view.
\newblock Technical report, arXiv:1706.01807, 2017.

\bibitem{Grauman2004}
K.~Grauman and T.~Darrell.
\newblock Fast contour matching using approximate earth mover's distance.
\newblock In {\em IEEE Computer Vision and Pattern Recognition (CVPR)}, pages
  220--227, 2004.

\bibitem{Gudmundsson2007}
J.~Gudmundsson, O.~Klein, C.~Knauer, and M.~Smid.
\newblock Small {Manhattan} networks and algorithmic applications for the earth
  mover's distance.
\newblock In {\em European Workshop on Computational Geometry (EuroCG)}, pages
  174--177, 2007.

\bibitem{Haasdonk2005}
B.~Haasdonk.
\newblock Feature space interpretation of {SVMs} with indefinite kernels.
\newblock {\em IEEE Transactions on Pattern Analysis and Machine Intelligence},
  27(4):482--492, 2005.

\bibitem{Idel2016}
M.~Idel.
\newblock A review of matrix scaling and {Sinkhorn's} normal form for matrices
  and positive maps.
\newblock Technical report, arXiv:1609.06349, 2016.

\bibitem{Indyk2003}
P.~Indyk and N.~Thaper.
\newblock Fast image retrieval via embeddings.
\newblock In {\em International Workshop on Statistical and Computational
  Theories of Vision (SCTV)}, 2003.

\bibitem{Kurras2015}
S.~Kurras.
\newblock Symmetric iterative proportional fitting.
\newblock In {\em International Conference on Artificial Intelligence and
  Statistics (AISTATS)}, pages 526--534, 2015.

\bibitem{Lin2003}
H.-T. Lin and C.-J. Lin.
\newblock A study on sigmoid kernels for {SVM} and the training of
  {non\nobreakdash-PSD} kernels by {SMO\nobreakdash-}type methods.
\newblock Technical report, National Taiwan University, 2003.

\bibitem{Ling2007}
H.~Ling and K.~Okada.
\newblock An efficient earth mover's distance algorithm for robust histogram
  comparison.
\newblock {\em IEEE Transactions on Pattern Analysis and Machine Intelligence},
  29(5):840--853, 2007.

\bibitem{Mesaros2016}
A.~Mesaros, T.~Heittola, and T.~Virtanen.
\newblock {TUT} database for acoustic scene classification and sound event
  detection.
\newblock In {\em European Signal Processing Conference (EUSIPCO)}, pages
  1128--1132, 2016.

\bibitem{Montavon2016}
G.~Montavon, K.-R. M{\"{u}}ller, and M.~Cuturi.
\newblock Wasserstein training of restricted {Boltzmann} machines.
\newblock In {\em International Conference on Neural Information Processing
  Systems (NIPS)}, pages 3718--3726, 2016.

\bibitem{Muzellec2016}
B.~Muzellec, R.~Nock, G.~Patrini, and F.~Nielsen.
\newblock Tsallis regularized optimal transport and ecological inference.
\newblock In {\em AAAI Conference on Artificial Intelligence (AAAI)}, pages
  2387--2393, 2018.

\bibitem{Naor2007}
A.~Naor and G.~Schechtman.
\newblock Planar earthmover is not in $l_1$.
\newblock {\em SIAM Journal on Computing}, 37(3):804--826, 2007.

\bibitem{Obermann2015}
A.~M. Oberman and Y.~Ruan.
\newblock An efficient linear programming method for optimal transportation.
\newblock Technical report, arXiv:1509.03668, 2015.

\bibitem{Pele2008}
O.~Pele and M.~Werman.
\newblock A linear time histogram metric for improved {SIFT} matching.
\newblock In {\em European Conference on Computer Vision (ECCV)}, pages
  495--508, 2008.

\bibitem{Pele2009}
O.~Pele and M.~Werman.
\newblock Fast and robust earth mover's distances.
\newblock In {\em IEEE International Conference on Computer Vision (ICCV)},
  pages 460--467, 2009.

\bibitem{Rabin2009}
J.~Rabin, J.~Delon, and Y.~Gousseau.
\newblock A statistical approach to the matching of local features.
\newblock {\em SIAM Journal on Imaging Sciences}, 2(3):931--958, 2009.

\bibitem{Rolet2016}
A.~Rolet, M.~Cuturi, and G.~Peyr{\'{e}}.
\newblock Fast dictionary learning with a smoothed {Wasserstein} loss.
\newblock In {\em International Conference on Artificial Intelligence and
  Statistics (AISTATS)}, pages 630--638, 2016.

\bibitem{Rubner2000}
Y.~Rubner, C.~Tomasi, and L.~J. Guibas.
\newblock The earth mover's distance as a metric for image retrieval.
\newblock {\em International Journal of Computer Vision}, 40(2):99--121, 2000.

\bibitem{Schmitz2018}
M.~A. Schmitz, M.~Heitz, N.~Bonneel, F.~Ngol{\`{e}}, D.~Coeurjolly, M.~Cuturi,
  P.~Gabriel, and J.-L. Starck.
\newblock Wasserstein dictionary learning: Optimal transport-based unsupervised
  non-linear dictionary learning.
\newblock {\em SIAM Journal on Imaging Sciences}, 11(1):643--678, 2018.

\bibitem{Schmitzer2016a}
B.~Schmitzer.
\newblock A sparse multi-scale algorithm for dense optimal transport.
\newblock {\em Journal of Mathematical Imaging and Vision}, 56(2):238--259,
  2016.

\bibitem{Schmitzer2016b}
B.~Schmitzer.
\newblock Stabilized sparse scaling algorithms for entropy regularized
  transport problems.
\newblock Technical report, arXiv:1610.06519, 2016.

\bibitem{Seguy2015}
V.~Seguy and M.~Cuturi.
\newblock Principal geodesic analysis for probability measures under the
  optimal transport metric.
\newblock In {\em International Conference on Neural Information Processing
  Systems (NIPS)}, pages 3312--3320, 2015.

\bibitem{Sha2007}
F.~Sha, Y.~Lin, L.~K. Saul, and D.~D. Lee.
\newblock Multiplicative updates for nonnegative quadratic programming.
\newblock {\em Neural Computation}, 19(8):2004--2031, 2007.

\bibitem{Shirdhonkar2008}
S.~Shirdhonkar and D.~W. Jacobs.
\newblock Approximate earth mover's distance in linear time.
\newblock In {\em IEEE Conference on Computer Vision and Pattern Recognition
  (CVPR)}, pages 1--8, 2008.

\bibitem{Sinkhorn1967}
R.~Sinkhorn and P.~Knopp.
\newblock Concerning nonnegative matrices and doubly stochastic matrices.
\newblock {\em Pacific Journal of Mathematics}, 21(2):343--348, 1967.

\bibitem{Solomon2015}
J.~Solomon, F.~de~Goes, G.~Peyr{\'{e}}, M.~Cuturi, A.~Butscher, A.~Nguyen,
  T.~Du, and L.~Guibas.
\newblock Convolutional {Wasserstein} distances: Efficient optimal
  transportation on geometric domains.
\newblock {\em ACM Transactions on Graphics}, 34(4):66:1--66:11, 2015.

\bibitem{Solomon2014}
J.~Solomon, R.~M. Rustamov, L.~Guibas, and A.~Butscher.
\newblock Wasserstein propagation for semi-supervised learning.
\newblock In {\em International Conference on Machine Learning (ICML)}, pages
  306--314, 2014.

\bibitem{Thibault2017}
A.~Thibault, L.~Chizat, C.~Dossal, and N.~Papadakis.
\newblock Overrelaxed sinkhorn-knopp algorithm for regularized optimal
  transport.
\newblock Technical report, arXiv:1711.01851, 2017.

\bibitem{Thorlund-Petersen2004}
L.~Thorlund-Petersen.
\newblock Global convergence of {Newton}'s method on an interval.
\newblock {\em Mathematical Methods of Operations Research}, 59(1):91--110,
  2004.

\bibitem{Tseng1993}
P.~Tseng.
\newblock Dual coordinate ascent methods for non-strictly convex minimization.
\newblock {\em Mathematical Programming}, 59(1--3):231--247, 1993.

\bibitem{Villani2009}
C.~Villani.
\newblock {\em Optimal Transport: Old and New}, volume 338 of {\em
  Comprehensive Studies in Mathematics}.
\newblock Springer, Berlin Heidelberg, Germany, 2009.

\bibitem{Zen2014}
G.~Zen, E.~Ricci, and N.~Sebe.
\newblock Simultaneous ground metric learning and matrix factorization with
  earth mover's distance.
\newblock In {\em International Conference on Pattern Recognition (ICPR)},
  pages 3690--3695, 2014.

\end{thebibliography}

\end{document}